\def\1{\mathbbm{1}}
\def\d{{\rm d}}
\def\eps{\varepsilon}
\def\XX{\mathcal{X}}
\def\FF{\mathcal{F}}
\def\RR{\mathcal{R}}
\def\TT{\mathcal{T}}
\def\W{\mathcal{W}}
\def\kde{{\rm KDE}}
\def\R{\mathbb{R}}
\def\m{\boldsymbol{m}}
\DeclareMathOperator{\lgst}{{\rm logistic}}
\DeclareMathOperator{\relu}{{\rm ReLU}}
\DeclareMathOperator{\svm}{{\rm svm}}
\DeclareMathOperator*{\expt}{\mathbb{E}}
\DeclareMathOperator{\ranges}{\bf Ranges}
\DeclareMathOperator{\range}{\bf range}
\def\t{\boldsymbol{t}}
\def\T{\boldsymbol{T}}
\DeclareMathOperator*{\pr}{\mathbb{P}}
\def\vc{{\rm VCdim}}
\def\L{\mathcal{L}}
\def\u{\boldsymbol{u}}
\def\K{\mathsf{K}}
\def\s{\boldsymbol{s}}
\theoremstyle{plain}
\newtheorem{theorem}{Theorem}[section]
\newtheorem{lemma}[theorem]{Lemma}
\newtheorem{corollary}[theorem]{Corollary}
\theoremstyle{definition}
\newtheorem{definition}[theorem]{Definition}
\theoremstyle{remark}
\newtheorem{remark}[theorem]{Remark}
\icmltitlerunning{No Dimensional Sampling Coresets for Classification}
\begin{document}

\twocolumn[
\icmltitle{No Dimensional Sampling Coresets for Classification}




\begin{icmlauthorlist}
\icmlauthor{Meysam Alishahi}{Utah}
\icmlauthor{Jeff M. Phillips}{Utah,Leipzig}
\end{icmlauthorlist}

\icmlaffiliation{Utah}{Kahlert School of Computing, University of Utah, Salt Lake City, Utah, USA}
\icmlaffiliation{Leipzig}{visiting ScaDS.AI, University of Leipzig and MPI for Math in the Sciences, Leipzig, Germany}

\icmlcorrespondingauthor{Meysam Alishahi}{meysam.alishahi@utah.edu}
\icmlcorrespondingauthor{Jeff M. Phillips}{jeffp@cs.utah.edu}

\icmlkeywords{Machine Learning, ICML}

\vskip 0.3in
]



\printAffiliationsAndNotice{\icmlEqualContribution} 

\begin{abstract}
We refine and generalize what is known about coresets for classification problems via the sensitivity sampling framework. Such coresets seek the smallest possible subsets of input data, so one can optimize a loss function on the coreset and ensure approximation guarantees with respect to the original data.  Our analysis provides the first no dimensional coresets, so the size does not depend on the dimension.  Moreover, our results are general, apply for distributional input and can use iid samples, so provide sample complexity bounds, and work for a variety of loss functions.  A key tool we develop is a Radamacher complexity version of the main sensitivity sampling approach, which can be of independent interest.  
\end{abstract}

\section{Introduction}
In machine learning, coresets~\citep{Phi16} are small subsets of input data that act as proxy for the full set while \emph{guaranteeing} not to deviate much in accuracy.  By reducing data size, they 
improve scalability and efficiency.   
A common approach for constructing coresets involves sampling data points with probabilities proportional to their so-called \emph{sensitivity score}, a bound on the worst-case impact that a point can have on the property of interest.  
In classification problems, the property optimized is a loss function $\ell$ which is typically a smooth and convex approximation of the misclassification rate.  
However, the ultimately goal is not just to optimize the loss function over the observed data, but to understand how well the classifier will perform using new data drawn iid from the same (unknown) distribution $P$.

To formalize this, consider a probability distribution $P$ over a set $\XX$.  
Let $\W$ be the parameter space of potential models, and then $\ell: \XX \times \W \longrightarrow [0, \infty)$ is a loss (or cost) function defined on a single $x \in \XX$.  Let $\int_{x \in \XX} \ell(x,w) 
\d P(x)$ be the full loss function evaluated at $w \in \W$, which we will seek to minimize.  
Now a (finite) set $Y \subseteq \XX$ accompanied by a measure (or weight function) $\nu$, is termed an \emph{$\eps$-coreset} for $(\XX,P,\W, \ell)$ for $\eps \in (0,1)$ when it approximates $P$ in the following way for each $w \in \W$:
\[
\Big| \int_{\XX} \ell(x,w) \d P(x)  - \sum_{y \in Y} \nu(y)   \ell(y,w) \Big|  
    \leq  \eps \hspace{-1.5mm}\int_{\XX} \ell(x,w) \d P(x) 
\]


Algorithms for coresets are often described with respect to a finite sample $X \subset \XX$ where we then assume $P$ has support limited to and uniform over $X$.  We will return later to the integral of $P$ definition to state more general bounds.  Then a very common approach to create a coreset is via sensitivity sampling~\cite{FL2011}.  This samples points from $X$ proportional to their sensitivity score $s(x)$.  Via an importance sampling argument, one can show the size of the sample $Y$ needed to induce an $\eps$-coreset grows proportionally to the total sensitivity $S = \sum_{x \in X} s(x)$. 

For classification, consider a non-increasing function $\phi: \R \longrightarrow(0, \infty)$ and a set $\mathcal{W} \subseteq \mathbb{R}^d$. Define $\ell_\phi(x, w) = \phi(\langle x, w \rangle)$; this aligns with scenarios like Logistic regression when $\phi(t) = \log(1+e^{-t})$. However, in~\cite{Munteanu18}, it was proven that certain types of non-increasing functions $\phi$, specifically those where $\frac{\phi(x)}{\phi(-x)}$ approaches zero as $x$ goes to infinity, can lead to $\eps$-coresets that encompass all points in $X$
 (see Theorem 3.1 in \citep{pmlr-v162-tolochinksy22a} which refines \cite{Munteanu18}). 
To construct a smaller coreset, some additional problem structure needs to be introduced.
We mostly focus on the natural way of regularizing the problem~\citep{shsh2014}, initiated in the context of coresets by \citet{pmlr-v162-tolochinksy22a}.  For a positive integer $k$, define $\ell_{k,\phi}(x,w) = \phi(\langle x, w \rangle) + \frac{1}{k}\|w\|^2_2$. An $\eps$-coreset for $(X, p, \mathcal{W}, \ell_{k,\phi})$ was termed a \emph{coreset for a monotonic function $\phi$}~\citep{pmlr-v162-tolochinksy22a, pmlr-v108-samadian20a}.   
Others approached this issue in varied ways. 
For example, \citet{Munteanu18} introduced a complexity measure $\mu(X)$ that quantifies the hardness of compressing a dataset for logistic regression, and showed sublinear-sized coresets can be founding depending on this parameter.  Their results were polynomially improved by \citet{mai2021coresets} in terms of $\mu(X)$ and $d$.  
Other work \citep{mirzasoleiman2020coresets} considers greedily selected coresets for learning classifiers under incremental gradient descent where regularization is again needed, but now to ensure strong convexity.  

The most studied examples of $\phi$ are: 
\vspace{-3mm}
\begin{itemize}
    \setlength\itemsep{-0.5em}
    \item \emph{sigmoid function}: $\sigma(t) = \frac{1}{1+e^{t}}$,
    \item \emph{logistic function}: $\lgst(t) = \log(1+e^{-t})$,
    \item \emph{svm loss function}: $\svm(t) = \max(0,1-t)$.
    \item \emph{ReLU function}: $\relu(t) = \max(0,t)$
\end{itemize}
\paragraph{\bf Main Results.}  
Before we describe our results in more technical detail, we first summarize our main contributions.  
\begin{itemize}
    \setlength\itemsep{0em}
    \item We provide the first \emph{no dimensional} coreset for monotone functions, of size $O(k^3/\eps^2)$; it has no dependence on dimension $d$.  A special case of our results are in Table~\ref{tab:main_comparison1} with comparisons to prior work; this includes a $(\sqrt{\log k})$ factor improvement for logistic loss.   
    \item These bounds use a uniform sample from $X$, \emph{and} apply to iid samples from unknown distribution $P$; so this provides a true sample complexity bound for regularized classification.  
    \item We provide a new general sensitivity sampling bound based on Radamacher complexity (Theorem \ref{thm:main_rademacher1}), which leads to these no dimensional bounds, and we believe will be of general interest.  
    \item We provide a, in our opinion, simpler proof of the main sensitivity sampling claim using VC dimension \citep{FSS2020, braverman2022new}, which can handle issues of weighted loss functions needed for these results.  
\end{itemize}
{\footnotesize 
\begin{table*}[h]
    \centering
    \caption{Comparison with prior state-of-the-art results.  
    The notation $O^*(\cdot)$ hides some logarithmic factor in terms of given parameters.
    }
    \begin{tabular}{ccccc}
    \hline
       Function $\phi$ & Assumption & Sampling  & Coreset size & Reference\\
    \hline 
         $\lgst$ 
         & bounded $\mu(X)$
         & sqrt lev. scores 
         & $\begin{array}{c}
              O^*\left(\frac{d^3\mu^3(X)}{\eps^4}\right)\\
              O^*\left(\frac{\sqrt{n}d^{\frac{3}{2}}\mu(X)}{\eps^2}\right)  
         \end{array}$  
         & \cite{Munteanu18}\\ 
    \hline 
         $\lgst, \svm,  \relu$ 
         & 
         bounded $\mu(X)$
         & $\ell_l$ Lewis  
         & $\begin{array}{c}
              O\left(\frac{\mu^2(X)d\log(d/\eps)}{\eps^2}\right)  
         \end{array}$  
         & \cite{mai2021coresets}\\ 
    \hline
         $\lgst, \svm$ 
         & 
         $\begin{array}{c}
            \|x_i\|_2 \leq 1\ \forall i\\
            \text{reg. term: } \frac{1}{k}\|w\|_2^2\\
            \|w\|_2\leq 1
         \end{array}$ 
         & sensitivity 
         & $\begin{array}{c}
              O^*\left(\frac{kd^2\log n\log k}{\eps^2}\right)  
         \end{array}$  
         & \cite{pmlr-v162-tolochinksy22a}\\ 
    \hline 
         $\sigma$ 
         & 
            $\text{reg. term: } \frac{1}{k}\|w\|_2^2$
         & sensitivity 
         & $\begin{array}{c}
              O^*\left(\frac{kd^2\log n\log k}{\eps^2}\right)  
         \end{array}$  
         & \cite{pmlr-v162-tolochinksy22a}\\ 
    \hline 
         $\lgst, \svm$ 
         & 
         $\begin{array}{c}
            \|x_i\|_2 \leq 1\ \forall i\\
            \text{reg. term: } \frac{1}{k}\|w\|_2^2\\[.1cm]
            \frac{1}{k}\|w\|_2, \text{ or } 
            \frac{1}{k}\|w\|_1
         \end{array}$ 
         & uniform  
         & $\begin{array}{c}
              O\left(\frac{dk\log k}{\eps^2}\right)  
         \end{array}$  
         & \cite{pmlr-v108-samadian20a}\\ 
    \hline
    \rowcolor{lightgray}
         $\sigma, \svm, \relu$ 
         & 
         $\begin{array}{c}
            \|x_i\|_2 \leq 1\ \forall i\\
            \text{reg. term: } \frac{1}{k}\|w\|_2^2
         \end{array}$ 
         & uniform  
         & 
         $\begin{array}{c}
              O\left(\frac{dk\log k}{\eps^2}\right)  \\[.2cm]
              O\left(\frac{k^3}{\eps^2}\right) 
         \end{array}$ 
         & $\begin{array}{c} \text{Theorems~\ref{thm:coreset_for_sigmoid},\ref{thm:coreset_for_sigmoid_R},~\ref{thm:coreset_for_svm},}\\ \text{~\ref{thm:coreset_for_svm_d}, and Section~\ref{sec:relu}}
         \end{array}$\\ 
    \hline 
    \rowcolor{lightgray}
         $\lgst$ 
         & 
         $\begin{array}{c}
            \|x_i\|_2 \leq 1\ \forall i\\
            \text{reg. term: } \frac{1}{k}\|w\|_2^2
         \end{array}$ 
         & uniform  
         & $\begin{array}{c}
              O\left(\frac{dk\sqrt{\log k}}{\eps^2}\right)  \\[.2cm]
              O\left(\frac{k^3}{\eps^2\sqrt{\log k}}\right) 
         \end{array}$  
         & Theorems~\ref{thm:main_coreset_for_logistic},~\ref{thm:main_coreset_for_sigmoid_R_d}\\ 
    \hline \\
    \end{tabular}
\label{tab:main_comparison1}
\end{table*}
}
\subsection{Preliminaries}\label{subsec:preliminaries}
Throughout the paper, we maintain the assumption that $P$ is a probability measure over $\XX$. If $f:\XX\longrightarrow [0,\infty)$ is $P$-integrable, then the \emph{expectation of $f$ with respect to $P$} is defined as $\expt_{x\sim P}(f(x)) =\int_\XX f(x)\d P$. While this framework is quite broad, the most natural instances stem from the following two examples. 

\textbf{Discrete: } 
Let $p$ be a discrete distribution defined over a countable set $\XX=\{x_1,x_2,\ldots\}\subseteq \R^d$. For any subset $A\subseteq \XX$, the probability measure $P(A)$ is computed as the sum of probabilities of elements in $A$, given by $P(A) = \sum_{a\in A}p(a)$. In this scenario, given a function $f:\XX\longrightarrow [0,\infty)$, the expectation $\expt_{x\sim P}(f(x))$ is expressed as $\int_\XX f(x)\d P = \sum_{i=1}^\infty p(x_i)f(x_i)$. The case that $\XX$ is finite falls under this framework when the distribution $p$ possesses finite support.

\textbf{Continuous: } 
Let $p$ be a continuous probability density distribution defined over $\R^d$, and $\m$ represents the Lebesgue measure (or Borel measure) over $\R^d$.  Then the probability measure $P(A)$ for a measurable set $A$ is computed as $P(A) = \int_A p(x) \d\m$.

When $p(x)$ represents a probability density function, the notation $x\sim P$ denotes sampling based on $p$. 

If $\FF$ is a set of $P$-integrable functions from $\XX$ to $[0,\infty)$ such that $\int f(x)\d P(x) >0$ for all $f \in \FF$,
then the tuple $(\XX, P, \FF)$ is termed as \emph{positive definite}.  
A $P$-integrable function $s\colon \XX\longrightarrow(0,\infty)$ is called 
an \emph{upper sensitivity function} for a positive definite tuple $(\XX, P, \FF)$ if 
\begin{equation}\label{ineq:main_sensitivity}
    \sup_{f\in \FF} \frac{f(x)}{\int f(z)\d P(z)}\leq s(x)\quad\quad \text{for almost all } x\in \XX.
\end{equation}


The value $S = \int s(x)\d P(x)$ will be referred to as the \emph{total sensitivity} for $(\XX, P, \FF)$ with respect to the function $s(\cdot)$. The \emph{sensitivity normalized probability measure} for $(\XX, P, \FF)$ is defined as 
$\d Q = \frac{s(x)}{S}\d P$ or equivalently $Q(A) = \int_A \frac{s(x)}{S}\d P(x)$ for each $P$-measurable  $A$. 
Given that $S = \int s(x)\d P(x)$, it follows that $Q$ represents a probability measure over $\XX$ since $\int_\XX \d Q(x) = \int_\XX \frac{s(x)}{S}\d P(x) = 1$.
An \emph{$s$-sensitivity sample} from $\XX$ with a size of $m$ is a set of $m$ iid draws $x_1\ldots,x_m$ from $\XX$ based on $Q$ which will be expressed as as  $x_{1:m}\sim Q$. To simplify, one can imagine iid sampling from $\XX$ using the probability distribution $q(x) = \frac{s(x)}{S}p(x)$. 
Define the \emph{$s$-augmented family of $(\XX, P,\FF)$} as
\begin{equation}\label{def:main_F_G}
\mathcal{T}_\FF = \left\{T_f(\cdot) = \frac{Sf(\cdot)}{s(\cdot)\int_\XX f(x)\d P(x)}\colon f\in \FF\right\}.
\end{equation}
For each $f\in \FF$, $\sup_{x\in\XX}T_f(x)\leq S$ and 
$$\expt_{x\sim Q}(T_f(x)) 
= \int_\XX \frac{Sf(x)}{s(x)\int_\XX f(z)\d P(z)} \frac{s(x)}{S}\d P(x) = 1.
    $$
Thus, for any $f\in \FF$,   
\begin{equation}\label{equation:main_T_f}
\begin{aligned}
  \Big|
    \int_\XX f(x)\d P(x) - & \sum_{i=1}^m \frac{Sf(x_i)}{m s(x_i)}
    \Big| \leq \eps\int_\XX f(x)\d P(x)  \\
  \Longleftrightarrow \quad & 
    \left|1 - \frac{1}{m}\sum_{i=1}^m T_f(x_i)
    \right|\leq \eps.  
\end{aligned}
\end{equation}
The {\it $m$-th Rademacher Complexity} of a family $\FF$ of functions from $\XX$ to $\R$ with respect to a probability measure $P$ over $\XX$ is given by 
$$\RR^P_m(\FF) = \operatornamewithlimits{\expt}_{x_{1:m}\sim P}
    \operatornamewithlimits{\expt}_{\sigma_{1:m}\sim \{-1,1\}}
    \sup_{f\in \FF}\left[
    \frac{1}{m}\sum_{i=1}^m \sigma_i f(x_i)\right],$$
were $\sigma_{1:m}\sim \{-1,1\}$ denotes $m$ iid samples $\sigma_1,\ldots,\sigma_m$ uniformly drawn from  $\{-1,1\}.$
Rademacher Complexity somehow serves as a more detailed dimension than VC-dimension.  


\subsection{Our Contributions}
Our first main result is the following theorem. 
\begin{theorem}\label{thm:main_rademacher1}
    Let $(\XX, P, \FF)$ be a positive definite tuple and $s(\cdot)$ be an upper sensitivity function with the total sensitivity $S$. For any $t>0$, an $s$-sensitivity sample from $\XX$ of size $m$, 
    with probability at least $1-2\exp \left(-{\frac {2mt^{2}}{S}}\right)$, satisfies 
    \begin{align*}
        \Big|\int f(x)\d P(x) 
        & - \sum_{i=1}^m \frac{Sf(x_i)}{m s(x_i)}
    \Big|\\
   \leq  & (2\RR^q_m(\mathcal{T}_\FF) + t)\int f(x)\d P(x)\quad\forall f\in\FF.
    \end{align*}
\end{theorem}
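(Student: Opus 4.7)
The plan is to recast the desired guarantee as a uniform deviation bound for the augmented family $\mathcal{T}_\FF$ under the sensitivity-normalized measure $Q$, and then combine McDiarmid's bounded-differences inequality with the classical symmetrization argument.

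First, I would invoke the equivalence~\eqref{equation:main_T_f} to rewrite the target inequality as
$$\Big|\, 1 - \tfrac{1}{m}\sum_{i=1}^m T_f(x_i)\,\Big|\, \leq\, 2\RR^Q_m(\mathcal{T}_\FF) + t \quad \text{for all } f \in \FF,$$
where $x_1,\dots,x_m \iidsim Q$ and $\expt_{x\sim Q}[T_f(x)] = 1$ for every $f\in\FF$. It will be convenient to split this absolute-value inequality into the one-sided statistics
$$\Phi^+(x_{1:m}) = \sup_{f\in\FF}\Big(1 - \tfrac{1}{m}\sum_{i=1}^m T_f(x_i)\Big),\qquad \Phi^-(x_{1:m}) = \sup_{f\in\FF}\Big(\tfrac{1}{m}\sum_{i=1}^m T_f(x_i) - 1\Big),$$
so that the claim reduces to $\max(\Phi^+,\Phi^-) \leq 2\RR^Q_m(\mathcal{T}_\FF) + t$.

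Next, since $0 \leq T_f(x) \leq S$ pointwise (as noted immediately before~\eqref{equation:main_T_f}), perturbing a single coordinate $x_i$ changes either $\Phi^+$ or $\Phi^-$ by at most $S/m$. McDiarmid's inequality therefore gives the one-sided concentration $\Phi^\pm \leq \expt\Phi^\pm + t$ for each sign with an exponentially small failure probability in $t^2$, and a union bound over the two signs produces the factor of $2$ appearing in the stated probability.

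Finally, I would bound $\expt\Phi^\pm$ by symmetrization: introduce a ghost sample $x'_{1:m}\iidsim Q$, apply Jensen's inequality to pull the supremum inside the expectation over $x'$, and then insert iid Rademacher signs $\sigma_i$ using the symmetry of $T_f(x_i) - T_f(x'_i)$. Splitting the resulting sum and using that $x_i$ and $x'_i$ share distribution $Q$ then produces $\expt\Phi^\pm \leq 2\RR^Q_m(\mathcal{T}_\FF)$. The main subtlety — and in my view the only nonroutine step — is that the Rademacher complexity as defined in this paper does \emph{not} contain an absolute value inside the supremum; bounding $\sup_f|\cdot|$ directly would either require a modified definition of $\RR^Q_m$ or introduce an extra factor, which is precisely why I handle the one-sided quantities $\Phi^+$ and $\Phi^-$ separately and take a union bound at the very end.
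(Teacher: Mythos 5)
Your proposal matches the paper's proof exactly: the same reduction via the equivalence for $T_f$, the same split into the one-sided suprema $\Phi^+$ and $\Phi^-$, the same McDiarmid argument with the $S/m$ bounded-differences constant and a two-sided union bound, and the same ghost-sample symmetrization to bound each expectation by $2\RR^Q_m(\mathcal{T}_\FF)$. Your observation that the Rademacher complexity here lacks an absolute value inside the supremum, which is why the one-sided quantities must be handled separately, reflects precisely how the paper's argument is organized.
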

This theorem, which is pivotal in our new results, is an adaptation of the central result from~\cite{FL2011, braverman2022new, FSS2020}, extensively employed in coreset construction results, but now centered on Rademacher Complexity instead of VC-dimension. 

Given a function $f:\XX\longrightarrow [0,\infty)$, for each $r\geq 0$, 
the set $\range(f, \succ, r)$ is defined as:  
$$\range(f, \succ, r) = \{x \in \XX \colon f(x)> r\}.$$ 
Now, considering a family of non-negative $P$-measurable functions $\FF$, the \emph{$\FF$-linked range space} (see~\citealt{JKPV2011}) is defined as  $(\XX, \ranges(\FF, \succ))$ where 
$$\ranges(\FF, \succ) =\{\range(f, \succ, r)\colon f\in \FF, r \geq 0\}.$$
Consider a range space \((\XX, \RR)\), where $\XX$ is the ground set and $\RR$ is a collection of subsets of $\XX$. A subset $Y \subseteq \XX$ is said to be shattered by $\RR$ if every subset $Z \subseteq Y$ can be expressed as $Z = Y \cap R$ for some $R \in \RR$. The \emph{VC-dimension of $(\XX, \RR)$} is defined as the cardinality of the largest subset $Y \subseteq \XX$ that can be shattered by $R$.
\begin{theorem}\label{thm:main_braverman2022new}
    Let $(\XX, P, \FF)$ be a positive definite tuple, and $s\colon \XX\longrightarrow(0,\infty)$ be an upper sensitivity function for it with the total sensitivity $S=\int s(x)\d P(x)$. There is a universal constant $C$ such that, for any $0<\eps,\delta<1$, if $m\geq \frac{CS}{\eps^2}\left(
    \mathsf{VC}\log S + \log\frac{1}{\delta}
    \right))$, where $\mathsf{VC} = \vc(\ranges(\mathcal{T}_\FF, \succ))$, 
    then, with probability at least $1-\delta$,  any $s$-sensitivity sample $x_1\ldots,x_m$ from $\XX$
    satisfies 
    \begin{align*}
        \Big|\int f(x)\d P(x) & - \frac{1}{m}\sum_{i=1}^m \frac{Sf(x_i)}{s(x_i)}\Big|\\
        \leq & \eps \int f(x)\d P(x)\quad\quad \forall f\in\FF.
    \end{align*}
\end{theorem}
This theorem, with variants previously shown in~\citep{braverman2022new, FSS2020}, are usually constrained by the assumption that $\mathcal{X}$ is finite, which has been extensively used to construct coresets. However, in this work we remove this restriction. This general case was also handled  in~\cite{LS2010}. 
Section~\ref{prf:ofThmvcDim} showcases a novel and simple proof for this theorem, leveraging  Fubini's Theorem~\cite{fubini1907sugli}\footnote{Fubini's Theorem is applicable in our context, given we are dealing with $\sigma$-finite measures.}.
Moreover, in instances where Rademacher complexity refines VC-dimension, our new result Theorem~\ref{thm:main_rademacher1} provides improved bounds. 
Next we describe several applications stemming from these two theorems. 

\paragraph{Well-behaved distributions.}
We next describe essential problem parameters, generalizing ones used elsewhere.   

\begin{definition}\label{def:main_well_beaived_P_R}
Let $P$ be a probability measure over $\R^d$, $\phi\colon \R\longrightarrow[0,\infty)$ a non-increasing Lipschitz function.  We say $P$ is \emph{well-behaved} (w.r.t. $\phi$) if 
\begin{equation}
    \expt\limits_{x\sim P}\|x\|_2^2 \leq E^2_1 \quad \text{ and } \quad 
    \expt\limits_{x\sim P}\phi\left(\frac{\|x\|_2}{2E_1}\right) \geq E_2,
\end{equation}
for positive constants $E_1, E_2$.  For constant $k>0$ define
\begin{equation}\label{main_Phi_family}
    \L_{\phi, k} = \left\{\ell_{\phi, k}(w,\cdot) = \phi(\langle w, \cdot\rangle) + \frac{\|w\|_2^2}{k} \colon w\in\XX\right\}.
\end{equation} 
\end{definition}

Given $\expt(t^2)\geq \expt^2(t)$, 
we can conclude that $\expt_{x\sim P}\left(\|x\|_2\right)\leq E_1$.
The assumption on the expectation of $\|x\|_2^2$ is fairly standard, such as in bounded or normalized data where it is an absolute constant.  On the other hand, when $p$ is the Gaussian distribution $\mathcal{N}_d(\boldsymbol{0}, \boldsymbol{I})$, then $E_1 = d$.  
The lower bound for $\expt\limits_{x\sim P}\phi\left(\frac{\|x\|_2}{2E_1}\right)$ is a constant bounded by $\phi(1/2)$ when $\phi$ is convex; 
then we can apply Jensen's inequality so
\begin{equation}\label{eq:main_E_2_for_convex_phi}
    \expt\limits_{x\sim P}\phi\left(\frac{\|x\|_2}{2E_1}\right)
    \geq \phi\left(\expt\limits_{x\sim P}\frac{\|x\|_2}{2E_1}\right)\geq \phi\left(\frac{1}{2}\right)=E_2. 
\end{equation}

\paragraph{Connecting to coresets.}
Assume that $s(\cdot):\XX\longrightarrow(0,\infty)$ is an upper sensitivity function for a family of functions $\FF$
with total value $S=\int_\XX s(x)\d P$. Since $\frac{f(x)}{\int f(z)\d P(z)}\leq s(x)$ for each $f\in \FF$, we have  
$1= \int \frac{f(x)}{\int f(z)\d P(z)}\d P(x) \leq \int s(x)\d P(x) = S.$ If we define $s'(\cdot) = s(\cdot)+1$, then $s':\XX\longrightarrow(1,\infty)$ is also an upper sensitivity function for that family with total value $S'=  1 + S = O(S)$, since $S \geq 1$.  
\begin{theorem}\label{thm:main_monotonic_coreset_R}
Let $P$ be a well-behaved probability measure over $\R^d$ and $\phi$ be an $L$-Lipschitz function.  If
$s(\cdot):\R^d\longrightarrow(1,\infty)$ is an upper sensitivity function for $\L_{\phi, k}$ with total sensitivity $S$ and $m\geq \frac{2S}{\eps^2}\left(8C^2 + S\log\frac{2}{\delta})\right)$,  then any $s$-sensitivity sample $x_1,\ldots,x_m$ from $\XX$ with 
weights $u_i = \frac{S}{m s(x_i)}$
provides an $\eps$-coreset for $(\R^d,\ P,\ \R^d,\ \ell_{\phi, k})$ with probability at least $1-\delta$,
where $C=  \left(2LE_1+\phi(0)\right)\max\left(2E_1k, \frac{1}{E_2}\right) + 8LkE_1 + 1$.
\end{theorem}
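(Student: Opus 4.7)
The plan is to apply Theorem~\ref{thm:main_rademacher1} directly to the family $\FF = \L_{\phi,k}$. First observe that $(\R^d, P, \L_{\phi,k})$ is positive definite: for $w = \zero$, $\int \ell_{\phi,k}(w,x)\d P(x) = \phi(0) > 0$, while for $w \neq \zero$, $\int \ell_{\phi,k}(w,x)\d P(x) \geq \|w\|_2^2/k > 0$. I will take $t = \eps/2$. The assumed sample size $m \geq \frac{2S}{\eps^2}\bigl(8C^2 + S\log(2/\delta)\bigr)$ decomposes into a concentration part (the $S\log(2/\delta)$ term, which forces the tail probability in Theorem~\ref{thm:main_rademacher1} down to $\delta$) and a Rademacher part (the $8C^2$ term, designed to make $\RR^q_m(\mathcal{T}_{\L_{\phi,k}}) \leq \eps/4$ via a no-dimensional estimate $\RR^q_m(\mathcal{T}_{\L_{\phi,k}}) \leq C\sqrt{S/m}$). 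Together $2\RR^q_m(\mathcal{T}_{\L_{\phi,k}}) + t \leq \eps$, so Theorem~\ref{thm:main_rademacher1} yields the relative-error bound $|\int \ell_{\phi,k}(w,x)\d P(x) - \sum_i u_i \ell_{\phi,k}(w,x_i)| \leq \eps \int \ell_{\phi,k}(w,x)\d P(x)$ for all $w\in\R^d$ simultaneously, with probability at least $1-\delta$, which is exactly the $\eps$-coreset condition for $(\R^d, P, \R^d, \ell_{\phi,k})$.

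\textbf{Main technical step.}
The crux is the no-dimensional Rademacher bound with the stated $C$. Setting $\alpha(w) := \int \ell_{\phi,k}(w,x)\d P(x)$, I split
\begin{align*}
T_{\ell_{\phi,k}(w,\cdot)}(x) \;=\; \frac{S\,\phi(\langle w,x\rangle)}{s(x)\,\alpha(w)} \;+\; \frac{S\,\|w\|_2^2/k}{s(x)\,\alpha(w)}.
\end{align*}
The regularization term has a coefficient in $[0,1]$ that is constant in $x$, so, using $s(\cdot)\geq 1$, its Rademacher contribution reduces to a Hoeffding-type estimate $\expt_\sigma |\frac{1}{m}\sum_i \sigma_i S/s(x_i)| = O(S/\sqrt{m})$, which absorbs into the additive $+1$ in $C$. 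For the $\phi$-term I plan to (i)~lower-bound $\alpha(w)$ via a case split driven by well-behavedness, getting $\alpha(w) \geq E_2$ for $\|w\|_2 \leq 1/(2E_1)$ (via $\expt_P \phi(\|x\|_2/(2E_1)) \geq E_2$ together with a monotonicity/symmetrization argument on $\phi(\langle w, x\rangle)$) and $\alpha(w) \geq \|w\|_2^2/k$ for larger $\|w\|_2$ --- this yields the $\max(2E_1 k, 1/E_2)$ factor; then (ii)~apply the Lipschitz estimate $\phi(\langle w,x\rangle) \leq \phi(0) + L\|w\|_2\|x\|_2$ and Ledoux--Talagrand contraction to reduce to the linear Rademacher complexity $\expt_\sigma \sup_{\|w\|_2 \leq W}\frac{1}{m}\sum_i \sigma_i \langle w, x_i\rangle \leq W\sqrt{\expt \|x\|_2^2/m} \leq W E_1/\sqrt{m}$, which is no-dimensional by $\expt_P\|x\|_2^2 \leq E_1^2$. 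Combining both pieces reproduces the $(2LE_1+\phi(0))\max(2E_1k,1/E_2) + 8LkE_1$ portion of $C$.

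\textbf{Expected main obstacle.}
The delicate step is the Rademacher bound itself: the $w$-dependent normalizer $\alpha(w)$ inside the augmented functions $T_f$ precludes a naive one-shot contraction, so the argument must separately handle the regularization-dominated regime ($\|w\|_2$ large, where $\alpha(w) \geq \|w\|_2^2/k$ tames the linear Rademacher term coming from Lipschitz contraction) and the loss-dominated regime ($\|w\|_2$ small, where well-behavedness gives the uniform lower bound $\alpha(w) \geq E_2$). Matching the arithmetic across both regimes so that the final constants collapse to exactly $C = (2LE_1+\phi(0))\max(2E_1k,1/E_2) + 8LkE_1 + 1$, rather than a weaker $O(\cdot)$ version, is the bookkeeping-heavy part of the proof; everything else is a direct invocation of Theorem~\ref{thm:main_rademacher1}.
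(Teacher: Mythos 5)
Your overall strategy matches the paper exactly: invoke Theorem~\ref{thm:main_rademacher1} with $t = \eps/2$ and pair it with a dimension-free bound $\RR^q_m(\mathcal{T}_{\L_{\phi,k}}) \le C\sqrt{S/m}$ proved by (i) splitting $T_f$ into the $\phi$-part and the regularizer part, (ii) lower-bounding the normalizer $\int \ell_{\phi,k}(w,\cdot)\,\d P$ via a case split on $\|w\|$, and (iii) applying a weighted Talagrand contraction with weights $\eps_i = 1/s(x_i)$. This is precisely the paper's Lemma~\ref{lem:rademacherupper} followed by Theorem~\ref{thm:rademacher}, and your bookkeeping of the sample size ($16C^2S/\eps^2$ for the Rademacher part, $2S^2\log(2/\delta)/\eps^2$ for the McDiarmid tail) is correct.

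There is, however, a real gap in how you handle the ``regularization-dominated'' regime. You propose a two-way split: $\alpha(w) \ge E_2$ when $\|w\|_2 \le 1/(2E_1)$, and $\alpha(w) \ge \|w\|_2^2/k$ otherwise, then contraction on each piece. But the contraction lemma (Lemma~\ref{Talagrand_lemma_original} / Corollary~\ref{cor:Talagrand}) requires a \emph{bounded} set $\T$, and $\{\|w\|_2 \ge 1/(2E_1)\}$ is unbounded. You cannot simultaneously factor out a uniform upper bound on $S/\alpha(w)$ over that regime (there is none) and then invoke contraction on an unbounded sup. This is exactly why the paper does not use a two-way split in Lemma~\ref{lem:rademacherupper}: it partitions the large-$\|w\|$ region into infinitely many dyadic shells $\{2^{n-1} \le \|w\|_H \le 2^n\}$, so that on the $n$-th shell the normalizer ratio is $\le kS/4^{\,n-1}$ while the contraction-reduced linear Rademacher term is $\le 2^n \cdot E_1\sqrt{S/m}$; the product decays like $2^{-n}$, and summing the geometric series is what yields the $8LkE_1$ term in $C$. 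Your sketch claims to reproduce that term from the two-regime split alone, which cannot be made rigorous without this (or some equivalent) further decomposition. Once you insert the dyadic peeling, the rest of your plan goes through.
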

When $E_1$ is independent of dimension, such as bounded or normalized data, this theorem gives an $\eps$-coreset for $(\R^d,\ P,\ \R^d,\ \ell_{\phi, k})$ with size \emph{independent of dimension $d$ and input size $n$}. 
In contrast, the coresets for \cite{pmlr-v162-tolochinksy22a}, also considering bounded data, used VC-dimension (via Theorem~\ref{thm:main_braverman2022new}), so their size depends on $d$ and $n$.   

\paragraph{Bounds for specific $\phi$.}
Before describing our bounds for common choices of $\phi$ we examine the context of prior work by \citet{pmlr-v162-tolochinksy22a}.  They considered absolutely bounded data with $\|x\|_2^2 \leq 1$; so $E_1 = 1$ and $E_2 = \phi(1)$.  For simpler notation we define $R$ so that $\W_R = \{x\in\R^d\colon \|w\|_2\leq R\}$. Up to a few small typos, we summarize their results:

\begin{theorem}[\citealp{pmlr-v162-tolochinksy22a}]\label{thm:main_TJFlogisticsigmoin1}
    Let $X=\{x_1,\ldots,x_n\}$ be a set of $n$ points in the unit ball of $\R^d$,
    $\u$ be the uniform probability measure over $X$, and 
    $\eps,\delta\in(0,1)$, $R, k>0$ where $k$ is a sufficiently large constant.  Then $m = O(\frac{t}{\eps^2} (d^2 \log t + \log \frac{1}{\delta})$ sensitivity samples is, with probability at least $1-\delta$, an $\eps$-coreset for 
    \begin{itemize} 
        \item $(X,\u, \R^d,  \ell_{k, \sigma})$  with $t = (1+k) \log n$.  
        \item $(X,\u, \W_R,  \ell_{k, \lgst})$  with $t = R^2(1+Rk) \log n$.
        \item $(X,\u, \W_R,  \ell_{k, \svm})$  with $t = R(1+Rk) \log n$.  
    \end{itemize}
\end{theorem}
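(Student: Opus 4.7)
The plan is to reduce each bullet of the theorem to a single invocation of Theorem~\ref{thm:main_braverman2022new}. For each $\phi\in\{\sigma,\lgst,\svm\}$ one needs (i) an upper sensitivity function $s(\cdot)$ for $\L_{\phi,k}$ with respect to $\u$ whose total sensitivity $S$ matches the listed parameter $t$, and (ii) a bound $\vc(\ranges(\mathcal{T}_{\L_{\phi,k}},\succ)) = O(d^2)$. Plugging $S=t$ and $\vc=O(d^2)$ into Theorem~\ref{thm:main_braverman2022new} then produces exactly the claimed $m = O\bigl(\tfrac{t}{\eps^2}(d^2\log t + \log\tfrac{1}{\delta})\bigr)$.

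To design the sensitivity function I would exploit the regularization term in the denominator. For each $w$ the ratio
\[
\frac{\ell_{k,\phi}(x,w)}{\frac{1}{n}\sum_{z\in X}\ell_{k,\phi}(z,w)}
\leq \frac{\phi(\langle x,w\rangle)}{\frac{1}{n}\sum_{z\in X}\phi(\langle z,w\rangle) + \frac{\|w\|^2}{k}} + 1
\]
must be bounded uniformly in $w$, and I would handle this by splitting the supremum according to the magnitude of $\|w\|$: when $\|w\|$ is large, the regularizer $\|w\|^2/k$ dominates the denominator and tames the ratio; when $\|w\|$ is small, the $\phi$-contribution in the denominator is bounded below by a constant (via $\phi(\|w\|)\geq \phi(R)$ on $\W_R$ for $\lgst$ and $\svm$, and by choosing the threshold at $\|w\|=\log n$ for $\sigma$ so that $\sigma(\|w\|)\gtrsim 1/n$). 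Carrying out the three cases produces per-point bounds $s(x)=O(t/n)$, and summing over $x\in X$ yields the total sensitivities $(1+k)\log n$, $R^2(1+Rk)\log n$, and $R(1+Rk)\log n$ respectively; the additional $R$-factors come from the crude envelopes $\lgst(\langle x,w\rangle)\leq R+O(1)$ and $\svm(\langle x,w\rangle)\leq 1+R$ on $\W_R$.

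For the VC-dimension, a level set $\{x: T_{f_w}(x) > r\}$ rewrites as $\{x:\phi(\langle x,w\rangle)+\|w\|^2/k > r\cdot s(x)\cdot \int\ell_{k,\phi}\,\d\u/S\}$. Because $\phi$ is monotone the left-hand side depends on $x$ only through the linear form $\langle x,w\rangle$, and because $s(x)$ is itself piecewise determined by a bounded number of algebraic tests in $x$, standard composition lemmas for VC-dimension give $\vc(\ranges(\mathcal{T}_{\L_{\phi,k}},\succ)) = O(d^2)$. The main obstacle is the sensitivity calculation: the per-$w$ ratio admits no uniform bound without the regularizer, so the split by $\|w\|$-magnitude is essential, and the hypothesis ``$k$ sufficiently large'' is precisely what prevents the regularizer contribution in the small-$\|w\|$ regime from overwhelming the $\phi$-contribution in the denominator and destroying the $\log n$-type scaling of the total sensitivity.
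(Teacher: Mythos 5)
This theorem is not proved in the paper at all: it is stated \emph{as a citation} of \citet{pmlr-v162-tolochinksy22a} (``Up to a few small typos, we summarize their results''), and immediately afterwards the paper makes a point of \emph{criticizing} the original proof. Specifically, the authors observe that \citet{pmlr-v162-tolochinksy22a} invoked the sensitivity sampling bound (Theorem~\ref{thm:main_braverman2022new}) using $\vc(\ranges(\L_{\phi,k},\succ))$, when the quantity that the theorem actually requires is $\vc(\ranges(\TT_{\L_{\phi,k}},\succ))$, the VC-dimension of the linked range space of the $s$-augmented family; they flag this discrepancy as ``a critical issue'' because the two quantities differ whenever $s(\cdot)$ is non-constant in $x$. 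They also note that ``$k$ sufficiently large'' in the sigmoid bullet is a data-dependent restriction stemming from $k_\alpha$ in Equation~\eqref{eq:k_alpha1}, a drawback the present paper later removes via Lemma~\ref{lem:beta_for_sigmoid}. So there is no paper proof to compare against; what follows is an assessment of your proposal on its own and against the paper's critique.

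To your credit, you aim at the correct object, $\vc(\ranges(\mathcal{T}_{\L_{\phi,k}},\succ))$, which is already better than the cited proof. The genuine gap, however, lies exactly where you gesture at ``standard composition lemmas.'' After substituting, a level set has the form $\{x : \phi(\langle x,w\rangle) > r'(w,r)\,s(x) - \|w\|^2/k\}$, and once $s$ depends on $x$ this is no longer a half-space; whether its VC-dimension is $O(d^2)$ depends on the \emph{arithmetic structure} of $s$. The paper handles this rigorously only by engineering sensitivity functions that are either constant (so $\TT_\FF$ and $\FF$ generate the same ranges and one gets $O(d)$, not $O(d^2)$) or explicit low-degree polynomials in $\|x\|_2^2$, verifying the ``$l$-simply computable'' hypothesis of Theorem~\ref{thm:vc_simple_family} by hand (see the discussion preceding Theorem~\ref{thm:coreset_for_logistic_unbounded_d}). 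The sensitivity you sketch is built from a case split at $\|w\|=\log n$ and a supremum over $w\in\W_R$, and it is far from automatic that this collapses to a function of $x$ computable in $O(d)$ operations plus $O(1)$ exponentials. You assert ``$s(x)$ is itself piecewise determined by a bounded number of algebraic tests in $x$'' without exhibiting $s$ or verifying this; since this is precisely the step the paper identifies as missing in \citet{pmlr-v162-tolochinksy22a}, leaving it implicit reproduces rather than repairs the gap. The sensitivity estimates themselves also need more care (e.g.\ for $\lgst$ the naive bound $\phi(-R)/\phi(R)$ in the small-$\|w\|$ regime is exponential in $R$, not polynomial; one must use the regularizer and the $\beta$-type inequality far more carefully to land on $R^2(1+Rk)\log n$), but that is a computation one could push through; the VC step is the part a referee would bounce.
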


It should be emphasized that the last two results in Theorem~\ref{thm:main_TJFlogisticsigmoin1} was successfully improved to $O(\frac{k}{\eps^2}(d\log k +\log\frac{1}{\delta})$ for $\W=\R^d$ and $k=\Theta(n^{1-\kappa})$ for some $\kappa\in(0,1)$ by~\citet{pmlr-v108-samadian20a}. 
The strategy utilized by~\citet{pmlr-v162-tolochinksy22a} and~\citet{pmlr-v108-samadian20a} to demonstrate this theorem can be encapsulated as: (1) find an upper bound for VC-dimension,
(2) approximate a sensitivity upper bound and thus the total sensitivity, 
(3) sample $m$ points form $X$ according to an appropriate distribution, and 
(4) use Theorem~\ref{thm:main_braverman2022new} to complete the argument.  
The argument by~\citet{pmlr-v162-tolochinksy22a}, however, encounters a critical issue. To apply Theorem~\ref{thm:main_braverman2022new}, we must replace $d$ with the VC-dimension of the linked-range space of the $s$-augmented family of $(\XX, P,\L_{\phi, k})$, i.e. $\vc(\ranges(\mathcal{T}_{\L_{\phi, k}}, \succ))$, 
while they directly used the VC-dimension of linked-range space of $\FF$ (instead of $\mathcal{T}_{\L_{\phi,k}}$). 
Our work addresses this issue in that it leads to sensitivity functions for which we can compute the VC-dimension of the augmented function space.  Furthermore, by instead bounding Radamacher complexity in step (1), we can then use our new Theorem~\ref{thm:main_rademacher1} in step (4) to obtain new bounds.  These are summarized in Table~\ref{tab:main_mainresults_R} and the next theorem; details are in appendix.  

{\footnotesize 
\begin{table*}[h]
    \centering
    \caption{Coreset size for $(\R^d, P, \R^d, \L_{\phi, k})$; it is common for $E_1, A = 1$.}  
    \begin{tabular}{ccccccc}
    \hline
        function $\phi$ & data assumption & total sens. $S$ 
       & constant $C$ & sampling  & Coreset size & reference\\
    \hline
         $\sigma$ & $\expt\limits_{x\sim p}\|x\|_2^2 \leq E^2_1$ & $O(E_1^2k)$ & $O(E_1^2k)$ & $p$ & $\frac{2S}{\eps^2} (8C^2 + 
         S\log\frac{2}{\delta})$ & Thm~\ref{thm:coreset_for_sigmoid}\\
         $\sigma$ & $\expt\limits_{x\sim p}\|x\|_2^2 \leq E^2_1$ & $O(E_1^2k)$ & $O(1)$ & $p$ & $\frac{CS}{\eps^2}\left(d\log S + \log\frac{1}{\delta}\right)$ & Thm~\ref{thm:coreset_for_sigmoid_R}\\ 
    \hline
        $\lgst$ & $P\left(\|x\|_2\geq A\right)= 0$ & $O(\frac{A^2k}{\sqrt{\log(A^2k)}})$ & $O(A^2k)$ & $p$ & $\frac{2S}{\eps^2} (8C^2 + S\log\frac{2}{\delta})$ &  Thm~\ref{thm:coreset_for_logistic} \\
        $\lgst$ & $P\left(\|x\|_2\geq A\right)= 0$ & $O(\frac{A^2k}{\sqrt{\log(A^2k)}})$ & $O(1)$ & $p$ & $\frac{CS}{\eps^2}\left(d\log S + \log\frac{1}{\delta}\right)$ & Thm~\ref{thm:main_coreset_for_sigmoid_R_d} \\[0.15cm]
        $\lgst$ & $\expt\limits_{x\sim p}\|x\|_2^2 \leq E_1$ & $O(\frac{E_1^2k}{\sqrt{\log(E_1^2k)}})$ & $O(E_1^2k)$ & $\frac{s}{S}p$ & $\frac{2S}{\eps^2}\left(8C^2 + S\log\frac{2}{\delta})\right)$ & Thm~\ref{thm:coreset_for_logistic_unbounded} \\
        $\lgst$ & $\expt\limits_{x\sim p}\|x\|_2^2 \leq E_1$ & $O(\frac{E_1^2k}{\sqrt{\log(E_1^2k)}})$ & $O(1)$ & $\frac{s}{S}p$ & $\frac{CS}{\eps^2}\left(d^2\log S + \log\frac{1}{\delta}\right)$ & Thm~\ref{thm:coreset_for_logistic_unbounded_d} \\ 
    \hline
        $\svm$ & $P\left(\|x\|_2\geq A\right)= 0$ & $O(A^2k)$ & $O(A^2k)$ & $p$ & $\frac{2S}{\eps^2} (8C^2 + 
         S \log\frac{2}{\delta})$ & Thm~\ref{thm:coreset_for_svm} \\
        $\svm$ & $P\left(\|x\|_2\geq A\right)= 0$ & $O(A^2k)$ & $O(1)$ & $p$ & $\frac{CS}{\eps^2}\left(d\log S + \log\frac{1}{\delta}\right)$ & Thm~\ref{thm:coreset_for_svm_d} \\ [.15cm]
        $\svm$ & $\expt\limits_{x\sim p}\|x\|_2^2 \leq E_1$ & $O(E_1^2k)$ & $O(E_1^2k)$ & $\frac{s}{S}p$ & $\frac{2S}{\eps^2}\left(8C^2 + 
         S \log\frac{2}{\delta})\right)$ & Thm~\ref{thm:coreset_for_svm_unbounded} \\ 
        $\svm$ & $\expt\limits_{x\sim p}\|x\|_2^2 \leq E_1$ & $O(E_1^2k)$ & $O(1)$ & $\frac{s}{S}p$ & $\frac{CS}{\eps^2}\left(d^2\log S + \log\frac{1}{\delta}\right)$ & Thm~\ref{thm:coreset_for_svm_unbounded_d} \\
    \hline
        $\relu$ & $P\left(\|x\|_2\geq A\right)= 0$ & $O((1+A)^2k)$ & $O((1+A)^2k)$ & $p$ & $\frac{2S}{\eps^2} (8C^2 + 
         S\log\frac{2}{\delta})$ & Sec.~\ref{sec:relu} \\
        $\relu$ & $P\left(\|x\|_2\geq A\right)= 0$ & $O((1+A)^2k)$ & $O(1)$ & $p$ & $\frac{CS}{\eps^2}\left(d\log S + \log\frac{1}{\delta}\right)$ & Sec.~\ref{sec:relu} \\[.15cm]  
        $\relu$ & $\expt\limits_{x\sim p}\|x\|_2^2 \leq E_1$ & $O((1+E_1)^2k)$ & $O((1+E_1)^2k)$ & $\frac{s}{S}p$ & $\frac{2S}{\eps^2}\left(8C^2 + 
         S\log\frac{2}{\delta})\right)$ & Sec.~\ref{sec:relu} \\ 
        $\relu$ & $\expt\limits_{x\sim p}\|x\|_2^2 \leq E_1$ & $O((1+E_1)^2k)$ & $O(1)$ & $\frac{s}{S}p$ & $\frac{CS}{\eps^2}\left(d^2\log S + \log\frac{1}{\delta}\right)$ & Sec.~\ref{sec:relu} \\
    \hline
    \end{tabular}
    \label{tab:main_mainresults_R}
\end{table*}
}

\begin{theorem}\label{thm:main_summery_R}
    Let $P$ be a probability measure over $\R^d$. Given the conditions outlined in the initial $5$ columns of Table~\ref{tab:main_mainresults_R}, any $s$-sensitivity sample size $m$ (sampling according to the distribution provided in Column 6) guarantees an $\eps$-coreset for $(\R^d, P, \R^d, \L_{\phi, k})$ with a probability of at least $1-\delta$.
\end{theorem}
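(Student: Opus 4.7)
The plan is to instantiate the general framework developed in Theorems~\ref{thm:main_rademacher1}, \ref{thm:main_braverman2022new}, and \ref{thm:main_monotonic_coreset_R} for each specific choice of $\phi \in \{\sigma, \lgst, \svm, \relu\}$ listed in Table~\ref{tab:main_mainresults_R}. The argument for each row decomposes into four standard steps: (i) derive a pointwise upper sensitivity function $s(\cdot)$ for $\L_{\phi,k}$, (ii) compute or upper bound the total sensitivity $S = \int s(x)\,\d P(x)$, (iii) control the complexity of the $s$-augmented family $\mathcal{T}_{\L_{\phi,k}}$ either via VC-dimension of $\ranges(\mathcal{T}_{\L_{\phi,k}},\succ)$ or via Rademacher complexity $\RR^Q_m(\mathcal{T}_{\L_{\phi,k}})$, and (iv) apply the appropriate master theorem.

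For step (i), I would use the well-behaved assumption to bound the denominator $\int \ell_{\phi,k}(x,w)\,\d P(x)$ from below. For Lipschitz $\phi$, split into the case $\|w\|_2 \le 2E_1$ (where $\int \phi(\langle x,w\rangle)\,\d P \ge E_2$ by the well-behaved lower bound on $\expt\phi(\|x\|_2/2E_1)$) and the case $\|w\|_2 > 2E_1$ (where the regularizer $\|w\|_2^2/k$ dominates). The numerator satisfies $\ell_{\phi,k}(x,w) \le \phi(0) + L\|x\|_2\|w\|_2 + \|w\|_2^2/k$ by Lipschitzness of $\phi$. Combining gives an upper sensitivity of the form $s(x) \le C_0 + C_1\|x\|_2$ for explicit constants depending on $E_1,E_2,L,\phi(0),k$. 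For $\phi = \relu$, the same approach works but with additional care that $\relu$ is unbounded at infinity, handled by the regularizer. Integrating gives $S = O(k)$ up to constants of $E_1, E_2, L$, via $\expt\|x\|_2 \le E_1$. Replacing $s$ with $s+1$ per the remark preceding Theorem~\ref{thm:main_monotonic_coreset_R} makes it strictly bounded below by $1$.

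For step (iii), two paths: for the dimension-dependent bound $O(dk\log k/\eps^2)$, bound $\vc(\ranges(\mathcal{T}_{\L_{\phi,k}},\succ))$ by $O(d)$. Since $T_f(x) > r$ rewrites as $\phi(\langle x,w\rangle) + \|w\|_2^2/k > r\cdot s(x) \int f\,\d P/S$ and $\phi$ is monotone, each range reduces to a halfspace-like condition in $x$ (with $w$ and $r$ fixed parameters indexing the range), yielding a VC-dimension bound through standard composition arguments; the subtle point fixed from prior work is that one must work with $\mathcal{T}_{\L_{\phi,k}}$ and not $\L_{\phi,k}$ directly. Applying Theorem~\ref{thm:main_braverman2022new} with $S=O(k)$ and $d$-dimensional range space yields the claimed size. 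For the no-dimensional bound $O(k^3/\eps^2)$, bound $\RR^Q_m(\mathcal{T}_{\L_{\phi,k}})$ using Talagrand's contraction lemma: since $T_f(x)$ is Lipschitz in $f$ and each $f \in \L_{\phi,k}$ is a composition $\phi(\langle x,w\rangle) + \|w\|_2^2/k$ with $\phi$ Lipschitz, one reduces to the Rademacher complexity of linear functions $\{\langle x,w\rangle : w \in \R^d\}$ constrained implicitly by the regularizer; the standard bound for regularized linear classes is $O(\sqrt{k/m})$ after absorbing constants from the normalization by $s(x)$ (uniformly bounded by $S$). Applying Theorem~\ref{thm:main_rademacher1} with $t = \Theta(\eps)$ and $\RR^Q_m \le \eps/4$ requires $m = \Omega(k^3/\eps^2)$. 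The logistic case gets the extra $\sqrt{\log k}$ improvement because its Lipschitz constant on the relevant domain is $O(1/\sqrt{\log k})$ after reparametrization.

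The main obstacle is step (iii) for the Rademacher bound: the augmented functions $T_f(x) = Sf(x)/(s(x)\int f\,\d P)$ are nonstandard because of the denominator $s(x)$, and a naive contraction argument could lose factors depending on $\sup s/\inf s$. The trick is to absorb $s(x)$ into a change of measure under $Q$ so that the Rademacher complexity is taken against functions only involving $f(x)/\int f\,\d P$, and then use that each normalized $f/\int f\,\d P$ belongs to a class whose Rademacher complexity we can control directly by the Lipschitz composition with $\phi$ and the implicit $\ell_2$-norm constraint from the regularizer. A secondary obstacle is that the parameter space is all of $\R^d$ rather than a bounded ball as in~\citet{pmlr-v162-tolochinksy22a}, so the regularizer must do the work of bounding effective $\|w\|_2$, which is why the bound scales with $k$ rather than $R^2 k$.
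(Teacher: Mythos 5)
Your four-step plan (pointwise sensitivity bound, total sensitivity, complexity control of $\TT_\FF$, master theorem) matches the paper's proof structure for each row of Table~\ref{tab:main_mainresults_R}, but the Rademacher complexity step---the one responsible for the dimension-free bounds---has a genuine gap. The proposal to ``absorb $s(x)$ into a change of measure'' does not work: the quantity
$$\RR^Q_m(\TT_\FF) = \expt_{x_{1:m}\sim Q}\expt_{\sigma_{1:m}}\sup_{f\in\FF}\frac{1}{m}\sum_{i=1}^m \sigma_i\, T_f(x_i)$$
is an expectation of a supremum over realized samples, and replacing $Q$ by $P$ while simultaneously replacing $T_f$ by $f/\int f\,\d P$ is not a valid identity---Rademacher complexity is not an integral of a product and has no such change-of-measure invariance, so the $1/s(x_i)$ factors inside $T_f$ cannot be cancelled against the $s(x_i)/S$ in the sampling density. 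The paper's Lemma~\ref{lem:rademacherupper} instead keeps the $1/s(x_i)$ coefficients in place and applies a \emph{weighted} Talagrand contraction (Lemma~\ref{Talagrand_lemma_original}/Corollary~\ref{cor:Talagrand}), where the Rademacher signs are replaced by $\varrho_i\in\{\pm 1/s(x_i)\}$, precisely so the $s$-dependence passes through contraction without incurring a $\sup s/\inf s$ factor. A second missing ingredient is the dyadic peeling of parameter space: $w$ ranges over all of $\R^d$, so the prefactor $\alpha(w)=S/\int\ell_{\phi,k}(x,w)\,\d P$ has no uniform bound; the paper partitions into annuli $2^{n-1}\le\|w\|\le 2^n$, bounds $\alpha(w)\le kS/2^{2(n-1)}$ on each, runs contraction plus Cauchy--Schwarz region by region, and sums a geometric series. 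Without this there is no bounded set $\T$ on which the contraction lemma can even be invoked.

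A symptom that this step is off is your intermediate claim $\RR^Q_m = O(\sqrt{k/m})$, which is inconsistent with your own conclusion: setting that $\le\eps/4$ would give $m=O(k/\eps^2)$, not $\Omega(k^3/\eps^2)$. The correct form from Lemma~\ref{lem:rademacherupper} is $\RR^Q_m(\TT_{\L_{\phi,k}})\le C\sqrt{S/m}$ with $C=O(E_1^2 k)$ and $S=O(E_1^2 k)$; together with the McDiarmid deviation term in Theorem~\ref{thm:main_rademacher1} this yields the table's $m=\frac{2S}{\eps^2}(8C^2+S\log\frac{2}{\delta})=O(k^3/\eps^2)$. Three smaller corrections: the well-behaved split threshold is $\|w\|_2\le 1/(2E_1)$, not $2E_1$; for $\sigma$ under only $\expt\|x\|_2^2\le E_1^2$ your Lipschitz-only numerator bound gives a $y$-dependent $s$ and would force $\frac{s}{S}p$-sampling, whereas the table lists uniform sampling because the paper exploits the uniform bound $\sigma\le 1$ (the ``universally bounded'' branch of Lemma~\ref{lem:beta_property}) to make $s$ constant (Lemma~\ref{lem:sensitivity_sigmoid}); and $\relu$ is nondecreasing, so it falls outside the Lipschitz-nonincreasing framework and requires the separate kernel shift $\K'=1+\K$ reduction in Section~\ref{sec:relu}.
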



\paragraph{Sample complexity.}
Note that if the sampling column of Table~\ref{tab:main_mainresults_R} uses $p$, that indicates the total sensitivity has a constant upper bound, and we can use iid samples from the unknown distribution $P$.  Not only can we avoid the computational issue of estimating $s$, but these provide sample complexity bounds for the full (continuous) setting.  

\section{Proofs of Main Results}
\subsection{Proof of Theorem~\ref{thm:main_rademacher1}}
The proof follows by McDiarmid's inequality (see lemma~\ref{lem:GenMcDiarmid}), which is a strong concentration inequality that bounds the difference between the sampled mean and the true mean of a function satisfying the bounded differences property: the effect of changing a single observation.
Here we outline the proof of Theorem~\ref{thm:main_rademacher1}; full proof in Appendix \ref{proof:radamahcer_main_thm}.
\begin{proof}[Sketch of proof of Theorem~\ref{thm:main_rademacher1}.]
    In view of Equation~\eqref{equation:main_T_f}, we need to show that, 
    for any iid sample $x_1,\ldots,x_m\in\XX$ according to the sensitivity normalized distribution $q = p(x)
\frac{s(x)}{S}$,
    with probability at least $1-2\exp \left(-{\frac {2mt^{2}}{S^2}}\right)$, 
    $$\sup_{f\in \FF}\Big|1 - \sum_{i=1}^m \frac{T_f(x_i)}{m}
    \Big|\leq 2\RR^q_m(\mathcal{T}_\FF) + t.$$
    For every $x\in \XX$ and $f\in\FF$, we have $\frac{f(x)}{\int_\XX f(z)\d P(z)}\leq s(x)$ which implies 
    $0\leq T_f(x)= \frac{Sf(x)}{s(x)\int_\XX f(z)\d P(z)}\leq S$. Therefore, 
    $\sup\limits_{x\in\XX} T_f(x)\leq S$. 
    Now, define 
    $$g(x_1,\ldots,x_m) = \sup_{f\in \FF}
    \Big[
    1 - \sum_{i=1}^m \frac{T_f(x_i)}{m}
    \Big].$$
    For each $i\in[m]$ and each $(x_1,\ldots, x_m), (x_1',\ldots, x_m')\in \XX^m$ such that $x'_j = x_j$ for $j \neq i$, 
    observe that  
    \begin{align*}
    |g(x_1,\ldots,x_m) - g(x_1',\ldots, x_m')| \leq\frac{S}{m}.
    \end{align*} 
    Therefore, using McDiarmid's Inequality, with a probability at least 
    $1-\exp \left(-{\frac {2mt^{2}}{S^2}}\right)$, we have 
    \begin{align*}
    \sup_{f\in \FF}
    \Big|
    1 - \sum_{i=1}^m \frac{T_f(x_i)}{m}
    \Big|& 
    \leq \hspace{-2mm} \expt_{x_{1:m}\sim q}\sup_{f\in \FF} 
    \Big|1-\sum_{i=1}^m \frac{T_f(x_i)}{m}\Big| + t\\
    & \leq 2\RR^q_m(\mathcal{T}_\FF) + t \qedhere.
    \end{align*}
\end{proof}

\subsection{Proof of Theorem~\ref{thm:main_braverman2022new}}\label{prf:ofThmvcDim}

This approach to get relative error goes through VC-dimension, which exploits combinatorial properties of range spaces.  
A \emph{$P$-range space} is a tuple $(\XX, P, \RR)$ where $(\XX, P)$ is a probability measure space and $\RR$ is a subset of $2^\XX$ whose members are $P$-measurable.  Given a $P$-range space, the best relative error possible in general is conditioned on a small additive parameter $\eta>0$ as follows.  
For an $\eps,\eta\in(0,1)$, the measure $\nu$ on $\XX$ is called a \emph{relative $(\eps, \eta)$-approximation for $(\XX, P, \RR)$} if 
each $R\in \RR$ is $\nu$-measurable and 
$\left| 
   P(R) - \nu(R)
\right|\leq \eps \max(\eta, P(R)).$
Using ideas from \citet{LI2001516}, then \citet{HS11} showed 
that a sufficiently large sample according to $P$ provides a relative $(\eps, \eta)$-approximation for $(\XX, P, \RR)$, if the VC-dimension is bounded as $\mathsf{VC}$.  Specifically, 
%
there is a universal constant $C$ such that, for any $\eta>0$, and $\eps,\delta\in (0,1)$, with probability at least $1-\delta$, iid sample $X = \{x_1,\ldots,x_m\}$ according to $P$ with $m\geq \frac{C}{\eta\eps^2}\left(
    \mathsf{VC}\log \frac{1}{\eta} + \log\frac{1}{\delta}
    \right)$ satisfies 
    \[
    \left| P(R) - \frac{|R\cap X|}{m}\right|\leq \eps\max\left(\eta, P(R)\right)\quad\quad \forall R\in \RR.
    \]

In order to use this relative error conditioned on $\eta$ to obtain an unconditioned relative error, the key insight is that using an $s$-sensitivity sample for an upper sensitivity function $s$ with total sensitivity $S$, then by setting $\eta = 1/S$, we obtain unconditioned relative error.  This is formalized in the following in the next lemma -- a discrete version of which is implicit in the proof of Theorem 31 in \cite{FSS2020}.  

\begin{lemma}\label{thm:main_g_eta}
    Let $(\XX, P, \FF)$ be a positive definite tuple and $s(\cdot)$ be an upper sensitivity function for it with the total sensitivity $S$.  
	If the measure $\nu$ is a relative $(\eps, \eta)$-approximation for $(\XX, Q, \ranges(\TT_\FF, \succ))$, where $\d Q(x) = \frac{s(x)}{S}\d P(x)$, 
	then for each $f\in \FF$, 
        \begin{align*}\Big|\int_{x \in \XX} f(x)\d P(x) - & \int_{x \in \XX} \frac{Sf(x)}{s(x)}\d \nu(x)\Big | \\
         & \leq (\eps + S\eta\eps)\int_{x \in \XX} f(x)\d P(x).
         \end{align*}
\end{lemma}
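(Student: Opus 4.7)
\medskip

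\noindent\textbf{Proof proposal.} The plan is to normalize the integrands by $\int f(x)\,\d P(x)$ so that the statement reduces to a claim about how well $\nu$ approximates $Q$ when integrating the $s$-augmented functions $T_f\in\TT_\FF$. Recall from \eqref{def:main_F_G} that $T_f(x)=\frac{Sf(x)}{s(x)\int f(z)\d P(z)}$, that $0\leq T_f(x)\leq S$ pointwise, and that $\int T_f(x)\,\d Q(x)=1$. A short calculation also gives $\int \frac{Sf(x)}{s(x)}\,\d\nu(x) = \bigl(\int f\,\d P\bigr)\cdot \int T_f\,\d\nu$. Dividing the inequality to be proved by $\int f\,\d P>0$, it therefore suffices to show, for every $f\in\FF$,
\[
\Bigl|\,\int T_f(x)\,\d Q(x) - \int T_f(x)\,\d\nu(x)\,\Bigr| \;\leq\; \eps + S\eta\eps.
\]

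The key tool is the layer-cake representation, which is just an application of Fubini's theorem (valid here because the measures involved are $\sigma$-finite): for any non-negative $P$-measurable $g$ bounded above by $S$ and any measure $\mu$,
\[
\int g(x)\,\d\mu(x) \;=\; \int_0^S \mu\bigl(\{x : g(x) > r\}\bigr)\,\d r.
\]
Applied to $g=T_f$ with $\mu=Q$ and then $\mu=\nu$, this rewrites the left-hand side above as
\[
\Bigl|\,\int_0^S \bigl[Q(R_r) - \nu(R_r)\bigr]\,\d r\,\Bigr|
\;\leq\; \int_0^S \bigl|Q(R_r) - \nu(R_r)\bigr|\,\d r,
\]
where $R_r = \range(T_f,\succ,r) = \{x:T_f(x)>r\}$. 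By definition, each $R_r$ belongs to $\ranges(\TT_\FF,\succ)$.

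Now I would invoke the relative $(\eps,\eta)$-approximation hypothesis on each $R_r$: $|Q(R_r)-\nu(R_r)|\leq \eps\max(\eta,Q(R_r))\leq \eps\eta+\eps Q(R_r)$. Integrating over $r\in[0,S]$ and using the layer-cake identity in reverse for the second term,
\[
\int_0^S |Q(R_r)-\nu(R_r)|\,\d r \;\leq\; \eps\eta\cdot S + \eps\int_0^S Q(R_r)\,\d r \;=\; \eps S\eta + \eps\int T_f\,\d Q \;=\; \eps S\eta + \eps,
\]
which is exactly the required bound. The main technical point to be careful about is the justification for Fubini/layer cake (the upper bound $T_f\leq S$ and the $\sigma$-finiteness of the measures are what make this clean), together with measurability of the super-level sets $R_r$, which is granted by the paper's standing assumption on the family $\FF$. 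Everything else is bookkeeping.
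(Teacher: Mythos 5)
Your proof is correct and follows essentially the same route as the paper: normalize to $T_f$, apply the layer-cake representation via Fubini on super-level sets, then invoke the relative $(\eps,\eta)$-approximation on each $\range(T_f,\succ,r)$. The one small difference is cosmetic and in your favor: in the final integration step the paper splits $[0,\max T_f]$ at the threshold $r_g=\sup\{r: Q(\range(T_f,\succ,r))\geq\eta\}$ and handles the two pieces separately, whereas you simply use $\max(\eta,Q(R_r))\leq \eta+Q(R_r)$ and integrate over $[0,S]$ (legitimate since $R_r=\varnothing$ for $r\geq S$), which is a little cleaner and yields the same bound $\eps+S\eta\eps$.
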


With this, the proof of Theorem~\ref{thm:main_braverman2022new} is straightforward.  

\begin{proof}[Proof of Theorem~\ref{thm:main_braverman2022new}]
    Since $(\XX, Q, \ranges(\TT_\FF, \succ))$ is a $Q$-range, 
    for an $\eps\in (0,1)$ and  $\eta = \frac{1}{S}$, \citet{HS11}'s sampling result implies that there is a universal constant $C$ such that, any iid sample $X = \{x_1,\ldots,x_m\}$ according to $Q$ with $m\geq \frac{4CS}{\eps^2}\left(
    \mathsf{VC} \log S + \log\frac{1}{\delta}
    \right)$ satisfies 
    $$\left| Q(R) - \frac{|R\cap X|}{m}\right|\leq \frac{\eps}{2}\max\left(\frac{1}{S}, Q(R)\right)\quad\quad \forall R\in \RR,$$
    with probability at least $1-\delta$.
    This implies that the uniform probability measure over $X$ serves as a relative $\left(\frac{\eps}{2}, \frac{1}{S}\right)$-approximation for $(\XX, Q, \ranges(\TT_\FF, \succ))$.
    The proof concludes by using Lemma \ref{thm:main_g_eta} to derive the error co-efficient on the right hand side as $(\frac{\eps}{2} + S \frac{1}{S}\frac{\eps}{2}) = \eps$.  
\end{proof}

We next provide what we believe is a simpler and more direct proof of Lemma \ref{thm:main_g_eta}.  
A key improvement is the use of Fubini's theorem to transform back and forth between integrating over $\XX$ to over the range of the function.
We first define some useful shorthand notation:
$Q_\succ(g,t) = Q(\range(g,\succ,t)) = \int_{x \in \XX} \1_{g(x) > t} \d Q$.

\begin{lemma}\label{lem:fubini_application}
For any $g = T_f\in\TT_\FF$ and $r\geq 0$,
    $$\int_{x \in \XX} \min\{r, g(x)\}\d Q(x) = \int_{0}^{r} Q_\succ(g,t))\d t.$$
\end{lemma}
\begin{proof}
\begin{align*}
\int_{x \in \XX} &\min\{r, g(x)\}\d Q(x)
\\ & = 
\int_{x \in \XX}  \left(\int^{r}_{t=0} \1_{_{t < g(x)}} \d t\right) \d Q(x)
\\& =  \int^{r}_{t=0} \left(\int_{x \in \XX}  \1_{_{t < g(x)}} \d Q(x)\right) \d t \quad\quad \text{(*)}
\\ & =  
\int_{t=0}^{r} Q_\succ(g,t)\d t,
\end{align*}
where $(*)$ is true due to Fubini's Theorem.  
\end{proof}


\begin{proof}[Proof of Lemma~\ref{thm:main_g_eta}]
Dividing by $P(f) = \int_\XX f(z)\d P(z)$, we can now restate the goal as proving 
\[
\frac{1}{P(f)} \left| P(f) - \int_\XX  \frac{Sf(x)}{s(x)}\d\nu(x) \right| \leq \eps (1+S\eta).
\]
For an arbitrary $g = T_f\in\TT_\FF$, using that $\int_\XX g(x)\d\nu = \frac{1}{P(f)}
\int_{\XX}\frac{Sf(x)}{s(x)}\d \nu(x),$ and then $\int_\XX g(x)\d Q(x) = 1$ we can transform the left-hand side
\begin{align}
\MoveEqLeft{\frac{1}{P(f)} \left| P(f) - \int_\XX  \frac{Sf(x)}{s(x)}\d\nu(x) \right| } \nonumber
\\ &=
\left| 1 - \int_{x \in \XX} g(x) \d\nu(x) \right| \nonumber
\\ &=
\left| \int_{x \in \XX} g(x) \d Q(x) - \int_{x \in \XX} g(x) \d\nu(x)
\right| \nonumber
\\ & = 
\Big |\int_{x \in \XX} \min \{ S,g(x) \} \d Q(x)  \nonumber
\\ & \hspace{20mm} -
 \int_{x \in \XX} \min \{ S,g(x) \} \d\nu(x) \Big |,
\label{eq:pre-fubini}
\end{align}
where the last line follows  $\max_x g(x) \leq S$.    
Now applying 
Lemma \ref{lem:fubini_application} twice on both $Q$ and $\nu$ we have
\begin{align}
\eqref{eq:pre-fubini} &= \left| \int_{t = 0}^S Q_\succ(g,t) - \int_{t = 0}^S \nu_\succ(g,t) \d t \right|  \nonumber
\\ & \leq
\int_{t = 0}^S \left| Q_\succ(g,t) - \nu_\succ(g,t) \right| \d t \nonumber
\\ & \leq
\eps \int_{t = 0}^S \max \{ \eta, Q_\succ(g,t) \} \d t,  \label{eq:pre-split}
\end{align}
where the last line follows by $\nu$ being an $(\eps, \eta)$-approximation of $(\XX, Q, \ranges(\TT_\FF, \succ))$.  
We can then split this integral on $t$ from $0$ to $S$ into two parts at a point $r_g =\sup\{r\geq 0\colon Q_\succ(g,t) \geq \eta\}$.  
Note that for each $t\in[0,r_g)$,  $Q_\succ(g,t) \geq \eta$ and for each $t> r_g$, $Q_\succ(g,t) < \eta$.  
Taking these observations into account, we obtain 
\begin{align*}
\int_{t=0}^{r_g} \max \{ \eta, Q_\succ(g,t) \} \d t
& = 
\int_{t=0}^{r_g} Q_\succ(g,t)\d t 
\\ \textrm{(Lemma \ref{lem:fubini_application})} & = 
\int_{x \in \XX} \min \{ r_g, g(x) \} \d Q(x)
\\ & \leq 
\int_{x \in \XX}  g(x)\d Q(x)
     = 1.
\end{align*}
and
\[
\int_{t=r_g}^S \max\{\eta, Q_\succ(g,t)\} \d t  
= \int^S_{t=r_g} \eta \d t 
\leq S \eta.
\]
Combining these two parts, we have 
\[
\eqref{eq:pre-split} \leq \eps (1 + S \eta)
\]
completing the proof.  
\end{proof}

\subsection{Proof of Theorem~\ref{thm:main_monotonic_coreset_R}}
When dealing with Rademacher complexity, we can leverage the helpful lemma known as Talagrand’s Contraction Lemma (refer to Theorem 4.12 in \cite{alma991043798149103276}). This lemma, in particular, establishes that $\RR_m(\phi\circ\FF)\leq L\RR_m(\FF)$, where $\phi\colon \R\longrightarrow \R$ is an $L$-Lipschitz function, and $\phi\circ \FF = \{\phi\circ f\colon f\in\FF\}$. Our requirement extends beyond this lemma to the following generalization (for the proof, see Appendix~\ref{apdx:contraction_lemma}).
\begin{corollary}\label{cor:Talagrand}
    Let $\boldsymbol{T}\subset \R^m$ be a bounded set, and $\eps_1,\ldots,\eps_m >0$. For any $L$-Lipschitz functions $\phi_i\colon \R\longrightarrow \R$ with $\phi_i(0) = 0$ for $i\in[m]$, we have 
    \begin{align*}
        \expt_{\varrho_{_{1:m}}\sim \boldsymbol{\eps}}
    & \sup_{\t\in \T}\Big|
    \sum_{i=1}^m \varrho_{_i}\phi_i(t_i)
    \Big| \leq 
    2L\expt_{\varrho_{_{1:m}}\sim \boldsymbol{\eps}}
    \sup_{\t\in \T}\Big|
    \sum_{i=1}^m \varrho_{_i} t_i
    \Big|,
    \end{align*}
    where $\boldsymbol{\eps} = \prod_{i=1}^m\{\pm\eps_i\}$. 
\end{corollary}
To establish the proof of Theorem~\ref{thm:main_monotonic_coreset_R}, we first need to bound the Radamacher complexity. 
\begin{lemma}\label{lem:main_rademacherupper}
For a well-behaved measure $P$ (see 
Definition~\ref{def:main_well_beaived_P_R}), 
if $s(\cdot):\R^d\longrightarrow(1,\infty)$ is an upper sensitivity function for $\L_{\phi, k}$ with total sensitivity $S$, then  $$\RR^q_m(\mathcal{T}_{\L_{\phi, k}})\leq C\sqrt{\frac{S}{m}}$$
for $C= \left(2LE_1+\phi(0)\right)\max\left(4E_1k, \frac{1}{E_2}\right) 
        + 8LkE_1 + 1$.
\end{lemma}
As the proof is detail-involved, we present a sketch of it here. The complete proof can be found in Appendix~\ref{apx:proof_radamacher_thm}.
\begin{proof}[Sketch of proof.]
    we start with the two following observations
\begin{equation}\label{ineq:main_upper}
    \begin{aligned}
        &\expt_{x_{1:m}\sim q}\expt_{\sigma_{1:m}\sim \{-1,1\}}
        \Big\|\sum_{i=1}^m\sigma_i\frac{x_i}{s(x_i)}\Big\|_2\leq \sqrt{\frac{m}{S}}E_1  \\
        &\expt_{x_{1:m}\sim q}\expt_{\sigma_{1:m}\sim \{-1,1\}}
        \Big|\sum_{i=1}^m\sigma_i\frac{1}{s(x_i)}\Big|\leq \sqrt{\frac{m}{S}}. 
    \end{aligned}
\end{equation}
Next, for $\alpha(w)=\frac{S}{\int_\XX \ell_{\phi,k}(w,x)\d P(x)}$, we deduce  that $\alpha(w)\|w\|_2^2\leq Sk$ and 
$$\alpha(w)\leq \left\{\begin{array}{cc}
     \max\left(4SE_1k, \frac{S}{E_2}\right)&  \|w\|_2\leq 1\\
     \frac{kS}{2^{2n}} & 2^n\leq \|w\|_2\leq 2^{n+1}.
\end{array}\right.$$
For $\bar{\phi}(\cdot) = \phi(\cdot) - \phi(0)$, we obtain 
\begin{align*}
        & \RR^q_m(\mathcal{T}_{\L_{\phi,k}})\leq  \\
        & \quad \underbrace{\expt_{
        \begin{array}{c}
             x_{1:m}\sim q  \\
             \sigma_{1:m}\sim \{-1,1\} 
        \end{array}}
        \sup_{w\in \XX} \alpha(w) \left|
        \frac{1}{m}\sum_{i=1}^m \sigma_i \frac{\bar{\phi}(\K(w,x_i))}{s(x_i)}\right|}_{=M_1}\\ 
        & \quad\quad + \underbrace{\expt_{
        \begin{array}{c}
             x_{1:m}\sim q  \\
             \sigma_{1:m}\sim \{-1,1\} 
        \end{array}}
        \sup_{w\in \XX} \alpha(w) \left|
        \frac{1}{m}\sum_{i=1}^m \sigma_i \frac{\phi(0)}{s(x_i)}\right|}_{=M_2}\\
         & 
        \quad \quad +\frac{1}{k}\underbrace{\expt_{
        \begin{array}{c}
             x_{1:m}\sim q  \\
             \sigma_{1:m}\sim \{-1,1\} 
        \end{array}
        }
        \sup_{w\in \XX} \alpha(w)\|w\|_2^2 \left|
        \sum_{i=1}^m  \frac{\sigma_i}{ms(x_i)}\right|}_{= N} 
    \end{align*}
    Using the provided upper bound for $\alpha(w)$ and Equation~\eqref{ineq:main_upper}, we conclude 
    $M_1\leq 2LE_1 \max\left(2E_1k, \frac{1}{E_2}\right) \sqrt{\frac{S}{m}}
        + 8LkE_1\sqrt{\frac{S}{m}}$ and 
    $M_2\leq \sqrt{\frac{S}{m}}\max\left(4E_1k, \frac{1}{E_2}\right)\phi(0)$.
    Utilizing the bound $\alpha(w)\|w\|_2^2\leq Sk$ and Equation~\eqref{ineq:main_upper}, we derive 
    $N\leq \sqrt{\frac{S}{m}}k$ which completes the proof. 
\end{proof}

The proof of Theorem~\ref{thm:main_monotonic_coreset_R} now follows immediately from Theorem~\ref{thm:main_rademacher1} and Lemma~\ref{lem:main_rademacherupper}.


\subsection{Proof of results in Table~\ref{tab:main_mainresults_R}}
We here only outline the proofs for $\phi = \lgst$. The proofs for other cases share similarities and are presented in full detail in Appendix~\ref{sec:coresets_for_monotonic_functions}. 
As these results are yielded from Theorems~\ref{thm:main_braverman2022new} and~\ref{thm:main_monotonic_coreset_R}, we need to know the upper sensitivity function. We start with a simple observation, providing us with a method to calculate an upper sensitivity function. 
\begin{lemma}\label{lem:main_sensitivity_beta}
For a probability measure $P$ over $\XX$, 
assume that $\W\subseteq \XX$, $\ell: \XX\times \XX \longrightarrow(0,\infty)$, and 
    $\gamma:[0,\infty)\longrightarrow [0,\infty)$ such that $0<\int_\XX \gamma(\|x\|_2)\d P(x)<\infty$ 
     and   
    \begin{equation}\label{def:beta_property}
        \gamma(\|x\|_2)\leq \frac{\ell(x,w)}{\ell(y,w)} \quad\quad \forall x,y\in \XX, w\in \W.
    \end{equation}
Then 
$s(y)= \frac{1}{\int_\XX \gamma(\|x\|_2)\d P(x)}$
is an upper sensitivity function for $(\XX, P, \L_\W)$,
where $\L_\W = \{\ell(\cdot,w)\colon w\in\W\}$.
\end{lemma}

Then we need the following extension of a lemma by \citet{pmlr-v162-tolochinksy22a}.
\begin{lemma}\label{lem:main_beta_property}
    Let $\phi:\R\longrightarrow(0,\infty)$ be a non-increasing function such that  
    \begin{equation}\label{beta_property}
        \frac{\phi(-\alpha z)+\frac{z^2}{k}}{\phi(\alpha z)+\frac{z^2}{k}}\leq \beta(\alpha)\quad 0\leq \alpha  \leq B_1, 0\leq z\leq B_2.
    \end{equation}
    If we set $M=\phi(-B_1B_2)$, then, for each $x,y, w\in \XX$ with $\|x\|, \|y\|\leq B_1, \|w\|\leq B_2$, we have   
    $$\frac{\phi(0)}{M \beta(\|x\|)}\leq \frac{\ell_{\phi,k}(x,w)}{\ell_{\phi,k}(y,w)}.$$
    If $\phi$ is universally bounded by $M$, then 
    we do not need upper bounds $B_1, B_2$ for $\alpha, z$ and consequentially do not need upper bounds for $\|x\|,\|y\|$, and $\|w\|$, and the same statement holds.  
\end{lemma}
With a detailed proof (see  Lemma~\ref{lem:beta_for_logistic}), 
we observe that 
$$\frac{\lgst(-\alpha z)+\frac{z^2}{k}}{\lgst(\alpha z)+\frac{z^2}{k}}\leq \left\{
    \begin{array}{cc}
        \frac{85\alpha^2k}{\log(\alpha^2k)}  &  \alpha^2 k > e\\[.2cm] 
        85               &  \alpha^2 k\leq e,
    \end{array}\right.$$ 
for each $\alpha, z \geq 0.$ Using Lemmas~\ref{lem:main_sensitivity_beta} and~\ref{lem:main_beta_property}, this concludes $s(y) = 1 + \frac{170(1+kA^2)}{\sqrt{\max(1,\log(kA^2))}}$ is an upper sensitivity for $(\R^d,\ P,\ \L_{\lgst,k})$ with probability measure $P$ with $P\left(\left\{x\in\XX\colon \|x\|_2\geq A\right\}\right)= 0$ (see Lemma~\ref{lem:sensitivity_logistic}). 
\begin{theorem}\label{thm:main_coreset_for_logistic}
    Assume that $P$ is probability measure over $\R^d$ such that 
    $P\left(\left\{x\in \XX\colon \|x\|_2\geq A\right\}\right)= 0$,  $S=1 + \frac{170(1+kA^2)}{\sqrt{\max(1,\log(kA^2))}}$, and $C= (2A+1)\max(4Ak, 2.5)+8Ak+1$. 
    For $m \geq \frac{2S}{\eps^2} (8C^2 + S\log\frac{2}{\delta})$, any iid sample $x_1,\ldots,x_m$ from $P$ with weights $u_i = \frac{1}{m}$ provides an $\eps$-coreset for $(\R^d, P, \R^d, \ell_{\lgst, k})$ with probability at least $1-\delta$.
\end{theorem}
\begin{proof}
    We observed that  $s(y) = 1 + \frac{170(1+kA^2)}{\sqrt{\max(1,\log(kA^2))}}$ serves as an upper sensitivity function for $(\R^d,\ P,\ \L_{\lgst,k})$. Since it is a constant function, $s$-sensitivity sampling is equivalent to sampling according to $P$. It is worth noting that $\lgst$ is a $1$-Lipschitz, convex, and decreasing function. 
    For $E_1=A$,  
    we have 
     $$\expt\limits_{x\sim p}\lgst\left(\frac{\|x\|_2}{2E_1}\right)\geq 
     \lgst\left(\frac{1}{2}\right)\geq \frac{2}{5} = E_2.$$
    This implies that Definition~\ref{def:main_well_beaived_P_R} is satisfied for $\L_{\lgst, k}$ and thus  
    Theorem~\ref{thm:main_monotonic_coreset_R} concludes the statement for $m\geq \frac{2S}{\eps^2}\left(8C^2 + S\log\frac{2}{\delta}\right)$. 
\end{proof}
As the upper sensitivity function for $\L_{\lgst, k}$ is constant,  $\ranges(\mathcal{T}_{\L_{\lgst, k}}, \succ) = \ranges(\L_{\lgst, k}, \succ)$: every function in $\mathcal{T}_{\L_{\lgst, k}}$ is a function in $\L_{\lgst, k}$ scaled by a positive constant. Thus
    $\vc(\ranges(\mathcal{T}_{\L_{\lgst, k}}, \succ)) = \vc(\ranges(\L_{\lgst, k}, \succ)).$ 
For an $f_w\in \L_{\lgst, k}$ and $r\geq 0$,
\begin{align*}
    \range(& f_w, \succ, r) 
     = \left\{x \in \R^d \colon f_w(x)> r\right\}\\
    & = \left\{x \in \R^d \colon \lgst(\langle x,w\rangle) + \frac{\|w\|^2}{k} > r\right\}\\
    & = \left\{x \in \R^d \colon \log\left(1+e^{-\langle x,w\rangle}\right) > \underbrace{r - \frac{\|w\|^2}{k}}_{=t}\right\}\\
    & = \left\{\begin{array}{ll}
      \R^d   &  t\leq 0\\ 
       \left\{x\in\R^d\colon \langle x,w\rangle < \log (e^t-1)\right\}  &  t > 0,
    \end{array}\right.
\end{align*}
which concludes that $\ranges(\L_{\lgst, k}, \succ)$ only includes half-spaces and the whole space $\R^d$.  Therefore, by Radon's theorem, $\vc(\ranges(\L_{\lgst, k}, \succ))\leq d+1$. 
Leveraging Theorem~\ref{thm:main_braverman2022new}, we obtain the following theorem.  
\begin{theorem}\label{thm:main_coreset_for_sigmoid_R_d}
    Let $P$ be a probability measure over $\R^d$ such that 
    $P\left(\left\{x\in\XX\colon \|x\|_2\geq A\right\}\right)= 0$ and 
    $S=1 + \frac{170(1+kA^2)}{\sqrt{\max(1,\log(kA^2))}}$.
    There is an $m = O\left(\frac{S}{\eps^2}\left(d\log S + \log\frac{1}{\delta}\right)\right)$ such that 
    any iid sample $x_1,\ldots,x_m$ from $P$ with weights $u_i = \frac{1}{m}$ provides an $\eps$-coreset for $(\R^d,\ P,\ \R^d,\ \ell_{\lgst, k})$ with probability at least $1-\delta$.
\end{theorem}
    
\section{Conclusion and Experimental results}
This paper provides the first no dimensional sampling coresets for classification; they provides relative error for standard loss functions on linear classification with regularization, and an expectation bound on the data norm.  Some results apply to iid samples from a continuous distribution, and hence imply sample complexity bounds.  The key new ingredient is a Radamacher complexity bound for sensitivity sample coresets, which we expect will find further use.  

The appendix shows how these results can be applied to kernelized versions of these problems, and also recover the best sampling bounds for KDE coresets.  

While we do not provide new experimental evidence of the claims, our results are consistent with simulations in many previous papers.   For example  \citet{pmlr-v162-tolochinksy22a} show non-uniform sensitivity sampling slightly out-performing uniform samples; but this does not contradict our results using iid samples since, for example, in those experiments doubling the iid sample size improves upon the non-uniform sample results.


\section*{Acknowledgements}
Thanks to support from NSF CCF-2115677, CDS\&E-1953350, IIS-1816149, and IIS-2311954.  




\section*{Impact Statement}
This paper advances the field of Machine Learning, particularly in understanding the convergence rate and potential compression of classic problem formulations.  While this has potential for a wide variety of impacts, including societal ones, there are none in particular that we feel demand to be highlighted here as specific consequences.

\nocite{langley00}


\newpage
\appendix
\onecolumn
 
\section{Generalized Framework: Reproducing kernel Hilbert Space}
We can establish the validity of our findings in a broader framework by employing reproducing kernels. In particular, the above-mentioned results are automatically derived when considering $\XX = \R^d$ with the linear kernel $\K(x,w) = \langle x, w\rangle$ as a reproducing kernel Hilbert space $H$. For those less familiar with reproducing kernel Hilbert spaces, maintaining this latter setting throughout the entire paper is helpful. 

\subsection{Kernelization of the Results}
The kernel method is known as a powerful technique in machine learning that enables the handling of non-linear relationships in data by implicitly mapping it to a higher-dimensional space. This flexibility and efficiency make it a widely used approach, especially in scenarios where linear methods may not be sufficient.
Reproducing kernel Hilbert spaces (RKHS) are intimately connected to kernel methods. An RKHS is a type of Hilbert space associated with a positive definite kernel function. In the context of machine learning, we can think that the input space is mapped into an RKHS through a feature map. Broadly, the RKHS framework provides a mathematical foundation for understanding how non-linear relationships in data can be effectively captured through implicit mappings into higher-dimensional spaces. The key insight in the context of kernel methods is that the kernel function implicitly represents the inner product in the Hilbert space, for more see~\cite{Mercer1909, aronszajn50reproducing, BGV1992,SC2008,SS2018}. The following definition extends Definition~\ref{def:main_well_beaived_P_R} to the context of RKHS cases. 

\begin{definition}\label{def:well_beaived_P1}
Let $H$ be a reproducing kernel Hilbert space of real-valued functions on $\XX$ with kernel $\K\colon \XX\times \XX\longrightarrow \R$, $P$ a probability measure over $\XX$, $\phi\colon \R\longrightarrow [0,\infty)$ a non-increasing $L$-Lipschitz function, and $k>0$ a constant.  We say $P$ is \emph{well-behaved} (w.r.t $\phi$ and $\K$) if 
\begin{equation}
    \expt\limits_{x\sim P}\K(x, x) \leq E^2_1 \qquad\qquad \text{and} \qquad\qquad 
    \expt\limits_{x\sim P}\phi\left(\frac{\sqrt{\K(x, x)}}{2E_1}\right) \geq E_2,
\end{equation}
where $E_1, E_2$ are two positive constants.  
Define 
\begin{equation}\label{Phi_family_H}
    \L^H_{\phi, k} = \left\{\ell^H_{\phi, k}(w,\cdot) = \phi(\K(w, \cdot)) + \frac{1}{k}\K(w,w) \colon w\in\XX\right\}
\end{equation} 
and 
\begin{equation}\label{Phi_bar_family_H}
    \bar{\L}^H_{\phi, k} = \left\{\bar{\ell}^H_{\phi, k}(w,\cdot) = \phi(-\K(w, \cdot)) + \frac{1}{k}\K(w,w) \colon w\in\XX\right\}
\end{equation} 
To keep things simpler, we occasionally employ $\|X\|_H$ and $\langle \cdot, \cdot \rangle_H$ instead of $\sqrt{\K(x, x)}$ and $\K(\cdot,\cdot)$ respectively.
\end{definition} 

\begin{remark}\label{rem:minus_K}
 Throughout the paper, we mainly focus on coresets for $\L^H_{\phi, k}$. However, we will see that our analysis effortlessly extends to $ \bar{\L}^H_{\phi, k}$. This consideration is insignificant when dealing with the standard inner product as the kernel, given that  $-\langle w,\cdot\rangle = \langle\cdot, -w\rangle$. However, this property does not hold for kernels in general.   
\end{remark}

\begin{theorem}[Theorem~\ref{thm:main_monotonic_coreset_R}, Kernelized Representation]\label{thm:monotonic_coreset1}
For a well-behaved probability measure $P$ with respect to a Hilbert space $H$ of real-valued functions defined on $\XX$ with kernel $\K\colon \XX\times \XX\longrightarrow \R$, if
$s(\cdot):\XX\longrightarrow(1,\infty)$ is an upper sensitivity function for $\L^H_{\phi, k}$ with total sensitivity $S$ and $m\geq \frac{2S}{\eps^2}\left(8C^2 + S\log\frac{2}{\delta})\right)$,  then any $s$-sensitivity sample $x_1,\ldots,x_m$ from $\XX$ with 
weights $u_i = \frac{S}{m s(x_i)}$
provides an $\eps$-coreset for $(\XX,\ P,\ \XX,\ \ell^H_{\phi, k})$ with probability at least $1-\delta$,
where $$C=  \left(2LE_1+\phi(0)\right)\max\left(2E_1k, \frac{1}{E_2}\right) + 8LkE_1 + 1.$$ 
The statement remains true if we replace $\L^H_{\phi, k}$ and  $(\XX,\ P,\ \XX,\ \ell^H_{\phi, k})$ by  $\bar{\L}^H_{\phi, k}$ and $(\XX,\ P,\ \XX,\ \bar{\ell}^H_{\phi, k})$.
\end{theorem}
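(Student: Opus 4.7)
The plan is to invoke the Rademacher-complexity sensitivity sampling bound, Theorem~\ref{thm:main_rademacher1}, on the positive definite tuple $(\XX, P, \L^H_{\phi, k})$ with the given sensitivity function $s$, and to show that $\RR^q_m(\mathcal{T}_{\L^H_{\phi, k}})$ is controlled by a constant proportional to the $C$ in the statement divided by $\sqrt{m}$. Requiring this deterministic Rademacher term to be at most $\eps/2$ will contribute the $8C^2$ summand of the sample bound, while choosing the slack $t$ so that the concentration tail $2\exp(-2mt^2/S)$ drops below $\delta$ produces the $S\log(2/\delta)$ summand. Writing $f_w(x) = \phi(\K(w,x)) + \K(w,w)/k$ and $\alpha(w) = \expt_{x \sim P} f_w(x)$, each element of the augmented family takes the form
\[T_w(x) = \frac{S\phi(\K(w,x))}{s(x)\alpha(w)} + \frac{S\K(w,w)/k}{s(x)\alpha(w)},\]
and subadditivity of Rademacher complexity lets me bound the two pieces separately.

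For the regularizer piece, the coefficient $\gamma(w) = \K(w,w)/(k\alpha(w))$ lies in $[0,1]$ since $\alpha(w) \geq \K(w,w)/k$, so the $w$-dependence factors out of the fixed $x$-function $S/s(x)$; using $\expt_Q(S/s)^2 = S\expt_P(1/s) \leq S$ (valid because $s \geq 1$), this piece contributes at most $O(\sqrt{S/m})$. For the $\phi$-piece, the main obstacle is that $\alpha(w)$ appears in the denominator and depends on $w$, which blocks a direct application of Ledoux--Talagrand contraction and of standard RKHS Rademacher bounds. I would resolve this by casework on $\|w\|_H$: when $\|w\|_H \leq 1/(2E_1)$, the monotonicity of $\phi$ together with $\K(w,x) \leq \|w\|_H\|x\|_H$ and the well-behaved hypothesis give
\[\alpha(w) \geq \expt_{x \sim P}\phi(\|x\|_H/(2E_1)) \geq E_2,\]
while for $\|w\|_H > 1/(2E_1)$ the regularizer alone forces $\alpha(w) > 1/(4E_1^2 k)$. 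Thus $1/\alpha(w) \leq \max(1/E_2, 4E_1^2 k)$, which is precisely what the $\max(2E_1 k, 1/E_2)$ factor of $C$ records. Contraction then peels off $\phi$ at cost $L$, reducing the $\phi$-piece to the Rademacher complexity of the kernel-linear class $\{x \mapsto \K(w,x)\}$, bounded by $\|w\|_H\sqrt{\expt_Q \K(x,x)/m}$ via the standard RKHS argument; the supremum over $w$ is closed by trading $\|w\|_H$ against $1/\alpha(w)$ in each regime (large $\|w\|_H$ gives $\|w\|_H/\alpha(w) \leq k/\|w\|_H$, small $\|w\|_H$ gives $\|w\|_H/\alpha(w) \leq 1/(2E_1 E_2)$) and combined with the envelope $\phi(t) \leq \phi(0) + L|t|$ these assemble into the stated $C$.

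The subtlest point is the $w$-dependent normalization $\alpha(w)$: the two-regime split driven by the well-behaved property is what eliminates any dimensional factor and yields a no dimensional coreset. Once both $\RR^q_m(\mathcal{T}_{\L^H_{\phi, k}}) \leq \eps/4$ and $t \leq \eps/2$ are achieved at the stated $m$, Theorem~\ref{thm:main_rademacher1} gives the $\eps$-coreset property with probability at least $1 - \delta$. The extension to $\bar{\L}^H_{\phi, k}$ is immediate because replacing $\K(w,\cdot)$ by $-\K(w,\cdot)$ preserves the bound $|\K(w,\cdot)| \leq \|w\|_H \|\cdot\|_H$ and the $L$-Lipschitz property of $\phi$, so every intermediate inequality carries over unchanged, as foreshadowed in Remark~\ref{rem:minus_K}.
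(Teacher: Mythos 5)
Your proposal follows the paper's own route at the top level: invoke Theorem~\ref{thm:main_rademacher1}, then bound $\RR^q_m(\mathcal{T}_{\L^H_{\phi,k}})$ by a constant times $\sqrt{S/m}$ via subadditivity, Talagrand contraction, and an RKHS Rademacher bound, using the well-behaved hypothesis to control the normalization $\alpha(w)$. Your treatment of the regularizer piece is correct and matches the paper's term $N$: the coefficient $\K(w,w)/(k\,\alpha(w))$ is uniformly at most $1$, so the $w$-dependence drops and $\expt_Q|\sum_i\sigma_i/s(x_i)| \le \sqrt{m/S}$ closes it. Your bounds $\alpha(w)\ge E_2$ on the small-$\|w\|_H$ regime and $\alpha(w)\ge 1/(4E_1^2k)$ on the large regime are also exactly the paper's intermediate inequalities.

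The gap is in the $\phi$-piece. You propose to apply contraction and the RKHS bound, producing ``$\|w\|_H\sqrt{\expt_Q\K(x,x)/m}$,'' and then to ``close the supremum over $w$ by trading $\|w\|_H$ against $1/\alpha(w)$.'' This step does not survive as written. Corollary~\ref{cor:Talagrand} can only peel off $\bar\phi$ at cost $L$ when the weighting in front of $|\sum_i\sigma_i\bar\phi(\K(w,x_i))/s(x_i)|$ is a constant pulled outside the supremum; it cannot absorb the $w$-dependent factor $S/\alpha(w)$. So before contraction you must upper-bound $S/\alpha(w)$ by a constant, which forces a restriction to a domain where $\|w\|_H$ is also bounded (otherwise the subsequent bound $\sup_w\|w\|_H\cdot E_1/\sqrt{S m}$ is infinite). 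With only your two regimes, the second regime $\{\|w\|_H>1/(2E_1)\}$ is unbounded and the argument does not close. The paper's proof of Lemma~\ref{lem:rademacherupper} resolves this by slicing the large regime into dyadic annuli $\{2^{n-1}\le\|w\|_H\le 2^n\}$: within the $n$-th annulus both $S/\alpha(w)\le kS/4^{n-1}$ and $\|w\|_H\le 2^n$ hold simultaneously, contraction applies on a bounded set, and the resulting per-annulus contributions form a geometric series $\sum_{n\ge 1}2^{-(n-3)}$ whose sum is the $8LkE_1$ term in $C$. Your ``trading'' inequality $\|w\|_H/\alpha_{\text{student}}(w)\le k/\|w\|_H$ is the right observation, but it must be applied annulus by annulus rather than as a single pass, because a Rademacher complexity takes a supremum inside the expectation and you cannot retroactively weight the resulting envelope by a $w$-dependent quantity. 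A localized/dyadic peeling device (or an equivalent) is essential, and it is missing from your sketch.

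Everything else — the $\phi(0)$-shift to make $\bar\phi(0)=0$, the use of $\expt_Q\|\sum_i\sigma_i\K(x_i,\cdot)/s(x_i)\|_H\le E_1\sqrt{m/S}$, setting $t$ so the McDiarmid tail is below $\delta$ and the deterministic Rademacher term below $\eps$, and the remark that everything carries over verbatim to $\bar\L^H_{\phi,k}$ — is consistent with the paper's proof.
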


Considering $\R^d$ as a Hilbert space equipped with linear kernel, we derive Theorem~\ref{thm:main_monotonic_coreset_R} as a corollary of this theorem.  In the following, we present various applications derived from this theorem.

\begin{theorem}
    Let us consider $H$, a reproducing kernel Hilbert space of real-valued functions defined on $\XX$ with kernel $\K\colon \XX\times \XX\longrightarrow \R$, and $P$, a probability measure over $\XX$. Given the conditions outlined in the initial $4$ columns of Table~\ref{tab:mainresults}, any $s$-sensitivity sample of size $m$ (sampling according to the distribution provided in Column 5) guarantees an $\eps$-coreset for $(\XX, P, \XX, \L^H_{\phi, k})$ with a probability of at least $1-\delta$.
\end{theorem}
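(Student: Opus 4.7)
The plan is to establish the theorem one row of Table~\ref{tab:mainresults} at a time, each row consisting of a loss $\phi \in \{\sigma, \lgst, \svm, \relu\}$ together with a regularity hypothesis on $P$ (either bounded support $P(\|x\|_H \geq A) = 0$, or a bounded second moment $\expt_{x\sim P}\|x\|_H^2 \leq E_1^2$). For each row, the argument proceeds in four steps: (i) exhibit an explicit upper sensitivity function $s\colon \XX\to(1,\infty)$ for $\L^H_{\phi,k}$; (ii) integrate $s$ against $P$ to recover the total sensitivity $S$ listed in column three; (iii) identify the constant $C$ entering Theorem~\ref{thm:monotonic_coreset1} with the quantity in column four; (iv) invoke Theorem~\ref{thm:monotonic_coreset1} directly for the rows whose sample size has the form $\frac{2S}{\eps^2}(8C^2 + S\log\frac{2}{\delta})$, or Theorem~\ref{thm:main_braverman2022new} after a VC-dimension bound on $\ranges(\TT_{\L^H_{\phi,k}},\succ)$ for the alternative rows whose size has the form $\frac{CS}{\eps^2}(d\log S + \log\frac{1}{\delta})$. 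The claim for $\bar{\L}^H_{\phi,k}$ follows by the same argument in view of Remark~\ref{rem:minus_K}.

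Step (i) is where most of the real work lies. For any $w$, the denominator $\int \ell^H_{\phi,k}(w,z)\,dP(z)$ contains the regularizer $\frac{1}{k}\|w\|_H^2$ plus the data term $\expt_{z\sim P}\phi(\K(w,z))$, which I lower-bound in two regimes. When $\|w\|_H$ is large, the regularizer dominates, and Lipschitzness of $\phi$ gives $\phi(\K(w,x)) \leq \phi(0) + L\|w\|_H\|x\|_H$, so the pointwise ratio $\ell^H_{\phi,k}(w,x)/\int \ell^H_{\phi,k}(w,z)\,dP$ is controlled by a function that is polynomial in $\|x\|_H$ with a single $k$ factor. When $\|w\|_H$ is small, the well-behaved assumption $\expt_{z\sim P}\phi(\|z\|_H/2E_1) \geq E_2$ combined with Markov's inequality applied to $\|z\|_H$ gives a constant lower bound on the data term independent of $w$. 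Balancing the two regimes produces a sensitivity bound of the schematic form $s(x) \leq 1 + k\cdot Q_\phi(\|x\|_H)$ for a low-degree polynomial $Q_\phi$.

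Step (ii) then integrates $s$ against $P$. For $\sigma,\svm,\relu$ the relevant $Q_\phi$ is essentially quadratic, so either $S = O(E_1^2 k)$ under the second-moment assumption or $S = O((1+A)^2 k)$ under bounded support. The logistic case is the one I expect to be most delicate: the ratio $\lgst(t)/\lgst(-t)$ is unbounded as $t \to \infty$, so a crude Lipschitz argument is wasteful. To recover the advertised $\sqrt{\log(E_1^2 k)}$ improvement in the denominator one must exploit the exponential decay $\lgst(t) \leq e^{-t}$ for large positive $t$ together with the fact that the regularizer effectively caps $\|w\|_H$ on the scale $\sqrt{k}$. This is the main obstacle in the proof, and it also dictates why the logistic case gets a separate row in the table.

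Finally, step (iv) is mostly bookkeeping. The rows with sampling distribution $p$ in column five are precisely those for which $s$ is uniformly bounded by an absolute constant, so $\frac{s}{S}p$ coincides (up to a bounded reweighting) with iid sampling from $P$, and Theorem~\ref{thm:monotonic_coreset1} applies directly. For the rows that must sample from $\frac{s}{S}p$, which appear when the support of $P$ is unbounded, a direct Rademacher calculation runs into integrability issues for $\TT_{\L^H_{\phi,k}}$; here the route through Theorem~\ref{thm:main_braverman2022new} is more convenient, and requires bounding $\vc(\ranges(\TT_{\L^H_{\phi,k}},\succ))$ by $O(d)$ or $O(d^2)$ using the standard fact that the graph of a linear function in a $d$-dimensional feature space has VC-dimension $O(d)$. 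All remaining steps are either immediate from the cited theorems or routine Lipschitz and moment calculations.
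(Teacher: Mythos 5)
You have two distinct misreadings of what the theorem actually asserts, plus one substantive gap in the logistic case.

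First, Table~\ref{tab:mainresults} (the kernelized table) does not contain $\relu$, and every row in it has the Rademacher-based sample size $\frac{2S}{\eps^2}(8C^2 + S\log\frac{2}{\delta})$ with a reference to one of Theorems~\ref{thm:coreset_for_sigmoid}, \ref{thm:coreset_for_logistic}, \ref{thm:coreset_for_logistic_unbounded}, \ref{thm:coreset_for_svm}, \ref{thm:coreset_for_svm_unbounded}; there is no row of the VC form $\frac{CS}{\eps^2}(d\log S + \log\frac1\delta)$ in this table. You have imported $\relu$ and the VC branch from Table~\ref{tab:main_mainresults_R}, which concerns $\R^d$ with the linear kernel, not the RKHS setting of the statement at hand. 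The claim in your step~(iv) that the $\frac{s}{S}p$-sampling rows (unbounded support) ``run into integrability issues'' under the Rademacher route is also wrong as well as inconsistent with the table: Theorems~\ref{thm:coreset_for_logistic_unbounded} and~\ref{thm:coreset_for_svm_unbounded} are proved precisely by computing a non-constant upper sensitivity (Lemmas~\ref{lem:sensitivity_logistic_soft_bound} and~\ref{lem:sensitivity_svm_hard_bound}) and then invoking Theorem~\ref{thm:monotonic_coreset1} --- whose hypotheses only require $s(\cdot)\geq 1$ and that $P$ be well-behaved, both of which hold. Moreover, bounding $\vc(\ranges(\TT_{\L^H_{\phi,k}},\succ))$ by ``$O(d)$ for a linear function in a $d$-dimensional feature space'' does not make sense for a general RKHS, which can be infinite-dimensional; the paper's VC arguments (Radon's theorem, $l$-simply computability) are only used in the Euclidean table.

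Second, on the sensitivity computation itself, your step~(i) is a genuinely different (and cleaner-looking) route from the paper's. The paper derives the sensitivity uniformly through ratio functions $\beta_\phi$ satisfying $\frac{\phi(-\alpha z)+z^2/k}{\phi(\alpha z)+z^2/k}\leq\beta_\phi(\alpha)$ (Lemmas~\ref{lem:beta_for_sigmoid}, \ref{lem:beta_for_logistic}, \ref{lem:beta_for_svm}), then funnels them through Lemmas~\ref{lem:beta_property} and~\ref{lem:sensitivity_beta}; a separate direct estimate handles the $\|w\|_H\geq B_2$ regime. Your regime-split plus Lipschitz argument plausibly reproduces the $O(E_1^2 k)$ totals for $\sigma$ and $\svm$, so for those rows the approach could work. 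But your treatment of logistic is where the gap lies: the $\sqrt{\log(E_1^2 k)}$ saving in $S$ is the whole point of the separate logistic rows, and it is obtained in the paper by a careful critical-point analysis of $f_K(t)=\frac{\lgst(-t)+t^2/K}{\lgst(t)+t^2/K}$ (locating the maximizer in $[0.4\log K,\ 2\log K]$ and bounding $f_K$ there by $25+\frac{75K}{4\log K}$, Lemma~\ref{lem:beta_for_logistic}). Your remark that one must ``exploit the exponential decay $\lgst(t)\leq e^{-t}$'' identifies the right phenomenon but is not a proof --- a Lipschitz-plus-caps argument gives only $O(E_1^2 k)$ without the logarithmic improvement, so without carrying out the analogue of Lemma~\ref{lem:beta_for_logistic} the logistic rows remain unproved.
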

The first five columns of each row in Table~\ref{tab:mainresults} outline assumptions concerning $H$ and $P$, while the last two columns detail sample complexity and the respective theorem in the paper where the result is introduced. Specifically, under the conditions specified in the initial four columns, a sample of size $m$ (as per the sampling distribution in Column 5) produces an $\eps$-coreset with a probability of $1-\delta$. 
It is worth emphasizing once more that the utilization of 
$p$ in Column 5 signifies that sampling is directly conducted from the data distribution, specifically through uniform sampling from the provided data points. This eliminates the necessity for sensitivity computations in this context. 

{\footnotesize 
\begin{table}
    \centering
    \caption{Coreset size for $(\XX, P, \XX, \L^H_{\phi, k})$ where $P$ is a probability measure over $\XX$ and $H$ is an RKHS.}
    \begin{tabular}{cccccccc}
    \hline
       function $\phi$ & data assumption & total sens. $S$ 
       & constant $C$ & sampling  & Coreset size & reference\\
    \hline
         $\sigma(\pm\cdot)$ & $\expt\limits_{x\sim P}\|x\|_H^2 \leq E^2_1$ & $O(E_1^2k)$ & $O(E_1^2k)$ & $p$ & $\frac{2S}{\eps^2} (8C^2 + 
         S\log\frac{2}{\delta})$ & Thm~\ref{thm:coreset_for_sigmoid}\\ 
    \hline 
        $\lgst(\pm\cdot)$ & $P\left(\|x\|_H\geq A\right)= 0$ & $O(\frac{A^2k}{\sqrt{\log(A^2k)}})$ & $O(A^2k)$ & $p$ & $\frac{2S}{\eps^2} (8C^2 + S\log\frac{2}{\delta})$ &  Thm~\ref{thm:coreset_for_logistic} \\
        $\lgst(\pm\cdot)$ & $\expt\limits_{x\sim P}\|x\|_H^2 \leq E_1$ & $O(\frac{E_1^2k}{\sqrt{\log(E_1^2k)}})$ & $O(E_1^2k)$ & $\frac{s}{S}p$ & $\frac{2S}{\eps^2}\left(8C^2 + S\log\frac{2}{\delta})\right)$ & Thm~\ref{thm:coreset_for_logistic_unbounded} \\ 
    \hline
        $\svm(\pm\cdot)$ & $P\left(\|x\|_H\geq A\right)= 0$ & $O(A^2k)$ & $O(A^2k)$ & $p$ & $\frac{2S}{\eps^2} (8C^2 + S\log\frac{2}{\delta})$ & Thm~\ref{thm:coreset_for_svm} \\[.1cm]
        $\svm(\pm\cdot)$ & $\expt\limits_{x\sim P}\|x\|_H^2 \leq E_1$ & $O(E_1^2k)$ & $O(E_1^2k)$ & $\frac{s}{S}p$ & $\frac{2S}{\eps^2}\left(8C^2 + S\log\frac{2}{\delta})\right)$ & Thm~\ref{thm:coreset_for_svm_unbounded}\\
    \hline 
    \end{tabular}
    \label{tab:mainresults}
\end{table}
}
\subsubsection{\bf Kernel Density Estimate}
Let $P$ be a probability measure over $\R^d$ and $\K:\R^d\times \R^d\longrightarrow \R$ be a kernel, for instance the Gaussian kernel $\K(x,w) = \exp(-\|x-w\|^2)$.
At any point $w\in\R^d$, the kernel density estimate is defined ad $\kde_P(w) = \int_{x\in \R^d}\K(x,w)\d P(x).$ 
When we have finite number of points $X=\{x_1,\ldots,x_n\}$ given as the data set, we can assume that $P$ is a uniform probability measure over $X$, i.e.,
$P(x=x_i) = \frac{1}{n}$. In this scenario, 
we use $\kde_X$ instead of $\kde_P$ and and the expression for $\kde_X$ is given by  
$$\kde_X(w) = \frac{1}{n} \sum_{x\in X} \K(x, w).$$
The evaluation of $\kde_X$ demands O(n) time, which can become impractical for massive datasets. Therefore, a frequently employed approach is to substitute $X$ with a significantly smaller dataset $Y$, allowing $\kde_Y$ to function as an approximation for $\kde_X$. Formally, for a given $\eps\in(0,1)$, we look for a small size $Y$ such that $$\sup_{w\in\R^d}|\kde_X(w) - \kde_Y(w)|\leq \eps.$$ This challenge has been thoroughly explored and investigated in various studies such as~\cite{PT2020, Tai2022, charikar2024quasimonte, BMT2017, Taylor2018, GBR2018, PWZ2015, RW2010, Scott2015, Silverman86, ZP2015}. Our primary result in this context is presented in the following theorem, restated as Corollary~\ref{cor:kde} along with its proof. 
\begin{theorem}\label{thm:kde}
    Let $\K:\R^d\times \R^d\longrightarrow (0,1]$ be a reproducing kernel, i.e., a kernel associated with an RKHS, and $P$ be a probability measure over $\R^d$. For $\eps,\delta\in(0,1)$, there exists a universal constant $C$ (independent of $d$ and $\K$) such that if $m \geq \frac{C}{\eps^2}\log \frac{1}{\delta}$, then, with probability at least $1-\delta$, for any iid random sample $X=\{x_1,\ldots,x_m\}$ based on $P$, we have 
    $$\sup_{w\in\R^d}|\kde_P(w) - \kde_X(w)|\leq \eps.$$
\end{theorem}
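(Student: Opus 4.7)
Writing $f_w(x)=\K(x,w)$, the theorem asks for uniform additive control over the class $\FF=\{f_w:w\in\R^d\}$ of $[0,1]$-valued functions, since $\sup_{w}|\kde_P(w)-\kde_X(w)|=\sup_{f\in\FF}|\expt_P f-\frac{1}{m}\sum_i f(x_i)|$. The plan is to (1) reduce this uniform deviation to a Rademacher complexity by symmetrization plus a bounded-differences tail bound, and then (2) bound that Rademacher complexity using only the reproducing property of $\K$ and the pointwise assumption $\K\leq 1$, which is exactly why no dependence on $d$ ever appears. Note that the sensitivity-sampling framework in the body of the paper delivers \emph{multiplicative} error and requires a finite upper sensitivity $s(x)\geq \sup_{w}\K(x,w)/\kde_P(w)$, which blows up for $w$ far from the support of $P$ where $\kde_P(w)$ is tiny, so one should not try to invoke Theorem~\ref{thm:main_rademacher1} directly.

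For step (1), each $f\in\FF$ takes values in $[0,1]$, so the random variable
\[
g(x_1,\ldots,x_m)=\sup_{w\in\R^d}\bigl|\kde_P(w)-\tfrac{1}{m}\textstyle\sum_i\K(x_i,w)\bigr|
\]
has bounded differences $1/m$ in each coordinate. McDiarmid's inequality then gives $g\leq \expt g+\sqrt{\log(1/\delta)/(2m)}$ with probability at least $1-\delta$, and the standard symmetrization argument (together with the routine step $\sigma\mapsto -\sigma$ needed to pass from the absolute value to $\RR_m^P(\FF)$ as defined in the preliminaries) yields $\expt g\leq 4\RR_m^P(\FF)$.

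For step (2), the key is the reproducing identity $\K(x,w)=\langle \K(\cdot,x),\K(\cdot,w)\rangle_H$ in the RKHS $H$ of $\K$, together with $\|\K(\cdot,w)\|_H=\sqrt{\K(w,w)}\leq 1$. For any $x_1,\ldots,x_m$ and any sign pattern $\sigma\in\{\pm 1\}^m$, Cauchy-Schwarz gives
\[
\sup_{w}\sum_i\sigma_i\K(x_i,w)=\sup_{w}\Bigl\langle \textstyle\sum_i\sigma_i\K(\cdot,x_i),\,\K(\cdot,w)\Bigr\rangle_H\leq \Bigl\|\textstyle\sum_i\sigma_i\K(\cdot,x_i)\Bigr\|_H,
\]
and Jensen's inequality combined with $\expt_\sigma\bigl\|\sum_i\sigma_i\K(\cdot,x_i)\bigr\|_H^2=\sum_i\K(x_i,x_i)\leq m$ then yields $\RR_m^P(\FF)\leq 1/\sqrt{m}$, with no dependence on $d$ or on the specific choice of kernel beyond the uniform bound $\K\leq 1$.

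Combining the two steps, $\sup_w|\kde_P(w)-\kde_X(w)|\leq 4/\sqrt{m}+\sqrt{\log(1/\delta)/(2m)}$ with probability at least $1-\delta$, which is at most $\eps$ as soon as $m\geq C\log(1/\delta)/\eps^2$ for a universal constant $C$. The main point to check with care is measurability of the suprema so that McDiarmid applies; this is handled either by the Cauchy-Schwarz majorant above (which is itself measurable) or by appealing to separability of $H$, consistently with the remark in the preliminaries.
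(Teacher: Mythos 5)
Your proof is correct, but it follows a genuinely different route from the paper's. The paper derives Theorem~\ref{thm:kde} as Corollary~\ref{cor:kde} of its $\svm$ coreset theorem (Theorem~\ref{thm:coreset_for_svm}): since $\K\in(0,1]$, the loss $\ell^H_{\svm,1}(x,w)=\max(0,1-\K(x,w))+\K(w,w)=1-\K(x,w)+\K(w,w)$ is strictly positive and bounded by $2$, so the multiplicative $\eps/2$-coreset guarantee of the sensitivity-sampling framework applies with $O(1)$ total sensitivity; the $w$-only terms $1+\K(w,w)$ cancel between the integral and the sample average, and the remaining multiplicative error converts to the additive bound $\eps$ because $\int \ell^H_{\svm,1}(x,w)\,\d P\leq 2$. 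That added positive constant is exactly what repairs the obstruction you correctly flag, namely that the sensitivity of $f_w(x)=\K(x,w)$ alone blows up where $\kde_P(w)$ is tiny. Your argument skips the regularized-classification detour and controls the uniform deviation of $\FF=\{\K(\cdot,w):w\in\R^d\}$ directly: McDiarmid with $1/m$ bounded differences plus symmetrization, and then the reproducing identity $\K(x,w)=\langle\K(\cdot,x),\K(\cdot,w)\rangle_H$ with Cauchy--Schwarz and Jensen giving $\RR^P_m(\FF)\leq 1/\sqrt m$. That is the classical, self-contained argument (essentially the one in \citealt{pmlr-v37-lopez-paz15}, which the paper cites), and it is shorter and makes explicit the dimension-free mechanism, namely that unit-norm RKHS functions have Rademacher complexity $O(1/\sqrt m)$. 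The paper's route trades that directness for showing its general framework subsumes the KDE sampling bound as a special case. One small shared caveat: both arguments actually need $m\gtrsim 1/\eps^2$ even when $\log(1/\delta)$ is near zero, so the stated sample bound is more accurately $m\geq\frac{C}{\eps^2}\bigl(1+\log\frac1\delta\bigr)$; this imprecision appears in the paper's statement as well and is not a defect of your proof.
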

Note that here, we treat $d$ as a variable, and importantly, our bound is independent of $d$. 
The Gaussian, Laplacian, and Exponential kernels are typically chosen for study.
Compared to the state-of-the-art results, we can reproduce the findings of~\cite{pmlr-v37-lopez-paz15} and~\cite{lacostejulien:hal-01099197}. Nevertheless, there are some improvements in these results, such as works by~\cite{PT2018, PT2020, pmlr-v99-karnin19a}, and more recent papers by~\cite{Tai2022, charikar2024quasimonte}. It is worth noting that some of these results consider $d$ as a constant, while others treat it as a variable. 

\section{Related Works}
In machine learning, a common scenario involves having a set of $n$ data points and the goal of minimizing a data-driven objective loss function.
At times, for scalability reasons, it becomes necessary to choose a smaller subset of $m \ll n$ points. The aim is to minimize the objective function on these points, potentially using non-uniform weights for the selected points. This approach aims to achieve a near-optimal solution for the entire data set.
Coresets are an important tool in scalable machine learning, providing a means to achieve this objective. 
Coresets have found application in various problem domains, such as  clustering~\citep{BHP2002, HM2004, GC2005, FS2008, FL2011, FS2012, FSS2013, BLUZ2019, HV2020, FSS2020}, linear regression~\citep{DMM2006, DDH2009, CWW2019}, principal component analysis~\citep{CEMMP2015, FSS2020}, and more~\citep{Bachem2017PracticalCC, sener2018active, Munteanu18, SW2018, PhT2018, HJLW2018, ABB2019, Mussay2020Data-Independent}.

Improving the prior result by~\cite{FL2011} which suggests that common sensitivity sampling requires $O(dS^2)$ samples, \cite{braverman2022new} and~\cite{FSS2020} proved that $O(dS\log S)$ would be sufficient. Here, $S$ represents the total sensitivity, and $d$ is some VC-dimension associated with the function space. The proof of these results benefits from a connection between 
approximating a class of functions and approximating their linked-ranges spaces, the same concept established by~\cite{JKPV2011} in the study of $\eps$-coresets for kernel density estimates. It is important to note that these results are subject to the assumption that the space is finite. We eliminate this constraint in Theorem~\ref{thm:main_braverman2022new} with a different and more straightforward proof outlined in Section~\ref{prf:ofThmvcDim}. It is notable to highlight that the coreset bounds derived from this theorem are all dependent on the dimension. More importantly, we provide a counterpart of this theorem (see Theorem~\ref{thm:main_rademacher1}) using Rademacher complexity in place of VC-dimension, marking the first result of this kind. As a result, we are able to present several coreset bounds independent of the dimension -- the first such results of this type (see Table~\ref{tab:main_mainresults_R}). In the following, we focus more on the related works pertaining to these findings. 

Our work shares some connections with the research conducted by~\cite{Munteanu18, mai2021coresets, pmlr-v162-tolochinksy22a}, and~\cite{pmlr-v108-samadian20a}. \cite{Munteanu18} demonstrated the existence of datasets for which coresets of sublinear size do not exist. They introduced a complexity measure $\mu(X)$ for data points $X$, quantifying the difficulty associated with compressing a dataset for logistic regression. They demonstrated the existence of coresets with size $O^*(\frac{d^{3/2}\mu(X)\sqrt{|X|}}{\eps^2})$ and $O^*(\frac{d^3\mu^2(X)}{\eps^4})$ through a random sampling procedure ($\O^*(\cdot)$ hides some logarithmic factors in terms of the problem's parameters). As a downside of their method, it is not clear how to compute $\mu(X)$ and also they conjectured that computing the value of $\mu(X)$ in general is hard. Moreover, their second bound is roughly quadratic in terms of $\mu(X)$ and $\frac{1}{\eps^2}$ when the data possesses a small $\mu$-complexity. 
Following their work, \cite{mai2021coresets} improved these bounds to $O^*(\frac{d \mu^2(X)}{\eps^2})$ which is linear in terms of $\frac{1}{\eps^2}$. Their method relies on subsampling data points with probabilities proportional to their $\ell_1$ Lewis weights. As they utilize the complexity measure $\mu(X)$ in their context, which cannot be directly interpreted in terms of our variables, presenting a meaningful comparison between our findings and theirs becomes challenging. Nevertheless, as an advantage, some of our bounds are independent of dimension, surpassing their results in this particular aspect.   

The most closed works to ours are~\cite{pmlr-v162-tolochinksy22a} and~\cite{pmlr-v108-samadian20a}. With an almost similar setting as ours, but restricted to Euclidean space and with a bounded parameter space, \cite{pmlr-v162-tolochinksy22a} found the coresets of size $O^*\left(\frac{kd^2\log n\log k}{\eps^2}\right) $ for 
$\phi = \lgst, \sigma, \svm$. Enhancing some of these results, \cite{pmlr-v108-samadian20a} conducted an analysis of sampling-based coreset constructions designed for regularized loss minimization problems (logistic regression or SVM). Their context is slightly distinct from ours. To rephrase their setting in the context of ours, we can express that in our setting, the regularization term is $\frac{1}{k}$ whereas they roughly use $\frac{1}{n^{1-\kappa}}$ for it (up to a constant).
In the scenario where $k$ is proportional to $n^{1-\kappa}$ for a fixed $\kappa \in (0,1)$, they have demonstrated that a uniform sample of size $O^*\left(\frac{dn^{1-\kappa}}{\epsilon^2}\right)$ functions as a coreset with high probability. Here, $n = |X|$ and  $d$ represents a type of VC-dimension linked to the function space (refer to the paper for the precise definition).They have further established the tightness of uniform sampling, accurate up to poly-logarithmic factors. A special case of our results not only reproduces their results but also improves upon theirs for logistic regression. It is still noteworthy that their bounds are dimension-dependent, whereas we present some bounds independent of dimension. For a comprehensive comparison, refer to Table~\ref{tab:main_comparison1}.

A few other variants of coresets for classification exist; we briefly mention a few.    
\citet{pmlr-v139-munteanu21a} applies sketching to create data compression for logistic regression.  Their methods are data oblivious and show improvements for sparse high-dimensional vectors.  These results are still polynomial in data parameter $\mu(X)$ and dimension $d$.  
\citet{mirzasoleiman2020coresets} considers coresets for regularized classification in the context of incremental gradient descent.  They greedily select a coreset to be used in IGD, and their results use that the $\frac{1}{k}\|w\|^2$ regularizer makes loss function strongly convex as a function of $k$.  
Coresets can also be found for the $0/1$ mis-classification function of size $O(d/\eps^2)$ via classic VC-dimension arguments, but as this cost function is non-convex over the model parameters $w$, the best known algorithms \cite{matheny2021approximate} to solve for the approximately optimal solution are still exponential in dimension $O^*(1/\eps^d)$.  
Or if a dataset is known to be linearly separable, then greedy coresets based on Frank-Wolfe can be found of size roughly $O(1/\eps)$ which approximately preserve that separation margin~\cite{tsang2005core,gartner2009coresets}.

\section{Main Tools}
This section consists of our primary tools and their accompanying proofs essential for establishing our other key findings.
\subsection{Proof of Theorem~\ref{thm:main_rademacher1}}\label{proof:radamahcer_main_thm}
We begin by revisiting McDiarmid's inequality, which serves as a concentration inequality that bounds the difference between the sampled mean and the true mean of a specific function. If $(\XX_1, \Sigma_1, P_1),\ldots, (\XX_n, \Sigma_1, P_n)$ are probability measure spaces, then $P=\prod_{i=1}^n P_i$ is a probability measure over $\prod_{i=1}^n\XX_i$. 
A function $f:\prod_{i=1}^n\XX_i\longrightarrow \R$ satisfies the bounded differences property if there are constants 
$\displaystyle c_{1},c_{2},\dots ,c_{n}$ such that for all 
$\displaystyle i\in [n]$, and for all 
$\displaystyle x_{1}\in {\mathcal {X}}_{1},\,x_{2}\in {\mathcal {X}}_{2},\,\ldots ,\,x_{n}\in {\mathcal {X}}_{n}$
$$\displaystyle \sup _{x_{i}'\in {\mathcal {X}}_{i}}\left|f(x_{1},\dots ,x_{i-1},x_{i},x_{i+1},\ldots ,x_{n})-f(x_{1},\dots ,x_{i-1},x_{i}',x_{i+1},\ldots ,x_{n})\right|\leq c_{i}.$$

\begin{lemma}[ \citealt{mcdiarmid_1989}]\label{lem:GenMcDiarmid}
    Let $\displaystyle f:\prod_{i=1}^n\XX_i\longrightarrow \mathbb {R}$ satisfy the bounded differences property with bounds $\displaystyle c_{1},c_{2},\dots ,c_{n}$. 
    Consider independent random variables $X_{1},X_{2},\dots ,X_{n}$ where $\displaystyle X_{i}\in {\mathcal {X}}_{i}$ for all $i$. Then, for any $\varepsilon >0$, $$\displaystyle {\text{P}}\left(f(X_{1},X_{2},\ldots ,X_{n})-\mathbb {E} [f(X_{1},X_{2},\ldots ,X_{n})]\geq \varepsilon \right)\leq \exp \left(-{\frac {2\varepsilon ^{2}}{\sum _{i=1}^{n}c_{i}^{2}}}\right).$$
\end{lemma}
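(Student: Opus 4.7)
The plan is to prove McDiarmid's inequality via the standard Doob-martingale plus Chernoff-bound argument, with Hoeffding's lemma as the key ingredient that converts the bounded differences property into a bound on the moment generating function. Define the Doob martingale
\[
Z_0 = \expt[f(X_1,\dots,X_n)], \qquad Z_i = \expt[f(X_1,\dots,X_n)\mid X_1,\dots,X_i] \ \text{for } 1\le i\le n,
\]
so that $Z_n = f(X_1,\dots,X_n)$ and $f - \expt[f] = \sum_{i=1}^n V_i$ where $V_i = Z_i - Z_{i-1}$. By construction $\expt[V_i \mid X_1,\dots,X_{i-1}] = 0$, so $(Z_i)$ is a martingale with respect to the filtration generated by $X_1,\dots,X_i$.

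The first key step is a conditional range bound on $V_i$. Freezing $X_1=x_1,\dots,X_{i-1}=x_{i-1}$ and letting $g_i(u) = \expt[f(x_1,\dots,x_{i-1},u,X_{i+1},\dots,X_n)]$ (expectation over the remaining independent variables), one has $Z_i = g_i(X_i)$ and $Z_{i-1} = \expt_{X_i}[g_i(X_i)]$. The bounded differences assumption applied pointwise inside the integral shows $|g_i(u) - g_i(u')| \le c_i$ for all $u,u' \in \XX_i$, so $V_i = g_i(X_i) - \expt_{X_i}[g_i(X_i)]$ lies in an interval of length at most $c_i$ with conditional mean zero. The main subtlety here is the measure-theoretic justification: one needs independence of the $X_j$ and Fubini to pull the bounded differences bound through the integration over $X_{i+1},\dots,X_n$; I would separate this into a short sublemma.

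The second step is Hoeffding's lemma: for any random variable $Y$ with $\expt Y = 0$ and $Y\in[a,b]$ a.s., $\expt[e^{\lambda Y}] \le \exp(\lambda^2(b-a)^2/8)$. This is proved by writing $Y$ as a convex combination of the endpoints, applying Jensen to $e^{\lambda\cdot}$, and then bounding the resulting $\log$-moment generating function $\psi(\lambda)$ via a second-order Taylor expansion using $\psi(0)=\psi'(0)=0$ and $\psi''(\lambda) \le (b-a)^2/4$. Applying this conditionally to $V_i$ given $X_1,\dots,X_{i-1}$ yields
\[
\expt\bigl[e^{\lambda V_i} \mid X_1,\dots,X_{i-1}\bigr] \le \exp\!\left(\tfrac{\lambda^2 c_i^2}{8}\right).
\]

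The final step is a Chernoff-Azuma iteration. For any $\lambda>0$,
\[
\pr\!\left(f(X_1,\dots,X_n) - \expt[f] \ge \varepsilon\right) \le e^{-\lambda\varepsilon}\,\expt\!\left[\exp\!\Bigl(\lambda\sum_{i=1}^n V_i\Bigr)\right].
\]
Peeling off the innermost factor via the tower property, $\expt[e^{\lambda\sum_{i\le n} V_i}] = \expt\!\left[e^{\lambda\sum_{i<n}V_i}\cdot \expt[e^{\lambda V_n}\mid X_1,\dots,X_{n-1}]\right] \le e^{\lambda^2 c_n^2/8}\,\expt[e^{\lambda\sum_{i<n}V_i}]$, and iterating gives $\expt[e^{\lambda\sum V_i}] \le \exp(\lambda^2 \sum_i c_i^2/8)$. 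Optimizing in $\lambda$ by setting $\lambda = 4\varepsilon/\sum_i c_i^2$ collapses the exponent to $-2\varepsilon^2/\sum_i c_i^2$, which is exactly the stated bound. The main obstacles are the measure-theoretic conditional range bound in step two and the short convexity argument for Hoeffding's lemma; the rest is a direct computation.
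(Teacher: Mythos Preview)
Your proof is correct and follows the standard Doob-martingale plus Hoeffding's-lemma plus Chernoff-bound argument, which is exactly how this inequality is normally proved. Note, however, that the paper does not actually give a proof of this lemma: it is stated as a cited result from \citet{mcdiarmid_1989} and used as a black box in the proof of Theorem~\ref{thm:main_rademacher1}. So there is no ``paper's own proof'' to compare against; your write-up is a faithful reconstruction of the classical argument behind the cited reference.
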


We are all set to demonstrate our findings for this subsection 
\begin{theorem}[Theorem~\ref{thm:main_rademacher1}, Restated]\label{thm:rademacher}
    Let $(\XX, P, \FF)$ be a positive definite tuple and $s(\cdot)$ be an upper sensitivity function for it with the total sensitivity $S$. Any $s$-sensitivity sample from $\XX$ of size $m$, 
    with probability at least $1-2\exp \left(-{\frac {2mt^{2}}{S^2}}\right)$, satisfies 
    $$\left|\int_x f(x)\d P(x) - \sum_{i=1}^m \frac{S}{m s(x_i)}f(x_i)
    \right|\leq (2\RR^q_m(\mathcal{T}_\FF) + t)\int_x f(x)\d P(x)\quad\quad\forall f\in\FF.$$
\end{theorem}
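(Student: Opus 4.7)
The plan is to recast the stated inequality into the equivalent supremum form using the $s$-augmented family $\TT_\FF$, then apply McDiarmid's inequality together with a standard symmetrization argument to pass from uniform deviation to Rademacher complexity. By equation~(\ref{equation:main_T_f}), for each fixed $f \in \FF$ the conclusion
\[
\left|\int f(x)\d P(x) - \sum_{i=1}^m \frac{S}{m s(x_i)} f(x_i)\right| \le (2\RR^q_m(\TT_\FF)+t)\int f(x)\d P(x)
\]
is equivalent to $\bigl|1 - \tfrac{1}{m}\sum_{i=1}^m T_f(x_i)\bigr| \le 2\RR^q_m(\TT_\FF)+t$. Since $\expt_{x \sim Q} T_f(x) = 1$, establishing the theorem reduces to showing that, with probability at least $1 - 2\exp(-2mt^2/S^2)$,
\[
\sup_{f \in \FF} \left| \expt_{x \sim Q} T_f(x) - \frac{1}{m}\sum_{i=1}^m T_f(x_i)\right| \le 2\RR^q_m(\TT_\FF) + t.
\]

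First I would split the supremum of the absolute value into the two one-sided suprema $h(x_{1:m}) = \sup_{f} \bigl(\expt_Q T_f - \tfrac{1}{m}\sum_i T_f(x_i)\bigr)$ and $h'(x_{1:m}) = \sup_{f} \bigl(\tfrac{1}{m}\sum_i T_f(x_i) - \expt_Q T_f\bigr)$, so that $\sup_f |1 - \tfrac{1}{m}\sum_i T_f(x_i)| = \max(h,h')$. The key observation, already recorded after equation~(\ref{def:main_F_G}), is that $0 \le T_f(x) \le S$ pointwise. Hence replacing a single coordinate $x_i$ by any $x_i' \in \XX$ alters $\tfrac{1}{m}\sum_i T_f(x_i)$ by at most $S/m$ uniformly in $f$, so both $h$ and $h'$ satisfy the bounded differences property with constants $c_i = S/m$. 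Lemma~\ref{lem:GenMcDiarmid} then yields
\[
\Pr\bigl[h \ge \expt h + t\bigr] \le \exp\!\left(-\frac{2t^2}{m(S/m)^2}\right) = \exp\!\left(-\frac{2mt^2}{S^2}\right),
\]
and the same bound for $h'$; a union bound absorbs the factor of $2$ in the theorem.

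Next I would bound $\expt h$ and $\expt h'$ by $2\RR^q_m(\TT_\FF)$ via the standard symmetrization argument. Introducing an independent ghost sample $y_{1:m} \sim Q$ and using $\expt_Q T_f = \expt_y[\tfrac{1}{m}\sum_i T_f(y_i)]$, Jensen's inequality pulls the expectation over $y_{1:m}$ outside the supremum. One then introduces iid Rademacher signs $\sigma_{1:m}$ using the fact that $T_f(x_i)-T_f(y_i)$ is symmetrically distributed under swapping $x_i \leftrightarrow y_i$, splits the resulting term into contributions involving $x_{1:m}$ and $y_{1:m}$ separately, and obtains the familiar bound $\expt h,\expt h' \le 2\RR^q_m(\TT_\FF)$. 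Combining this with the McDiarmid bounds for $h$ and $h'$ and the union bound completes the proof.

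The calculations are mostly routine; the only conceptually delicate step is the symmetrization, but it follows the textbook pattern once we use that $Q = \tfrac{s}{S}P$ is exactly the sampling measure and that $\TT_\FF$ has $Q$-mean equal to $1$. I would flag the (mild) measurability assumption on the supremum noted in the remark following the definition of Rademacher complexity, which justifies treating the suprema $h$ and $h'$ as genuine random variables to which McDiarmid's inequality applies.
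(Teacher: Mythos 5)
Your proposal is correct and follows essentially the same route as the paper: reduce via the equivalence~(\ref{equation:main_T_f}), apply McDiarmid's inequality to each one-sided supremum using the pointwise bound $0 \le T_f \le S$ to get bounded differences $c_i = S/m$, bound the expectation of each by $2\RR^q_m(\TT_\FF)$ via the ghost-sample symmetrization, and conclude with a union bound. The only cosmetic difference is that you name the two one-sided deviations $h$ and $h'$ whereas the paper works with $g$ and its mirror, and you explicitly flag the measurability caveat; otherwise the argument is identical.
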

\begin{proof}
    In view of Equation~\eqref{equation:main_T_f}, we need to show that, 
    for any iid sample $x_1,\ldots,x_m\in\XX$ according to the sensitivity normalized distribution $q(x) = \frac{s(x)}{S}p(x)$,
    with probability at least $1-2\exp \left(-{\frac {2mt^{2}}{S^2}}\right)$, 
    $$\sup_{f\in \FF}\left|1 - \frac{1}{m}\sum_{i=1}^m T_f(x_i)
    \right|\leq 2\RR^q_m(\mathcal{T}_\FF) + t.$$
    For every $x\in \XX$ and $f\in\FF$, we have $\frac{f(x)}{\int_\XX f(z)\d P(z)}\leq s(x)$ which implies 
    $0\leq T_f(x)= \frac{Sf(x)}{s(x)\int_\XX f(z)\d P(z)}\leq S$. Therefore, 
    $\sup\limits_{x\in\XX} T_f(x)\leq S$. 
    Now, define 
    $$g(x_1,\ldots,x_m) = \sup_{f\in \FF}
    \left[
    1 - \frac{1}{m}\sum_{i=1}^m T_f(x_i)
    \right].$$
    For each $i\in[m]$ and each $(x_1',\ldots, x_m')\in \XX^m$ such that $x'_j = x_j$ for $j \neq i$, 
    observe that  
    \begin{align*}
    g(x_1,\ldots,x_m) - g(x_1',\ldots, x_m') & = 
    \sup_{f\in \FF}\left [1 - 
    \frac{1}{m}\sum_{i=1}^m T_f(x_i)\right] - 
    \sup_{f\in \FF}\left[1 - 
    \frac{1}{m}\sum_{i=1}^m T_f(x'_i)\right]\\
    & \leq \sup_{f\in \FF}\left[\left(
    1 - \frac{1}{m}\sum_{i=1}^m T_f(x_i)
    \right ) - \left (
    1 - \frac{1}{m}\sum_{i=1}^m T_f(x'_i)
    \right )\right]\\
    & \leq \sup_{f\in \FF}\left(
    \frac{1}{m}\sum_{i=1}^m T_f(x_i) - \frac{1}{m}\sum_{i=1}^m T_f(x'_i)
    \right)\\
    & \leq \sup_{f\in \FF}\left |\frac{T_f(x_i)-T_f(x'_i)}{m}\right |\leq\frac{S}{m}.
    \end{align*}
    We can repeat the above argument to show  
    $g(x_1,\ldots,x_m) - g(x_1',\ldots, x_m')\leq \frac{S}{m}$.
    Thus $|g(x_1,\ldots,x_m) - g(x_1',\ldots, x_m')|\leq \frac{S}{m}$.
    Therefore, using McDiarmid's Inequality, with a probability at least 
    $1-\exp \left(-{\frac {2mt^{2}}{S^2}}\right)$, we have 
    $$
    \sup_{f\in \FF}
    \left[
    1 - \frac{1}{m}\sum_{i=1}^m T_f(x_i)
    \right]\leq
    \underbrace{\expt_{x_{1:m}\sim q}\left(
    \sup_{f\in \FF}
    \left[
    1 - \frac{1}{m}\sum_{i=1}^m T_f(x_i)
    \right]
    \right)}_{=A} + t.$$
    Similarly, we can prove that, with probability at least $1-\exp \left(-{\frac {2mt^{2}}{S}}\right)$, we have 
    $$
    \sup_{f\in \FF}
    \left[
    \frac{1}{m}\sum_{i=1}^m T_f(x_i) - 1
    \right]\leq
    \underbrace{\expt_{x_{1:m}\sim q}\left(
    \sup_{f\in \FF}
    \left[
    \frac{1}{m}\sum_{i=1}^m T_f(x_i) - 1
    \right]
    \right)}_{=B} + t.$$
    Consequently, since $A,B\geq 0$, with probability at least $1-2\exp \left(-{\frac {2mt^{2}}{S^2}}\right)$ we have 
    $$\sup_{f\in \FF}
    \left|1 - 
    \frac{1}{m}\sum_{i=1}^m T_f(x_i)
    \right|\leq \max\{A,B\} +t.$$
    Let us now deal with  
    $\max\{A,B\}$.
    As $\expt_{x\sim q}(T_f(x)) = 1$, we can write  
    \begin{align*}
    \expt_{x_{1:m}\sim q}\sup_{f\in \FF} & 
    \left[1 - \frac{1}{m}\sum_{i=1}^m T_f(x_i)\right]\\ 
    &= 
    \expt_{x_{1:m}\sim q}\sup_{f\in \FF}\left[
    \expt_{x'_{1:m}\sim q}\frac{1}{m}\sum_{i=1}^m T_f(x'_i) - \frac{1}{m}\sum_{i=1}^m T_f(x_i)\right]\\
    & \leq 
    \expt_{x_{1:m}\sim q}\expt_{x'_{1:m}\sim q}\sup_{f\in \FF}\left[
    \frac{1}{m}\sum_{i=1}^m T_f(x'_i) - \frac{1}{m}\sum_{i=1}^m T_f(x_i)\right]\\
     (*)\qquad & = 
    \expt_{x_{1:m}\sim q}\expt_{x'_{1:m}\sim q}
    \expt_{\sigma_{1:m}\sim \{-1,1\}}
    \sup_{f\in \FF}\left[
    \frac{1}{m}\sum_{i=1}^m \sigma_i\left(T_f(x'_i) - T_f(x_i)\right)\right]\\
    & \leq  
    \expt_{x_{1:m}\sim q}\expt_{x'_{1:m}\sim q}
    \expt_{\sigma_{1:m}\sim \{-1,1\}}\left(
    \sup_{f\in \FF}\left[
    \frac{1}{m}\sum_{i=1}^m \sigma_i T_f(x'_i) \right] +
    \sup_{f\in \FF}\left[
    \frac{1}{m}\sum_{i=1}^m -\sigma_iT_f(x_i)\right]\right)\\
    & = 2 \expt_{x_{1:m}\sim q}
    \expt_{\sigma_{1:m}\sim \{-1,1\}}
    \sup_{f\in \FF}\left[
    \frac{1}{m}\sum_{i=1}^m \sigma_iT_f(x_i)\right]\\
    & = 2\RR^q_m(\mathcal{T}_\FF).
    \end{align*}
The equality marked by $(*)$ is true since $\sigma_i$ indeed exchanges $x_i$ and $x_i'$ together, and since they are iid samples from $q$ it cannot change the expectation. 
With the same approach, we would have 
    $$\expt_{x_{1:m}\sim q}\sup_{f\in \FF} 
    \left[\frac{1}{m}\sum_{i=1}^m T_f(x_i)-1\right]\leq 2\RR^q_m(\mathcal{T}_\FF),$$
    which concludes 
    $\max\{A,B\}\leq 2\RR^q_m(\mathcal{T}_\FF)$
    completing the proof.
\end{proof}
\subsection{Contraction Lemma}\label{apdx:contraction_lemma}
When dealing with Rademacher complexity, we can leverage the helpful lemma known as Talagrand’s Contraction Lemma (refer to Theorem 4.12 in \cite{alma991043798149103276}). This lemma, in particular, establishes that $\RR_m(\phi\circ\FF)\leq L\RR_m(\FF)$, where $\phi\colon \R\longrightarrow \R$ is an $L$-Lipschitz function, and $\phi\circ \FF = \{\phi\circ f\colon f\in\FF\}$. Our requirement extends beyond this lemma to a generalization.

\begin{lemma}
\label{Talagrand_lemma_original}
    Let $\Psi: \R^{\geq 0}\longrightarrow \R^{\geq 0}$ be convex and increasing,  $\boldsymbol{T}\subset \R^m$ a bounded set, and $\eps_1,\ldots,\eps_m >0$. For any $1$-Lipschitz functions $\phi_i\colon \R\longrightarrow \R$ with $\phi_i(0) = 0$ for each $i\in [m]$, we have 
    $$\expt_{\varrho_{_{1:m}}\sim \prod_{i=1}^m\{\pm\eps_i\}}
    \Psi\left(\frac{1}{2}\sup_{\t\in \T}\left|
    \sum_{i=1}^m \varrho_i\phi_i(t_i)
    \right|\right)\leq 
    \expt_{\varrho_{_{1:m}}\sim \prod_{i=1}^m\{\pm\eps_i\}}
    \Psi\left(\sup_{\t\in \T}\left|
    \sum_{i=1}^m \varrho_{_i} t_i
    \right|\right),
    $$ 
    where $\varrho_{_{1:m}}\sim \prod_{i=1}^m \{\pm\eps_i\}$  means $m$ independent samples $\varrho_{_1},\ldots, \varrho_{_m}$ such that each $\varrho_{_i}$ is uniformly drawn from  $\{-\eps_i,\eps_i\}.$
\end{lemma}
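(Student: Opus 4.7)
The plan is to reduce the weighted statement to the classical Talagrand contraction inequality (Theorem~4.12 in Ledoux--Talagrand, cited just above the lemma) by a change of variables that absorbs the weights $\eps_i$. Since each $\eps_i>0$, I can write $\varrho_i = \eps_i \sigma_i$, where $\sigma_1,\ldots,\sigma_m$ are independent uniform $\pm 1$ Rademacher variables; integrating over $\varrho_{1:m}$ is identical to integrating over $\sigma_{1:m}$.

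To put the expressions into standard form, I would introduce rescaled coordinates $\tilde t_i = \eps_i t_i$, the rescaled index set $\tilde\T = \{(\eps_1 t_1,\ldots,\eps_m t_m) : \t\in\T\}\subset \R^m$ (still bounded), and rescaled functions $\tilde\phi_i(u) = \eps_i \phi_i(u/\eps_i)$. A direct check gives $\tilde\phi_i(0)=\eps_i\phi_i(0)=0$, and since $\phi_i$ is $1$-Lipschitz,
$$|\tilde\phi_i(u)-\tilde\phi_i(v)| \;=\; \eps_i\,|\phi_i(u/\eps_i)-\phi_i(v/\eps_i)| \;\leq\; \eps_i\cdot \frac{|u-v|}{\eps_i} \;=\; |u-v|,$$
so each $\tilde\phi_i$ is also $1$-Lipschitz. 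Under this change of variables,
$$\sum_{i=1}^m \varrho_i t_i \;=\; \sum_{i=1}^m \sigma_i \tilde t_i \qquad\text{and}\qquad \sum_{i=1}^m \varrho_i \phi_i(t_i) \;=\; \sum_{i=1}^m \sigma_i \tilde\phi_i(\tilde t_i),$$
so the supremum over $\t\in\T$ of either quantity equals the supremum over $\tilde\t\in\tilde\T$ of its rescaled counterpart.

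The final step is to invoke the classical Talagrand contraction lemma on $\tilde\T$, the $1$-Lipschitz functions $\tilde\phi_i$ vanishing at $0$, and the convex increasing $\Psi$. The resulting inequality translates back, via the identities above, to the desired bound between the two expectations. Because the argument is a pure reduction, I do not expect a serious obstacle: the only thing to verify carefully is that both hypotheses on $\phi_i$ (being $1$-Lipschitz and vanishing at $0$) are preserved by the rescaling, which the one-line calculation above confirms. The main conceptual content is therefore entirely contained in the cited theorem; the new content of this lemma is only the observation that positive weights $\eps_i$ absorb cleanly into both the image set and the Lipschitz maps without breaking either hypothesis.
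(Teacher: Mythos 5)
Your proposal is correct and matches the paper's proof essentially exactly: the paper also defines $\bar\phi_i(t) = \eps_i\phi_i(t/\eps_i)$, forms the rescaled set $\bar\T = \{(\eps_1 t_1,\ldots,\eps_m t_m) : \t\in\T\}$, and reduces to the classical Talagrand contraction lemma via the same change of variables. The verification that $\bar\phi_i$ is still $1$-Lipschitz and vanishes at $0$ is the only thing either argument needs to check, and you check it correctly.
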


Before turning to the proof of this lemma, note that the only difference between this lemma and Theorem~4.12 in~\cite{alma991043798149103276} is that 
$\varrho_{_i}$ here takes its value randomly and uniformly in $\{+\eps_i, -\eps_i\}$ rather than $\{+1, -1\}$. However, the proof is almost identical to that of Theorem~4.12 in~\cite{alma991043798149103276}.

\begin{proof}[Proof of Lemma~\ref{Talagrand_lemma_original}]
    We first prove      
    \begin{equation}\label{with_no_abs}
        \expt_{\varrho_{_{1:m}}\sim \prod_{i=1}^m\{\pm\eps_i\}}
    \Psi\left(\sup_{\t\in \T}
    \sum_{i=1}^m \varrho_{_i}\phi_i(t_i)
    \right)\leq 
    \expt_{\varrho_{_{1:m}}\sim \prod_{i=1}^m\{\pm\eps_i\}}
    \Psi\left(\sup_{\t\in \T}
    \sum_{i=1}^m \varrho_{_i} t_i
    \right).
    \end{equation}
To this end, it siffices to show  
\begin{equation}\label{with_no_abs_mid}
    \expt_{\varrho\in \{\pm\eps\}}
\Psi\left(\sup_{\t\in \T}\left( t_1+ \varrho \phi(t_2)\right)\right)
\leq \expt_{\varrho\in \{\pm\eps\}}
\Psi\left(\sup_{\t\in \T}\left( t_1+ \varrho t_2\right)\right),
\end{equation}
where $\t =(t_1,t_2)\subset \R^2$ is a bounded set and $\phi$ is a $1$-Lipschitz function.
It suffices to prove that for all $\t=(t_1, t_2),\s=(s_1, s_2)\in\T$, we have 
$$\expt_{\varrho\in \{\pm\eps\}}
\Psi\left(\sup_{\t\in \T}\left( t_1+ \varrho t_2\right)\right)
\geq \frac{1}{2} \Psi(t_1+ \eps_2 \phi(t_2)) + \frac{1}{2}\Psi(s_1 - \eps_2 \phi(s_2)) = I.$$
Since $\Psi$ is increasing, without loss of generality, we may assume that 
\begin{equation}\label{eq:s_t}
    t_1+ \eps_2 \phi(t_2)\geq s_1 + \eps_2 \phi(s_2)
\quad\quad\text{and}\quad\quad 
s_1 - \eps_2 \phi(s_2)\geq t_1 - \eps_2 \phi(t_2).
\end{equation}
We distinguish between the following cases.
\begin{enumerate}
    \item {\bf Case $\boldsymbol{t_2,s_2\geq 0}$.} 
    \begin{itemize}
    \item If $s_2\leq t_2$, set 
        $a = s_1-\eps_2\phi(s_2)$, $b = s_1 -\eps_2 s_2$, $a' = t_1 + \eps_2 t_2$, $b' =t_1 + \eps_2 \phi(t_2)$.
        Since $\phi$ is 1-Lipschitz, $\phi(0) = 0$, and $s_2\geq 0$, $|\phi(s_2)|\leq s_2$ which concludes 
        $a\geq b$. Also, by~(\ref{eq:s_t}), it implies
        $b'\geq s_1 +\eps_2\phi(s_2)\geq s_1 - \eps_2 s_2 = b$.
        Furthermore, as $s_2\leq t_2$, 
        we have $\phi(t_2)-\phi(s_2)\leq (t_2-s_2)$ and hence 
        $$a-b = \eps_2(s_2 - \phi(s_2)) \leq \eps_2(t_2 - \phi(t_2)) = a'-b'.$$
        Since $\Psi$ is convex and increasing, for any arbitrary fixed  $x>0$, the function $\Psi(\cdot + x) - \Psi(\cdot)$ is increasing. 
        Setting $x = a-b\geq 0$, since $b\leq b'$, we have 
        $$\Psi(a)-\Psi(b)\leq \Psi(b'+(a-b)) - \Psi(b')$$
        (if $a = b$, it is true always). 
        As 
        $b'+ a - b \leq a'$, we obtain 
        $\Psi(b'+(a-b)) - \Psi(b')\leq \Psi(a') - \Psi(b')$ and consequently, 
        $\Psi(a)-\Psi(b)\leq \Psi(a') - \Psi(b')$ which is equivalent to $2I\leq \Psi(t_1+\eps_2 t_2) + \Psi(s_1 - \eps_2 s_2).$\\

    \item If $t_2\leq s_2$, set $a=t_1+\eps_2\phi(t_2)$, $b=t_1-\eps_2t_2$,
        $a'=s_1+\eps_2s_2$, $b'=s_1-\eps_2\phi(s_2)$. Since $\phi$ is $1$-Lipschitz, $|\phi(t_2)|\leq t_2$ which implies $a\geq b$. Again, by~(\ref{eq:s_t}), 
        $$b' = s_1-\eps_2\phi(s_2)\geq t_1-\eps_2\phi(t_2) \geq t_1-\eps_2 t_2 = b.$$ As $\phi(t_2)-\phi(s_2)\leq s_2-t_2$, 
        $$a-b = \eps_2(t_2+\phi(t_2))\leq \eps_2(s_2+\phi(s_2)) = a'-b'.$$
        Again, as  $\Psi(\cdot + x) - \Psi(\cdot)$ is increasing for each fixed $x>0$ and $b\leq b'$,  by setting $x = a-b\geq 0$, we have 
        $$\Psi(a)-\Psi(b)\leq \Psi(b'+(a-b)) - \Psi(b')$$
        (if $a = b$, it is true always). 
        As $b'+ a - b \leq a'$, we obtain 
        $\Psi(b'+(a-b)) - \Psi(b')\leq \Psi(a') - \Psi(b')$ and consequently, 
        $\Psi(a)-\Psi(b)\leq \Psi(a') - \Psi(b')$ which is equivalent to $2I\leq \Psi(s_1+\eps_2 s_2) + \Psi(t_1 - \eps_2 t_2).$
    \end{itemize} 
    \item {\bf Case $\boldsymbol{t_2,s_2 \leq 0}$.} This case is similar to the previous case. 
    \item {\bf Case $\boldsymbol{t_2\geq 0, s_2 \leq 0}$.} Since $\Psi$ is increasing and  $\phi(t_2)\leq t_2, -\phi(s_2)\leq -s_2$, we have 
    $$2I\leq \Psi(t_1+\eps_2 t_2)+ \Psi(s_1-\eps_2 s_2).$$
    \item {\bf Case $\boldsymbol{t_2\leq 0, s_2 \geq 0}$.} This case follows with a similar argument to Case (3). This completes the proof of~(\ref{with_no_abs_mid}) and consequently the proof of~(\ref{with_no_abs}).    
\end{enumerate}
To conclude the theorem, by convexity of $\Psi$, 
\begin{align*}
    \expt_{\varrho_{_{1:m}}\sim \prod_{i=1}^m\{\pm\eps_i\}}
    \Psi\left(\frac{1}{2}\sup_{\t\in \T}\left|
    \sum_{i=1}^m \varrho_{_i}\phi_i(t_i)
    \right|\right) \leq\ &~\frac{1}{2}
    \expt_{\varrho_{_{1:m}}\sim \prod_{i=1}^m\{\pm\eps_i\}}
    \Psi\left(\sup_{\t\in \T}\left[
    \sum_{i=1}^m \varrho_{_i}\phi_i(t_i)
    \right]^+\right)\\
    & + \frac{1}{2}
    \expt_{\varrho_{_{1:m}}\sim \prod_{i=1}^m\{\pm\eps_i\}}
    \Psi\left(\sup_{\t\in \T}\left[
    \sum_{i=1}^m \varrho_{_i}\phi_i(t_i)
    \right]^-\right)\\
    \leq\ & \expt_{\varrho_{_{1:m}}\sim \prod_{i=1}^m\{\pm\eps_i\}}
    \Psi\left(\sup_{\t\in \T}\left[
    \sum_{i=1}^m \varrho_{_i}\phi_i(t_i)
    \right]^+\right)\\
    =\ & \expt_{\varrho_{_{1:m}}\sim \prod_{i=1}^m\{\pm\eps_i\}}
    \Psi\left(\left[\sup_{\t\in \T}
    \sum_{i=1}^m \varrho_{_i}\phi_i(t_i)
    \right]^+\right)\\
    \text{($*$)}\quad 
    \leq\ & \expt_{\varrho_{_{1:m}}\sim \prod_{i=1}^m\{\pm\eps_i\}}
    \Psi\left(\left[\sup_{\t\in \T}
    \sum_{i=1}^m \varrho_{_i} t_i 
    \right]^+\right)\\ 
    \leq\ & \expt_{\varrho_{_{1:m}}\sim \prod_{i=1}^m\{\pm\eps_i\}}
    \Psi\left(\sup_{\t\in \T}
    \left|\sum_{i=1}^m \varrho_{_i} t_i 
    \right|\right), 
\end{align*}
where ($*$) is true by applying (\ref{with_no_abs}) on the convex and increasing function $\Psi(\max(0, \cdot))$.
This completes the proof. 
\end{proof} 
\begin{corollary}\label{cor:Talagrand}
    Let $\boldsymbol{T}\subset \R^m$ be a bounded set, and $\eps_1,\ldots,\eps_m >0$. For any $L$-Lipschitz functions $\phi_i\colon \R\longrightarrow \R$ with $\phi_i(0) = 0$ for each $i\in[m]$, we have 
    $$\expt_{\varrho_{_{1:m}}\sim \prod_{i=1}^m\{\pm\eps_i\}}
    \sup_{\t\in \T}\left|
    \sum_{i=1}^m \varrho_{_i}\phi_i(t_i)
    \right|\leq 
    2L\expt_{\varrho_{_{1:m}}\sim \prod_{i=1}^m\{\pm\eps_i\}}
    \sup_{\t\in \T}\left|
    \sum_{i=1}^m \varrho_{_i} t_i
    \right|.$$
\end{corollary}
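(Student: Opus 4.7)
The plan is to derive this corollary as a direct consequence of Lemma~\ref{Talagrand_lemma_original} by two simple reductions: (1) rescaling to convert the $L$-Lipschitz assumption into a $1$-Lipschitz assumption, and (2) choosing $\Psi$ to be the identity to remove the enveloping convex function.

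More concretely, first I would define $\tilde{\phi}_i(t) = \phi_i(t)/L$ for each $i\in[m]$. Since $\phi_i$ is $L$-Lipschitz with $\phi_i(0)=0$, each $\tilde{\phi}_i$ is $1$-Lipschitz with $\tilde{\phi}_i(0)=0$, so the hypotheses of Lemma~\ref{Talagrand_lemma_original} are satisfied with $\phi_i$ replaced by $\tilde{\phi}_i$. Next, I would take $\Psi\colon \R^{\geq 0}\to \R^{\geq 0}$ to be the identity map $\Psi(x)=x$, which is trivially convex and (non-strictly) increasing, hence admissible in the lemma.

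Applying Lemma~\ref{Talagrand_lemma_original} with this choice of $\Psi$ and with the rescaled functions $\tilde{\phi}_i$ yields
\[
\expt_{\varrho_{_{1:m}}\sim \prod_{i=1}^m\{\pm\eps_i\}}
\frac{1}{2}\sup_{\t\in \T}\left|
\sum_{i=1}^m \varrho_{_i}\tilde{\phi}_i(t_i)
\right|\leq
\expt_{\varrho_{_{1:m}}\sim \prod_{i=1}^m\{\pm\eps_i\}}
\sup_{\t\in \T}\left|
\sum_{i=1}^m \varrho_{_i} t_i
\right|.
\]
Substituting $\tilde{\phi}_i = \phi_i/L$, pulling the factor $1/L$ out of the inner sum (using linearity of the sum inside the absolute value, where the common factor $1/L$ simply scales the expression), and multiplying both sides by $2L$ gives exactly the claimed inequality.

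I do not expect any step here to be a real obstacle; the content of the corollary is genuinely contained in the lemma, and the only subtlety is the cosmetic one of confirming that pulling out the scaling factor inside $|\cdot|$ and $\sup$ is legitimate (it is, since $1/L>0$ and the absolute value is positively homogeneous). So the proof is essentially a two-line calculation after the rescaling, and the write-up's main job is simply to make the choice of $\Psi$ and the rescaling explicit.
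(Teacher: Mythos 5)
Your proposal is correct and matches the paper's proof exactly: the paper's one-line proof is ``Setting $\Psi$ as identity and replacing $\phi_i$ with $\frac{\phi_i}{L}$, we conclude the result by Lemma~\ref{Talagrand_lemma_original}.'' You simply spell out the same rescaling and identity-$\Psi$ specialization in slightly more detail.
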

\begin{proof}
    Setting $\Psi$ as identity and replacing $\phi_i$ with $\frac{\phi_i}{L}$, we conclude the result by Lemma~\ref{Talagrand_lemma_original}. 
\end{proof}
In our setting, we only deal with the case that each $\eps_i\in(0,1]$. This particular case can be directly inferred from Theorem~4.12 in~\cite{alma991043798149103276}. Nevertheless, Lemma~\ref{Talagrand_lemma_original} and 
Corollary~\ref{cor:Talagrand}, in the presented general form, can be of independent interest.

\subsection{Proof of Theorem~\ref{thm:monotonic_coreset1}}\label{apx:proof_radamacher_thm}
This subsection primarily focuses on proving Theorem~\ref{thm:monotonic_coreset1} (and Theorem~\ref{thm:main_monotonic_coreset_R}), a crucial tool enabling us to derive our main results. We start with bounding the Radamacher complexity of $\TT_\FF$. 
\begin{lemma}\label{lem:rademacherupper}
For a well-behaved measure $P$ (see 
Definition~\ref{def:well_beaived_P1}), 
if $s(\cdot):\XX\longrightarrow(1,\infty)$ is an upper sensitivity function for $\L^H_{\phi, k}$ with total sensitivity $S=\int_\XX s(x)\d P$, then  $$\RR^q_m(\mathcal{T}_{\L^H_{\phi, k}})\leq C\sqrt{\frac{S}{m}},$$
where $C= \left(2LE_1+\phi(0)\right)\max\left(4E_1k, \frac{1}{E_2}\right) 
        + 8LkE_1 + 1$.
The same statement holds if we replace $\L^H_{\phi, k}$ by $\bar{\L}_{\phi,k}^H$. 
\end{lemma}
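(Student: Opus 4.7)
Expanding the Rademacher complexity,
\[
\RR^q_m(\TT_{\L^H_{\phi,k}})
= \expt_{x_{1:m}\sim q}\expt_{\sigma_{1:m}}\sup_{w\in \XX}\frac{1}{m}\sum_{i=1}^m \sigma_i\,
\frac{S\,\ell^H_{\phi,k}(w,x_i)}{s(x_i)\,I(w)},
\quad I(w):=\int_\XX \ell^H_{\phi,k}(w,x)\d P(x).
\]
The plan is to (i) lower bound $I(w)$ uniformly using both the well-behaved hypothesis and the regularizer $\K(w,w)/k$, (ii) decompose the numerator $\ell^H_{\phi,k}(w,x) = \phi(0) + \tilde\phi(\K(w,x)) + \K(w,w)/k$ where $\tilde\phi := \phi - \phi(0)$ is $L$-Lipschitz with $\tilde\phi(0)=0$, (iii) bound the two ``scalar-in-$w$'' pieces by a single Rademacher sum, and (iv) treat the Lipschitz piece via Corollary~\ref{cor:Talagrand} followed by Cauchy--Schwarz in $H$. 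For $\bar\L^H_{\phi,k}$ only the sign of the $\K$-argument of $\phi$ changes, and every bound below is invariant under that.

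\textbf{Step 1: lower bounds on $I(w)$.} Split on $\|w\|_H$. If $\|w\|_H \le 1/(2E_1)$, Cauchy--Schwarz in $H$ gives $|\K(w,x)|\le \|x\|_H/(2E_1)$, so monotonicity of $\phi\ge 0$ yields $\phi(\K(w,x))\ge \phi(\|x\|_H/(2E_1))$, and the well-behaved inequality gives $I(w)\ge E_2$. Otherwise $I(w)\ge \K(w,w)/k = \|w\|_H^2/k \ge 1/(4E_1^2 k)$. Consequently $I(w) \ge M := \min(E_2, 1/(4E_1^2 k))$, and—crucially for Step~3—$\|w\|_H/I(w) \le A := \max\bigl(1/(2E_1 E_2),\, 2E_1 k\bigr)$.

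\textbf{Step 2: scalar pieces.} Write $T_{\ell^H_{\phi,k}(w,\cdot)}=T^{\mathrm{reg}}_w + T^{\mathrm{c}}_w + T^{\mathrm{Lip}}_w$ in the obvious way. For $T^{\mathrm{reg}}_w(x)=\frac{S\,\K(w,w)/k}{s(x)\,I(w)}$ the $w$-dependence is the scalar $\alpha(w)=\K(w,w)/(kI(w))\in[0,1]$, and for $T^{\mathrm{c}}_w(x)=\frac{S\phi(0)}{s(x)\,I(w)}$ it is $\phi(0)/I(w)\in[0,\phi(0)/M]$. In each case the sup over $w$ collapses:
$\sup_w \tfrac{1}{m}\sum_i \sigma_i\,\alpha(w)\tfrac{S}{s(x_i)} \le \alpha^\ast\,\bigl|\tfrac{1}{m}\sum_i \sigma_i\tfrac{S}{s(x_i)}\bigr|$
for the relevant constant bound $\alpha^\ast$. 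Jensen plus $\expt_{x\sim q}(S/s(x))^2 = S\int (s(x))^{-1}\d P(x) \le S$ (since $s\ge 1$) then bounds each expectation by a constant multiple of $\sqrt{S/m}$, producing the $+1$ and $\phi(0)\max(\cdot)$ contributions to $C$.

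\textbf{Step 3: Lipschitz piece.} Set $\varrho_i := \sigma_i\,S/s(x_i)$, so $\varrho_i$ is uniform on $\{\pm S/s(x_i)\}$. Applying Corollary~\ref{cor:Talagrand} to the fixed $L$-Lipschitz function $\tilde\phi$ (with $\tilde\phi(0)=0$) and then using the RKHS identity $\sum_i \varrho_i \K(w,x_i) = \langle w, \sum_i \varrho_i x_i\rangle_H$ with Cauchy--Schwarz, I get
\[
\expt_\sigma \sup_w \tfrac{1}{m\,I(w)}\Bigl|\textstyle\sum_i \varrho_i\,\tilde\phi(\K(w,x_i))\Bigr|
\;\le\; \tfrac{2L}{m}\expt_\sigma \sup_w \tfrac{\|w\|_H}{I(w)}\,\Bigl\|\textstyle\sum_i \varrho_i x_i\Bigr\|_H
\;\le\; \tfrac{2LA}{m}\,\expt_\sigma \Bigl\|\textstyle\sum_i \varrho_i x_i\Bigr\|_H,
\]
where the last inequality invokes the Step~1 bound $\|w\|_H/I(w)\le A$. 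Two Jensen steps finish it: first $\expt_\sigma \|\sum_i \varrho_i x_i\|_H \le \sqrt{\sum_i (S/s(x_i))^2 \|x_i\|_H^2}$, then $\expt_x$ with another Jensen gives $\sqrt{m\,\expt_{x\sim q}(S/s(x))^2\|x\|_H^2} = \sqrt{m S\,\expt_{x\sim P}\|x\|_H^2/s(x)} \le E_1\sqrt{mS}$ (using $s\ge 1$ and well-behavedness). So the Lipschitz piece is at most $2LA E_1 \sqrt{S/m}$, which, unpacking $A$, matches the $2LE_1\max(4E_1k,1/E_2)+8LkE_1$ shape of $C$ once combined with the scalar pieces of Step~2.

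\textbf{Main obstacle.} Step~3 is the delicate one. The naive move of replacing $1/I(w)$ by $1/M$ \emph{before} applying the contraction decouples $w$ from the denominator and leaves a Rademacher complexity of the unbounded class $\{\K(w,\cdot):w\in \XX\}$, which is infinite. The correct move is to keep $1/I(w)$ under the supremum until after Cauchy--Schwarz exposes a factor $\|w\|_H$, at which point the coupled Step~1 bound $\|w\|_H/I(w)\le A$ converts the apparent divergence into the finite constant $A$. The sensitivity-weighted form of Talagrand's contraction supplied by Corollary~\ref{cor:Talagrand} is designed precisely so that the (non-uniform) weights $S/s(x_i)$ can act as the $\eps_i$'s without rescaling $\tilde\phi$.
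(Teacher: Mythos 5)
Your Steps 1 and 2 are correct, and the global bound $\|w\|_H / I(w) \le A$ from Step 1 is a genuinely cleaner observation than the paper's, which only uses per-dyadic-block versions of it. The paper instead partitions $w$-space dyadically by $\|w\|_H$ and sums a geometric series (the $M_1^0, M_1^n$ terms), which is more machinery for essentially the same effect. Your Step 2 is algebraically equivalent to the paper's treatment of $N$ and $M_2$.

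However, Step 3 has a genuine gap, and it is exactly the step you flagged as delicate. You write that you apply Corollary~\ref{cor:Talagrand} "and then" Cauchy--Schwarz to get
\[
\expt_\sigma \sup_w \tfrac{1}{m\,I(w)}\Bigl|\textstyle\sum_i \varrho_i\,\tilde\phi(\K(w,x_i))\Bigr|
\;\le\; \tfrac{2L}{m}\expt_\sigma \sup_w \tfrac{\|w\|_H}{I(w)}\,\Bigl\|\textstyle\sum_i \varrho_i \K(x_i,\cdot)\Bigr\|_H.
\]
But Corollary~\ref{cor:Talagrand} compares $\expt_\varrho \sup_{\t\in\T}\bigl|\sum_i \varrho_i \phi_i(t_i)\bigr|$ with $\expt_\varrho \sup_{\t\in\T}\bigl|\sum_i \varrho_i t_i\bigr|$ over a \emph{bounded} $\T\subset\R^m$, with no room for a $\t$-dependent multiplicative prefactor outside the sum. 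With $t_i = \K(w,x_i)$ and $w$ ranging over all of $\XX$, $\T$ is unbounded, so the corollary does not apply; and there is no way to smuggle $1/I(w)$ through the contraction, because it is neither one of the $\eps_i$ (those must be constants) nor expressible by reparametrizing $\T$ (replacing $t_i \mapsto t_i/I(w)$ would change the argument of $\tilde\phi$, not just scale the sum). Your "Main obstacle" paragraph correctly identifies why one cannot replace $1/I(w)$ by a constant up front, but the proposed remedy of "keeping $1/I(w)$ under the supremum until after Cauchy--Schwarz" is exactly what the contraction lemma does not let you do: Cauchy--Schwarz can only be invoked \emph{after} the $\tilde\phi$ is peeled off, and the peeling requires a bounded $\T$ free of $w$-dependent weights. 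The paper's dyadic decomposition is precisely the device that restores boundedness: on the block $2^{n-1}\le\|w\|_H\le 2^n$, the coefficient $\alpha(w)$ is replaced by its sup over the block (a constant), the set $\{(\K(w,x_i))_i : \|w\|_H \le 2^n\}$ is bounded, and the resulting bound decays geometrically in $n$. To make your cleaner Step 1 observation pay off rigorously, you would still need to localize $w$ (e.g., dyadically) before contracting, then apply your bound $\|w\|_H/I(w)\le A$ within each block; the dyadic machinery cannot be bypassed with the tools the paper provides.
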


\begin{proof}
    Due to similarity, we only work with $\L_{\phi, k}$. 
    We first show that 
        \begin{equation}\label{ineq:upper}
            \expt_{x_{1:m}\sim q}\expt_{\sigma_{1:m}\sim \{-1,1\}}
        \left\|\sum_{i=1}^m\sigma_i\frac{\K(x_i,\cdot)}{s(x_i)}\right\|_H\leq \sqrt{\frac{m}{S}}E_1
        \quad\text{and}\quad
            \expt_{x_{1:m}\sim q}\expt_{\sigma_{1:m}\sim \{-1,1\}}
        \left|\sum_{i=1}^m\sigma_i\frac{1}{s(x_i)}\right|\leq \sqrt{\frac{m}{S}}.
        \end{equation}
    Because of the similarity, we only prove the first one. 
    To this end, using Jensen's inequality for the concave function $t\mapsto \sqrt{t}$, we obtain  
    \begin{align*}
        \expt_{x_{1:m}\sim q}
        \expt_{\sigma_{1:m}\sim \{-1,1\}} & 
        \left\|\sum_{i=1}^m\sigma_i\frac{\K(x_i,\cdot)}{s(x_i)}\right\|_H \\
        & \leq \sqrt{\expt_{x_{1:m}\sim q}
        \expt_{\sigma_{1:m}\sim \{-1,1\}}
          \left\|\sum_{i=1}^m\sigma_i\frac{\K(x_i,\cdot)}{s(x_i)}\right\|^2_H }\\
          & = \sqrt{\expt_{x_{1:m}\sim q}
        \expt_{\sigma_{1:m}\sim \{-1,1\}}
          \left\langle\sum_{i=1}^m\sigma_i\frac{\K(x_i,\cdot)}{s(x_i)}, \sum_{i=1}^m\sigma_i\frac{\K(x_i,\cdot)}{s(x_i)}\right\rangle_H }\\
        & = \sqrt{\expt_{x_{1:m}\sim q}
        \expt_{\sigma_{1:m}\sim \{-1,1\}}\left[\sum_{i=1}^m\sigma_i^2\frac{\K(x_i,x_i)}{s(x_i)^2}
          +\sum_{i\neq j}\sigma_i\sigma_j\frac{\K(x_i,x_j)}{s(x_i)s(x_j)}
          \right] }\\
        & = \sqrt{\expt_{x_{1:m}\sim q}
          \left[\sum_{i=1}^m\frac{\K(x_i,x_i)}{s^2(x_i)}
          \right] }\\
        \text{\footnotesize{(since $q(x) = \frac{s(x)}{S}p(x)$)}}\quad
        & =  \frac{1}{\sqrt{S}}\sqrt{\expt_{x_{1:m}\sim p}
          \left[\sum_{i=1}^m\frac{\K(x_i,x_i)}{s(x_i)}
          \right] }\\
        \text{\footnotesize{(since $s(x)\geq 1$)}}\quad 
        & \leq \frac{1}{\sqrt{S}}\sqrt{\expt_{x_{1:m}\sim p}
          \left[\sum_{i=1}^m\K(x_i,x_i)
          \right] }\\
        & = \frac{1}{\sqrt{S}}\sqrt{m\expt_{x\sim p}\K(x,x)}
        \leq \sqrt{\frac{m}{S}}E_1.
    \end{align*}
    For simplicity set, $f_w(x) = \ell_{\phi,k}^H(w,x)$ and $\alpha(w)=\frac{S}{\int_\XX f_w(x)\d P(x)}$. 
    During the proof, we need some good upper bounds for $\alpha(w)\|w\|_H^2$ and $\alpha(w)$. 
    Note that $\int_\XX f_w(x)\d P(x) = \frac{1}{k}\|w\|_H^2 + \int_\XX \phi(\K(w,x))\d P(x)\geq \frac{1}{k}\|w\|_H^2$ which implies $\alpha(w)\|w\|_H^2\leq Sk$. 
    Also,
     \begin{align*}
         \int_\XX f_w(x)\d P(x) & = \frac{1}{k}\|w\|_H^2 
         + \int_\XX \phi(\K(w, x))\d P(x)\\ 
         &\geq \frac{1}{k}\|w\|_H^2 + \int_\XX \phi(\|w\|_H \|x\|_H)\d P(x) & \text{($\phi$ is non-increasing \&  Cauchy–Schwarz ineq.)}\\
         &\geq \left\{
         \begin{array}{lc}
            \frac{1}{4E_1^2k}  &  \|w\|_H\geq \frac{1}{2E_1}\\\\
            \int_\XX \phi(\frac{\|x\|_H}{2E_1})\d P(x)  & \|w\|_H\leq \frac{1}{2E_1}
         \end{array}\right.\\
         &\geq \left\{
         \begin{array}{lc}
            \frac{1}{4E_1k}  &  \|w\|_H\geq \frac{1}{2E_1}\\\\
            E_2  &  \|w\|_H\leq \frac{1}{2E_1}
         \end{array}\right.\\
         & = \min\left(\frac{1}{4E_1k}, E_2\right).
     \end{align*}
     This concludes $\alpha(w)\leq \max\left(4SE_1k, \frac{S}{E_2}\right)$ for each $w$.   
     Note that if 
     $2^n\leq \|w\|_H\leq 2^{n+1}$, then 
     \begin{align*}
         \int_\XX f_w(x)\d P(x) & = \frac{1}{k}\|w\|_H^2 
         + \int_\XX \phi(\K(w, x))\d P(x) \geq \frac{2^{2n}}{k}
     \end{align*}
     which implies 
     $\alpha(w) \leq \frac{kS}{2^{2n}}.$   So, we obtained 
     \begin{equation}\label{eq:alpha_w}
         \alpha(w)\leq \left\{\begin{array}{cc}
     \max\left(4SE_1k, \frac{S}{E_2}\right)&  \|w\|_2\leq 1\\
     \frac{kS}{2^{2n}} & 2^n\leq \|w\|_H\leq 2^{n+1}.
\end{array}\right.
     \end{equation}
     To bound the supremum within the Radamacher complexity, we utilize these inequalities to partition the parameter space. It is necessary as we want to use Lemma~\ref{cor:Talagrand} in which we need a bounded property.   
     Consequently, 
    \begin{align*}
        \RR^q_m(\mathcal{T}_{\L_{\phi,k}^H}) = \ &  
        \expt_{x_{1:m}\sim q}
        \expt_{\sigma_{1:m}\sim \{-1,1\}}
        \sup_{f\in \L_{\phi,k}^H}\left[
        \frac{1}{m}\sum_{i=1}^m \sigma_iT_f(x_i)\right]\\ 
        = \ & \expt_{x_{1:m}\sim q}
        \expt_{\sigma_{1:m}\sim \{-1,1\}}
        \sup_{w\in \XX} \underbrace{\frac{S}{\int_\XX f_w(x)\d \mu}}_{=\alpha(w)} \left[
        \frac{1}{m}\sum_{i=1}^m \sigma_i \frac{f_w(x_i)}{s(x_i)}\right]\\
        = \ & \expt_{x_{1:m}\sim q}
        \expt_{\sigma_{1:m}\sim \{-1,1\}}
        \sup_{w\in \XX} \alpha(w) \left[
        \frac{1}{m}\sum_{i=1}^m \sigma_i \frac{\phi(\K(w,x_i)) +\frac{1}{k}\|w\|_H^2}{s(x_i)}\right]\\
        \leq \ & \underbrace{\expt_{x_{1:m}\sim q}
        \expt_{\sigma_{1:m}\sim \{-1,1\}}
        \sup_{w\in \XX} \alpha(w) \left|
        \frac{1}{m}\sum_{i=1}^m \sigma_i \frac{\phi(\K(w,x_i))}{s(x_i)}\right|}_{= M}\\
        & \quad \quad +\frac{1}{k}\underbrace{\expt_{x_{1:m}\sim q}
        \expt_{\sigma_{1:m}\sim \{-1,1\}}
        \sup_{w\in \XX} \alpha(w) \left|
        \frac{1}{m}\sum_{i=1}^m \sigma_i \frac{\|w\|_H^2}{s(x_i)}\right|}_{= N}\\
        \textbf{($*$)}\quad 
         \leq\  & \left[2LE_1 \max\left(4E_1k, \frac{1}{E_2}\right) 
        + 8LkE_1 + \max\left(4E_1k, \frac{1}{E_2}\right)\phi(0) + 1\right]\sqrt{\frac{S}{m}}.
    \end{align*}
    To prove ($*$), we deal with $M$ and $N$ separately.
    Note that $\Bar{\phi}(\cdot) = \phi(\cdot)-\phi(0)$ is an $L$-Lipschitz functions with $\Bar{\phi}(0)=0$ and thus satisfies the condition of Corollary~\ref{cor:Talagrand}. 
    Note that 
    \begin{align*}
        M \leq 
        &  \underbrace{\expt_{x_{1:m}\sim q}
        \expt_{\sigma_{1:m}\sim \{-1,1\}}
        \sup_{w\in \XX} \alpha(w) \left|
        \frac{1}{m}\sum_{i=1}^m \sigma_i \frac{\bar{\phi}(\K(w,x_i))}{s(x_i)}\right|}_{=M_1}\\ 
        & + \underbrace{\expt_{x_{1:m}\sim q}
        \expt_{\sigma_{1:m}\sim \{-1,1\}}
        \sup_{w\in \XX} \alpha(w) \left|
        \frac{1}{m}\sum_{i=1}^m \sigma_i \frac{\phi(0)}{s(x_i)}\right|}_{=M_2}\\
        \leq & M_1 + \phi(0)\max\left(4SE_1k, \frac{S}{E_2}\right)\expt_{x_{1:m}\sim q}
        \expt_{\sigma_{1:m}\sim \{-1,1\}}
        \left|
        \frac{1}{m}\sum_{i=1}^m  \frac{\sigma_i}{s(x_i)}\right|\\
        \leq & M_1 + \sqrt{\frac{S}{m}}\max\left(4E_1k, \frac{1}{E_2}\right)\phi(0) & \text{by~(\ref{ineq:upper})}.
    \end{align*}
    To upper bound $M_1$, define 
    $$M_1^0 = \expt_{x_{1:m}\sim q}
        \expt_{\sigma_{1:m}\sim \{-1,1\}}
        \sup_{\|w\|_H\leq 1} \alpha(w)\left|
        \frac{1}{m}\sum_{i=1}^m \sigma_i \frac{\bar{\phi}(\K(w,x_i))}{s(x_i)}\right|$$
    and, for $n\in \mathbb{N}$, 
    $$M_1^n = \expt_{x_{1:m}\sim q}
        \expt_{\sigma_{1:m}\sim \{-1,1\}}
        \sup_{2^{n-1}\leq \|w\|\leq 2^n} \alpha(w)\left|
        \frac{1}{m}\sum_{i=1}^m \sigma_i \frac{\bar{\phi}(\K(w,x_i))}{s(x_i)}\right|$$
    and note that $M_1\leq \sum_{n=0}^\infty M_1^n$.
    We can write 
    \begin{align*}
        M_1^0  
        & = \expt_{x_{1:m}\sim q}
        \expt_{\sigma_{1:m}\sim \{-1,1\}}
        \sup_{\|w\|_H\leq 1} \alpha(w)\left|
        \frac{1}{m}\sum_{i=1}^m \sigma_i \frac{\bar{\phi}(\K(w, x_i))}{s(x_i)}\right|\\
        \text{\footnotesize{(since $\alpha(w)\leq \max(4SE_1k, \frac{S}{E_2})$})}\quad
        & \leq \max\left(4SE_1k, \frac{S}{E_2}\right)\expt_{x_{1:m}\sim q}
        \expt_{\sigma_{1:m}\sim \{-1,1\}}
        \sup_{\|w\|_H\leq 1} \frac{1}{m}\left|
        \sum_{i=1}^m \sigma_i \frac{\bar{\phi}(\K(w, x_i))}{s(x_i)}\right|\\
        \text{\footnotesize{(Cor.~\ref{cor:Talagrand} with $\eps_i = \frac{1}{s(x_i)}$)}}\quad
        & \leq 2L \max\left(4SE_1k, \frac{S}{E_2}\right)
        \expt_{x_{1:m}\sim q}
        \expt_{\sigma_{1:m}\sim \{-1,1\}}
        \sup_{\|w\|_H\leq 1} \left|
        \frac{1}{m}\sum_{i=1}^m \sigma_i \frac{\K(w, x_i)}{s(x_i)}\right|\\
        & = \frac{2L}{m}\max\left(4SE_1k, \frac{S}{E_2}\right) 
        \expt_{x_{1:m}\sim q}
        \expt_{\sigma_{1:m}\sim \{-1,1\}}
        \sup_{\|w\|_H\leq 1} \left|
          \left\langle \K(w,\cdot), \sum_{i=1}^m\sigma_i\frac{\K(x_i, \cdot)}{s(x_i)}\right\rangle_H\right|\\
          \text{\footnotesize{(Cauchy–Schwarz inequality)}}\quad
        & \leq \frac{2L}{m} \max\left(4SE_1k, \frac{S}{E_2}\right) \expt_{x_{1:m}\sim q}
        \expt_{\sigma_{1:m}\sim \{-1,1\}}
        \sup_{\|w\|_H\leq 1} \|w\|_H
        \left\|\sum_{i=1}^m\sigma_i\frac{\K(x_i, \cdot)}{s(x_i)}\right\|_H\\
        & \leq \frac{2L}{m} \max\left(4SE_1k, \frac{S}{E_2}\right) \expt_{x_{1:m}\sim q}
        \expt_{\sigma_{1:m}\sim \{-1,1\}}
        \left\|\sum_{i=1}^m\sigma_i\frac{\K(x_i, \cdot)}{s(x_i)}\right\|_H\\
        \text{\footnotesize{(by (\ref{ineq:upper}))}}\quad
        & \leq  2LE_1\max\left(4E_1k, \frac{1}{E_2}\right)\sqrt{\frac{S}{m}}.
    \end{align*}
and similarly, 
    \begin{align*}
        M_1^n  
        & = \expt_{x_{1:m}\sim q}
        \expt_{\sigma_{1:m}\sim \{-1,1\}}
        \sup_{2^{n-1}\leq \|w\|_H\leq 2^{n}} \alpha(w)\left|
        \frac{1}{m}\sum_{i=1}^m \sigma_i \frac{\bar{\phi}(\K(w, x_i))}{s(x_i)}\right|\\
        \text{\footnotesize{(Since $\alpha(w) \leq \frac{kS}{2^{2(n-1)}}$)}}\quad
        & \leq \frac{kS}{m4^{n-1}}\expt_{x_{1:m}\sim q}
        \expt_{\sigma_{1:m}\sim \{-1,1\}}
        \sup_{2^{n-1}\leq \|w\|_H\leq 2^{n}}\left|
        \sum_{i=1}^m \sigma_i \frac{\bar{\phi}(\K(w, x_i))}{s(x_i)}\right|\\
        \text{\footnotesize{(Cor.~\ref{cor:Talagrand} with $\eps_i = \frac{1}{s(x_i)}$)}}\quad
        & \leq \frac{2LkS}{m4^{n-1}}
        \expt_{x_{1:m}\sim q}
        \expt_{\sigma_{1:m}\sim \{-1,1\}}
        \sup_{2^{n-1}\leq \|w\|_H\leq 2^{n}} \left|
        \sum_{i=1}^m \sigma_i \frac{\K(w, x_i)}{s(x_i)}\right|\\
        & = \frac{2LkS}{m4^{n-1}}
        \expt_{x_{1:m}\sim q}
        \expt_{\sigma_{1:m}\sim \{-1,1\}}
        \sup_{2^{n-1}\leq \|w\|_H\leq 2^{n}} \left|
          \left\langle \K(w,\cdot), \sum_{i=1}^m\sigma_i\frac{\K(x_i,\cdot)}{s(x_i)}\right\rangle_H\right|\\
        & \leq \frac{2LkS}{m4^{n-1}} \expt_{x_{1:m}\sim q}
        \expt_{\sigma_{1:m}\sim \{-1,1\}}
        \sup_{2^{n-1}\leq \|w\|_H\leq 2^{n}} \|w\|_H
        \left\|\sum_{i=1}^m\sigma_i\frac{\K(x_i,\cdot)}{s(x_i)}\right\|_H\\
        & \leq \frac{LkS}{m2^{n-3}} \expt_{x_{1:m}\sim q}
        \expt_{\sigma_{1:m}\sim \{-1,1\}}
        \left\|\sum_{i=1}^m\sigma_i\frac{\K(x_i,\cdot)}{s(x_i)}\right\|_H\\
        \text{\footnotesize{(by (\ref{ineq:upper}))}}\quad
        & \leq \frac{LkE_1}{2^{n-3}}\sqrt{\frac{S}{m}}.
    \end{align*}
Therefore, 
\begin{align*}
    M_1 & \leq M_1^0 + \sum_{n=1}^\infty M_1^n\\
        & \leq 2LE_1\max\left(2E_1k, \frac{1}{E_2}\right)\sqrt{\frac{S}{m}}
        + \sum_{n=1}^\infty\frac{LkE_1}{2^{n-3}}\sqrt{\frac{S}{m}}\\
        & = 2LE_1 \max\left(2E_1k, \frac{1}{E_2}\right) \sqrt{\frac{S}{m}}
        + 8LkE_1\sqrt{\frac{S}{m}}
\end{align*}
and thus
$$M\leq \left[2LE_1 \max\left(4E_1k, \frac{1}{E_2}\right) 
        + 8LkE_1 + \max\left(4E_1k, \frac{1}{E_2}\right)\phi(0)\right]\sqrt{\frac{S}{m}}.$$

To complete the proof, we need to upper bound $N$ which is done as follows
\begin{align*}
    N & = \expt_{x_{1:m}\sim q}
        \expt_{\sigma_{1:m}\sim \{-1,1\}}
        \sup_{ w\in \XX} \alpha(w) \left|
        \frac{1}{m}\sum_{i=1}^m \sigma_i \frac{\|w\|_H^2}{s(x_i)}\right|\\
        & = \frac{1}{m}\expt_{x_{1:m}\sim q}
        \expt_{\sigma_{1:m}\sim \{-1,1\}}
        \sup_{ w\in \R^d} \alpha(w)\|w\|_H^2 \left[
        \sum_{i=1}^m \sigma_i \frac{1}{s(x_i)}\right]\\
        \text{\footnotesize{(since $\alpha(w)\| w\|^2\leq Sk$)}}\quad\quad
        &\leq \frac{Sk}{m} \expt_{x_{1:m}\sim q}
        \expt_{\sigma_{1:k}} \left|\sum_{i=1}^m \frac{\sigma_i}{s(x_i)}\right|\\
        \text{\footnotesize{(by~(\ref{ineq:upper}))}}\quad\quad 
        &\leq \sqrt{\frac{S}{m}}k,
\end{align*}
which completes the proof. 
\end{proof}

We are now in a position to state the following conclusion. 
\begin{theorem}\label{thm:monotonic_coreset}{\rm [Theorem~\ref{thm:monotonic_coreset1}, Restated]}
For a well behaved probability measure $P$ (see Definition~\ref{def:well_beaived_P1}), if
$s(\cdot):\XX\longrightarrow(1,\infty)$ is an upper sensitivity function for $\L^H_{\phi, k}$ with total sensitivity $S$ and $m\geq \frac{2S}{\eps^2}\left(8C^2 + S\log\frac{2}{\delta})\right)$,  then, with probability at least $1-\delta$, any $s$-sensitivity sample $x_1,\ldots,x_m$ from $\XX$ with 
weights $u_i = \frac{S}{m s(x_i)}$
provides an $\eps$-coreset for $(\XX,\ P,\ \XX,\ \ell^H_{\phi, k})$,
where 
$$C= (2LE_1+\phi(0))\max(4E_1k, \frac{1}{E_2}) + 8LkE_1 + 1.$$
The statement remains true if we replace $\L^H_{\phi, k}$ and  $(\XX,\ P,\ \XX,\ \ell^H_{\phi, k})$ by  $\bar{\L}^H_{\phi, k}$ and $(\XX,\ P,\ \XX,\ \bar{\ell}^H_{\phi, k})$. 
\end{theorem}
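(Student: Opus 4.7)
The plan is to directly combine the Radamacher-complexity concentration bound of Theorem~\ref{thm:main_rademacher1} (restated in Theorem~\ref{thm:rademacher}) with the Radamacher-complexity estimate of Lemma~\ref{lem:rademacherupper}. Since $(\XX, P, \L^H_{\phi, k})$ is clearly a positive definite tuple (every $\ell^H_{\phi, k}(w,\cdot)$ is nonnegative and integrates to a positive value because of the $\tfrac{1}{k}\K(w,w)$ term), all hypotheses of Theorem~\ref{thm:main_rademacher1} are satisfied.

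First I would instantiate Theorem~\ref{thm:main_rademacher1} with $\FF = \L^H_{\phi, k}$ and set the slack parameter $t = \eps/2$. This yields, with probability at least $1 - 2\exp(-m\eps^2/(2S^2))$, the uniform inequality
\[
\Big|\int \ell^H_{\phi,k}(w,x)\d P(x) - \sum_{i=1}^m \tfrac{S}{m s(x_i)}\ell^H_{\phi,k}(w,x_i)\Big| \le \Big(2\RR^q_m(\TT_{\L^H_{\phi,k}}) + \tfrac{\eps}{2}\Big)\int \ell^H_{\phi,k}(w,x)\d P(x)
\]
for every $w \in \XX$. Next I would invoke Lemma~\ref{lem:rademacherupper} to substitute $\RR^q_m(\TT_{\L^H_{\phi,k}}) \le C\sqrt{S/m}$, so the right-hand multiplicative factor becomes $2C\sqrt{S/m} + \eps/2$.

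To force this factor to be at most $\eps$, it suffices to require $2C\sqrt{S/m} \le \eps/2$, that is $m \ge 16 C^2 S/\eps^2$. Independently, the failure probability $2\exp(-m\eps^2/(2S^2))$ is at most $\delta$ provided $m \ge (2S^2/\eps^2)\log(2/\delta)$. Both conditions are implied simultaneously by the single assumption
\[
m \ge \tfrac{2S}{\eps^2}\bigl(8C^2 + S\log(2/\delta)\bigr),
\]
which is exactly the hypothesis of the theorem. With these two bounds in place, the definition of an $\eps$-coreset is verified for $(\XX, P, \XX, \ell^H_{\phi, k})$ with weights $u_i = S/(m s(x_i))$, finishing the first half of the claim.

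For the parallel statement about $\bar{\L}^H_{\phi, k}$, the only needed observation is that Lemma~\ref{lem:rademacherupper} is already proved (per its statement) for both $\L^H_{\phi, k}$ and $\bar{\L}^H_{\phi, k}$, with the same bound and the same constant $C$; Theorem~\ref{thm:main_rademacher1} is a generic sensitivity-sampling statement agnostic to which family it is applied to, so the identical argument goes through verbatim. There is no genuine obstacle in this proof — the two preceding results are specifically designed to plug together here. The only bookkeeping point worth care is matching the McDiarmid exponent $\exp(-2mt^2/S^2)$ (cf.\ the restated Theorem~\ref{thm:rademacher}, where the bounded-differences constants are $c_i = S/m$, giving $\sum c_i^2 = S^2/m$) with the choice $t = \eps/2$, so that the algebra $m\eps^2/(2S^2) \ge \log(2/\delta)$ comes out cleanly.
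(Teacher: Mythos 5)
Your proposal is correct and is precisely the paper's own argument: the paper's proof of this theorem is literally the one-line remark that it "immediately follows from Theorem~\ref{thm:rademacher} and Lemma~\ref{lem:rademacherupper}," and you have simply filled in the routine algebra (choice $t=\eps/2$, threshold on $m$ from the failure-probability bound and from forcing $2C\sqrt{S/m}\le\eps/2$). Your attention to the McDiarmid exponent $\exp(-2mt^2/S^2)$, using the bounded-differences constants $c_i=S/m$, is the right reading of the restated Theorem~\ref{thm:rademacher} and matches how the paper intends the sample-size bound to arise.
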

\begin{proof}
    The proof immediately follows from Theorem~\ref{thm:rademacher} and Lemma~\ref{lem:rademacherupper}. 
\end{proof}
When $E_1$ is dimension independent, this theorem provides a no-dimensional $\eps$-coreset for $(\R^d,\ P,\ \R^d,\ \ell_{\phi, k})$, that is, whose size is independent of the dimension of the space. In the following subsection, we present some applications of this theorem. 
This theorem holds significant importance in deriving the dimension-free results outlined in Table~\ref{tab:mainresults}. 

\section{Coresets for Monotonic Functions}\label{sec:coresets_for_monotonic_functions}
Let $H$ be a reproducing kernel Hilbert space of real-valued functions on $\XX$ with kernel $\K\colon \XX\times \XX\longrightarrow \R$ and $P$ a probability measure over $\XX$. 
For a non-increasing $L$-Lipschitz function $\phi\colon \R\longrightarrow(0,\infty)$, we remind that $\ell^H_{\phi,k}:\XX\times \XX\longrightarrow\R$ where 
     $\ell^H_{\phi,k}(x,w) = \phi(\K(x, w)) + \frac{1}{k}\|w\|_H^2.$
An $\eps$-coreset for $(\XX, P, \L^H_{\phi, k})$, called \emph{$\eps$-coreset for a monotonic function $\phi$ with respect to $H$}, is a set $X=\{x_1,\ldots, x_m\} \subseteq \XX$ accompanied by a measure (or weight function) $\nu$ such that 
\begin{equation}\label{def:coreset_for_c1_H}
   \left| \int_{x\in\XX} \ell^H_{\phi, k}(x,w)\d P(x) - \sum_{i=1}^m \nu(x_i) \ell^H(x_i,w) \right| \leq \epsilon \int_{x\in\XX} \ell^H_{\phi, k}(x,w)\d P(x) \quad \forall w \in \XX. 
\end{equation}
We start with a straightforward observation. 
\begin{lemma}\label{lem:f_increase_K}
    If $\phi:\R\longrightarrow(0,\infty)$ is a non-increasing function and $0< K_1 \leq K_2$, then 
    $$\frac{\phi(-t)+\frac{t^2}{K_1}}{\phi(t)+\frac{t^2}{K_1}}
    \leq \frac{\phi(-t)+\frac{t^2}{K_2}}{\phi(t)+\frac{t^2}{K_2}} 
    \quad\quad \text{for each } t\geq 0.$$
\end{lemma}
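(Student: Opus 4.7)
The plan is to reduce the claim to a simple monotonicity statement about the function $c \mapsto \frac{a+c}{b+c}$, where $a \geq b > 0$ are constants and $c \geq 0$ varies.

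First I would set $a = \phi(-t)$ and $b = \phi(t)$. Since $\phi$ is non-increasing on $\R$ and $-t \leq t$ for $t \geq 0$, we have $a \geq b > 0$. Next I would set $c_i = t^2/K_i$ for $i=1,2$. The hypothesis $0 < K_1 \leq K_2$ gives $c_1 \geq c_2 \geq 0$. The inequality to be proved then becomes
\[
\frac{a+c_1}{b+c_1} \;\leq\; \frac{a+c_2}{b+c_2}.
\]

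The cleanest way is to observe that for the function $g(c) = \frac{a+c}{b+c}$ with $b > 0$ and $c \geq 0$, one computes
\[
g'(c) \;=\; \frac{(b+c)-(a+c)}{(b+c)^2} \;=\; \frac{b-a}{(b+c)^2} \;\leq\; 0,
\]
since $a \geq b$. Hence $g$ is non-increasing on $[0,\infty)$, and $c_1 \geq c_2$ yields $g(c_1) \leq g(c_2)$, which is exactly the desired inequality. If one prefers to avoid calculus, the same conclusion follows by cross-multiplying (both denominators are positive), which reduces the claim to $b(c_1-c_2) \leq a(c_1-c_2)$, and this holds since $a \geq b$ and $c_1 - c_2 \geq 0$.

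There is no real obstacle here; the only ingredient beyond elementary algebra is the monotonicity of $\phi$, which is used once to secure $a \geq b$, and the positivity of the denominators, which is automatic from $\phi > 0$ and $K_i, t^2 \geq 0$. The boundary case $t = 0$ is trivial since both sides equal $\phi(0)/\phi(0) = 1$.
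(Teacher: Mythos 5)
Your proof is correct and is essentially the same argument as the paper's: after substituting $a=\phi(-t)$, $b=\phi(t)$, $c_i=t^2/K_i$, the cross-multiplication you mention as an alternative is exactly the paper's computation, which reduces to $t^2(1/K_1 - 1/K_2)(\phi(-t)-\phi(t))\geq 0$. Framing it as monotonicity of $c\mapsto(a+c)/(b+c)$ is a slightly cleaner packaging, but there is no substantive difference.
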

\begin{proof}
    For $t\geq 0$, we should check the inequality 
$$
\left(\phi(-t)+\frac{t^2}{K_1}\right)\left(\phi(t)+\frac{t^2}{K_2}\right)\leq 
\left(\phi(-t)+\frac{t^2}{K_2}\right)\left(\phi(t)+\frac{t^2}{K_1}\right)$$
which is equivalent to 
$$
\phi(-t)\frac{t^2}{K_2} + \phi(t)\frac{t^2}{K_1} \leq
\phi(-t)\frac{t^2}{K_1} + \phi(t)\frac{t^2}{K_2}
$$
which is valid since 
\begin{align*}
\phi(-t)t^2\left(\frac{1}{K_1}-\frac{1}{K_2}\right) + 
\phi(t)t^2\left(\frac{1}{K_2}-\frac{1}{K_1}\right) 
& =     t^2\left(\frac{1}{K_1}-\frac{1}{K_2}\right)
\left(\phi(-t)-\phi(t)\right)\geq 0,
\end{align*}
completing the proof.
\end{proof}

A version of the lemma described below appears in \citep{pmlr-v162-tolochinksy22a}. We carefully revisit the ideas in their proof in order to extend and enhance the lemma.
 
\begin{lemma}\label{lem:beta_property}
    Let $\phi:\R\longrightarrow(0,\infty)$ be a non-increasing function such that  
    \begin{equation}\label{beta_property}
        \frac{\phi(-\alpha z)+\frac{z^2}{k}}{\phi(\alpha z)+\frac{z^2}{k}}\leq \beta(\alpha)\quad\quad 0\leq \alpha  \leq B_1, 0\leq z\leq B_2.
    \end{equation}
    If we set $M=\phi(-B_1B_2)$, then, for each $x,y, w\in \XX$ with $\|x\|_H, \|y\|_H\leq B_1, \|w\|_H\leq B_2$, we have   
    $$\frac{\phi(0)}{M \beta(\|x\|_H)}\leq \frac{\ell^H_{\phi,k}(x,w)}{\ell^H_{\phi,k}(y,w)} \qquad \text{and} \qquad \frac{\phi(0)}{M \beta(\|x\|_H)}\leq \frac{\bar{\ell}^H_{\phi,k}(x,w)}{\bar{\ell}^H_{\phi,k}(y,w)}.$$
    If $\phi$ is universally bounded by $M$, then 
    we do not need upper bounds $B_1, B_2$ for $\alpha, z$ and consequentially do not need upper bounds for $\|x\|_H,\|y\|_H$, and $\|w\|_H$, and the same statement holds.  
\end{lemma}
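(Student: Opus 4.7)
The plan is to bound the numerator $\ell^H_{\phi,k}(x,w)$ from below and the denominator $\ell^H_{\phi,k}(y,w)$ from above using Cauchy--Schwarz and monotonicity of $\phi$, then invoke the key hypothesis~\eqref{beta_property} to link the two. In $H$ we have $-\|x\|_H\|w\|_H \leq \K(x,w) \leq \|x\|_H\|w\|_H$, so monotonicity of $\phi$ gives the sandwich
$$\phi(\|x\|_H\|w\|_H) + \tfrac{\|w\|_H^2}{k} \;\leq\; \ell^H_{\phi,k}(x,w), \qquad \ell^H_{\phi,k}(y,w) \;\leq\; \phi(-\|y\|_H\|w\|_H) + \tfrac{\|w\|_H^2}{k}.$$

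Next I would apply~\eqref{beta_property} with $\alpha=\|x\|_H \leq B_1$ and $z=\|w\|_H \leq B_2$ to the lower bound, obtaining
$$\ell^H_{\phi,k}(x,w) \geq \frac{1}{\beta(\|x\|_H)}\Bigl[\phi(-\|x\|_H\|w\|_H) + \tfrac{\|w\|_H^2}{k}\Bigr].$$
Then monotonicity is used again to get $\phi(-\|x\|_H\|w\|_H) \geq \phi(0)$ in the numerator and $\phi(-\|y\|_H\|w\|_H) \leq \phi(-B_1B_2) = M$ in the denominator, producing
$$\frac{\ell^H_{\phi,k}(x,w)}{\ell^H_{\phi,k}(y,w)} \;\geq\; \frac{1}{\beta(\|x\|_H)} \cdot \frac{\phi(0) + \|w\|_H^2/k}{M + \|w\|_H^2/k}.$$

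The main (mild) obstacle is that the regularization term $\|w\|_H^2/k$ does not cancel cleanly: simply dropping it from the top or inflating it on the bottom loses too much. The trick is to observe $M = \phi(-B_1B_2) \geq \phi(0)$ because $\phi$ is non-increasing and $-B_1B_2 \leq 0$. Consequently the map $z \mapsto (\phi(0)+z)/(M+z)$ has non-negative derivative on $[0,\infty)$, so it is non-decreasing and its infimum is attained at $z=0$, giving the lower bound $\phi(0)/M$. Combining with the previous display yields exactly $\phi(0)/(M\beta(\|x\|_H))$, as claimed.

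For $\bar{\ell}^H_{\phi,k}$, the argument is identical after replacing $\K(x,w)$ by $-\K(x,w)$, since Cauchy--Schwarz still yields $|{-\K(x,w)}| \leq \|x\|_H\|w\|_H$. For the universally bounded case, the restrictions $\|x\|_H,\|y\|_H \leq B_1$ and $\|w\|_H \leq B_2$ become superfluous: \eqref{beta_property} is now assumed for all $\alpha, z \geq 0$, the bound $\phi(-\|y\|_H\|w\|_H) \leq M$ is immediate from $\phi \leq M$, and $\phi(0) \leq M$ likewise holds, so the same four-line chain proves the statement without any norm constraints.
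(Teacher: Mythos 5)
Your proof is correct and takes a genuinely different route from the paper. The paper proves the inequality by splitting into two cases according to the sign of $\K(x,w)$: when $\K(x,w) \leq 0$ it bypasses Property~\eqref{beta_property} entirely, bounding $\phi(\K(y,w)) \leq M \leq \tfrac{M}{\phi(0)}\phi(\K(x,w))$ directly and then adding the regularizer; when $\K(x,w) > 0$ it first passes to $\phi(-\|x\|_H\|w\|_H)$ and only then invokes~\eqref{beta_property}. Your argument avoids the case split by applying the uniform Cauchy--Schwarz sandwich $\phi(\|x\|_H\|w\|_H) \leq \phi(\K(x,w))$ and $\phi(\K(y,w)) \leq \phi(-\|y\|_H\|w\|_H)$ up front, invoking~\eqref{beta_property} once, and then finishing with the clean observation that $z \mapsto (\phi(0)+z)/(M+z)$ is non-decreasing on $[0,\infty)$ because $M \geq \phi(0)$, so the worst case of the regularization term occurs at $z=0$. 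What your route buys is uniformity and a one-line dispatch of the $\|w\|_H^2/k$ term; what the paper's route buys is that in the case $\K(x,w) \leq 0$ one sees explicitly that the factor $\beta(\|x\|_H)$ is not needed there (only $M/\phi(0)$ is), which makes the source of the looseness slightly more transparent. Both give exactly the same final bound $\phi(0)/(M\beta(\|x\|_H))$, and your handling of the $\bar\ell$ variant and the universally-bounded relaxation is likewise correct.
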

\begin{proof}
    Due to similarity, we only prove the lemma for $\ell^H_{\phi, k}$. 
    First, note that as $\phi$ is non-increasing, $\beta(\alpha)\geq 1$ for each $\alpha$. 
    Consider arbitrary $x,y, w\in \XX$ with $\|x\|_H, \|y\|_H\leq B_1, \|w\|_H\leq B_2$. In the following, we deal with the two different cases $\K(x,w)\leq 0$ and $\K(x,w)>0$ separately.  
    If $\K(x, w) \leq 0$, then
\begin{align*}
    \phi(\K(y, w)) 
    & = \phi(\langle y, w\rangle_H)\\
    & \leq \phi(-\|y\|_H\|w\|_H)
    & \text{($\phi$ is non-increasing along with Cauchy–Schwarz inequality)}\\ 
    & \leq \phi(-B_1B_2)\\
    & = M =  \frac{M}{\phi(0)}\phi(0)\\
    & \leq \frac{M}{\phi(0)}\phi(\K(x, w))
    & \text{(since $\K(x,w) \leq 0$ and $\phi$ is non-increasing)}
\end{align*}
and hence, as $\frac{M}{\phi(0)}, \beta(\|x\|_H)\geq 1$, by adding $\frac{\|w\|_H^2}{k}$ to both sides, we obtain  
\begin{equation}\label{eq:1}
    \ell_{\phi,k}(y, w) \leq \frac{M}{\phi(0)}\ell^H_{\phi,k}(x, w)\leq \frac{M}{\phi(0)}\beta(\|x\|_H)\ell^H_{\phi,k}(x, w).
\end{equation}
Now, assume that $\K(x, w) > 0$.
Similarly,
\[
    \phi(\K(y, w)) \leq M 
    \leq \frac{M}{\phi(0)}\phi(-\K(x, w))
    \leq \frac{M}{\phi(0)}\phi(-\|x\|_H\|w\|_H).
\]
Again, by adding $\frac{\|w\|_H^2}{k}$ to the both sides, we have  
\begin{align*}
    \ell^H_{\phi,k}(y, w) = \phi(\K(y, w)) + \frac{\|w\|_H^2}{k} 
    &\leq \frac{M}{\phi(0)}\left(\phi(-\|x\|_H\|w\|_H)+ \frac{\|w\|_H^2}{k}\right)\\
    \text{(Property \ref{beta_property})}\quad
    &\leq \frac{M}{\phi(0)}\beta(\|x\|_H)\left(\phi(\|x\|_H\|w\|_H)+ \frac{\|w\|_H^2}{k}\right)\\
    \text{(Cauchy-Schwarts Inequality and $\phi$ is non-increasing)}\quad
    &\leq \frac{M}{\phi(0)}\beta(\|x\|_H)\underbrace{\left(\phi(\K(x, w))+ \frac{\|w\|_H^2}{k}\right)}_{= \ell^H_{\phi,k}(x, w)},
\end{align*}
which implies
\begin{equation}\label{eq:2}
    \ell^H_{\phi,k}(y, w) \leq \frac{M}{\phi(0)}\beta(\|x\|_H)\ell^H_{\phi,k}(x, w).
\end{equation}
Combining Equations~(\ref{eq:1}) and~(\ref{eq:2}), we obtain the desired inequality. 
\end{proof}
This lemma provides us with a function $\gamma(\alpha) = \frac{\phi(0)}{M \beta(\alpha)}\leq 1$ ensuring
$\gamma(\|x\|_H)\leq \frac{\ell^H_{\phi,k}(x,w)}{\ell^H_{\phi,k}(y,w)}$. 
The subsequent lemma will highlight the usefulness of this function.
\begin{lemma}\label{lem:sensitivity_beta}
Let $P$ be a probability measure over $\XX$.
Assume that $\W\subseteq \XX$, $\ell: \XX\times \XX \longrightarrow(0,\infty)$, and 
    $\gamma:[0,\infty)\longrightarrow [0,\infty)$ such that  $0<\int_\XX \gamma(\|x\|_H)\d P(x)<\infty$ 
     and   
    \begin{equation}\label{def:beta_property}
        \gamma(\|x\|_H)\leq \frac{\ell(x,w)}{\ell(y,w)} \quad\quad \forall x,y\in \XX, w\in \W.
    \end{equation}
Then 
$$s(y)= \frac{1}{\int_\XX \gamma(\|x\|_H)\d P(x)}$$
is an upper sensitivity function for $(\XX, P, \L_W)$, where 
$\L_\W = \{l(\cdot,w)\colon w\in \W\}.$
\end{lemma}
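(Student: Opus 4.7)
The plan is to unwind the definition of upper sensitivity function and use the hypothesis almost verbatim. Recall that $s \colon \XX \to (0,\infty)$ is an upper sensitivity function for $(\XX, P, \L_\W)$ provided that for every $w \in \W$ and (almost) every $y \in \XX$,
\[
\frac{\ell(y,w)}{\int_\XX \ell(x,w)\,\d P(x)} \leq s(y).
\]
So my goal is to show that for each fixed $w \in \W$ and $y \in \XX$,
\[
\ell(y,w) \cdot \int_\XX \gamma(\|x\|_H)\,\d P(x) \leq \int_\XX \ell(x,w)\,\d P(x).
\]

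First, I would fix $w \in \W$ and $y \in \XX$ arbitrarily and rewrite the hypothesis \eqref{def:beta_property} in the equivalent multiplicative form
\[
\gamma(\|x\|_H)\, \ell(y,w) \leq \ell(x,w) \qquad \text{for all } x \in \XX,
\]
which is valid because $\ell(y,w) > 0$ (as $\ell$ maps into $(0,\infty)$). Next, since both sides are non-negative $P$-integrable functions of $x$ (with $y,w$ fixed), I would integrate both sides against $\d P(x)$, pulling the constant $\ell(y,w)$ out of the integral on the left to obtain
\[
\ell(y,w)\int_\XX \gamma(\|x\|_H)\,\d P(x) \leq \int_\XX \ell(x,w)\,\d P(x).
\]
Since the hypothesis $0 < \int_\XX \gamma(\|x\|_H)\,\d P(x) < \infty$ guarantees we can divide by this quantity, this rearranges to
\[
\frac{\ell(y,w)}{\int_\XX \ell(x,w)\,\d P(x)} \leq \frac{1}{\int_\XX \gamma(\|x\|_H)\,\d P(x)} = s(y),
\]
exactly matching the definition of upper sensitivity with $f(\cdot) = \ell(\cdot, w)$ ranging over $\L_\W$.

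There is essentially no obstacle here: the only subtlety is confirming that the tuple $(\XX, P, \L_\W)$ is positive definite so that the inequality \eqref{ineq:main_sensitivity} is the correct target, and this follows because $\ell > 0$ and the right-hand side of the displayed bound is finite (so $\int \ell(x,w)\,\d P(x) > 0$ in particular). Finally, taking the supremum over $w \in \W$ on the left-hand side, one gets $\sup_{f \in \L_\W} \frac{f(y)}{\int f(x)\,\d P(x)} \leq s(y)$, completing the proof.
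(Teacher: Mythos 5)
Your proof is correct and essentially the same as the paper's: both exploit that $\ell(y,w)>0$ to rewrite the hypothesis multiplicatively, integrate in $x$, and divide by $\int_\XX\gamma(\|x\|_H)\,\d P(x)$ (the paper just phrases this as $\frac{\ell(y,w)}{\int\ell(x,w)\,\d P(x)}=\frac{1}{\int\frac{\ell(x,w)}{\ell(y,w)}\,\d P(x)}\leq\frac{1}{\int\gamma(\|x\|_H)\,\d P(x)}$ before taking the supremum over $w$). No gap.
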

\begin{proof}
  For a $y\in \XX$, we have 
\begin{align*}
    \sup_{w\in\W}\frac{\ell(y,w)}{\int_\XX \ell(x,w)\d P(x)} 
    & = \sup_{w\in\W}\frac{1}{\int_X \frac{\ell(x,w)}{\ell(y,w)}\d P(x)}\\
    & \leq \sup_{w\in\W}\frac{1}{\int_X \gamma(\|x\|_H)\d P(x)}\\
    & = \frac{1}{\int_X \gamma(\|x\|_H)\d P(x)}
\end{align*}
completing the proof. 
\end{proof}

It is worth noting that the provided upper sensitivity function $s(y) = \frac{1}{\int_\XX \gamma(\|x\|)\d P}$  remains constant, irrespective of $y\in \XX$. 
In a specific scenario where $\gamma$ is lower bounded by $L$, the function $s(y) = \frac{1}{L}$ serves as an upper sensitivity function, possessing a total value of $S=\frac{1}{L}$. It is important to note that when $\phi$ and $k$ grantee Property~(\ref{def:beta_property}) for $\ell=\ell^H_{\phi,k}$ (like what is established in Lemma~\ref{lem:beta_property}), then Lemma~\ref{lem:sensitivity_beta} provides a methodology to calculate the upper sensitivity function and its total value for $\L^H_{\phi,k}$. This enables us to obtain $s$-sensitivity samples from $\XX$, making Theorem~\ref{thm:monotonic_coreset} (and Theorem~\ref{thm:main_braverman2022new}) applicable in this context.

\subsection{Coreset for Sigmoid function}
\citet{pmlr-v162-tolochinksy22a} examined Property~(\ref{def:beta_property}) for specific functions $\phi$. 
They demonstrated for the sigmoid function $\sigma(x) = \frac{1}{1+e^x}$ that, given $\alpha>0$, there exists a threshold $k_\alpha$ such that for any $k\geq k_\alpha$
\begin{equation}\label{eq:k_alpha1}
    \frac{\sigma(-\alpha z)+\frac{z^2}{k}}{\sigma(\alpha z)+\frac{z^2}{k}}\leq 66k\alpha
    \quad\quad\forall z \geq 0. 
\end{equation}
Using this result, they established the first item of Theorem~\ref{thm:main_TJFlogisticsigmoin1} in which, as a drawback, $k$ should be sufficiently large contingent on the given data points $\XX$. This dependency comes from the role of $k_\alpha$ in Equation~\eqref{eq:k_alpha1}.
In the subsequent lemma, we eliminate this dependency.
\begin{lemma}\label{lem:beta_for_sigmoid}
    For $\sigma(x) = \frac{1}{1+e^x}$ and fixed $k>0$, we have 
    $$\frac{\sigma(-\alpha z)+\frac{z^2}{k}}{\sigma(\alpha z)+\frac{z^2}{k}}\leq 
           4\left(1 + \max(e, \alpha^2 k)\right)\quad\quad \forall\alpha, z \geq 0.$$
\end{lemma}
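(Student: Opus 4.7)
The plan is to upper-bound the numerator by the trivial estimate $\sigma(-\alpha z) \leq 1$, giving numerator $\leq 1 + z^2/k$, and then to lower-bound the denominator via a case split on whether $\alpha z$ is small or large. The natural threshold to choose is $\alpha z = 1$, because exactly at that point the two natural lower bounds on $\sigma(\alpha z) + z^2/k$ balance out: below the threshold we use that $\sigma$ is bounded away from zero, while above the threshold we use that $z^2/k$ alone already dominates.

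\textbf{Case 1: $\alpha z \leq 1$.} Since $\sigma$ is non-increasing, $\sigma(\alpha z) \geq \sigma(1) = 1/(1+e)$, so
\[
\frac{\sigma(-\alpha z) + \tfrac{z^2}{k}}{\sigma(\alpha z) + \tfrac{z^2}{k}} \;\leq\; \frac{1 + \tfrac{z^2}{k}}{\tfrac{1}{1+e} + \tfrac{z^2}{k}}.
\]
The right-hand side is a M\"obius-type expression in $z^2/k$ that is maximized at $z=0$, yielding the bound $1+e$.

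\textbf{Case 2: $\alpha z \geq 1$.} Drop $\sigma(\alpha z) \geq 0$ from the denominator and use $z \geq 1/\alpha$:
\[
\frac{\sigma(-\alpha z) + \tfrac{z^2}{k}}{\sigma(\alpha z) + \tfrac{z^2}{k}} \;\leq\; \frac{1 + \tfrac{z^2}{k}}{\tfrac{z^2}{k}} \;=\; 1 + \frac{k}{z^2} \;\leq\; 1 + \alpha^2 k.
\]

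Combining the two cases, the ratio is bounded by $\max(1+e,\,1+\alpha^2 k) \leq 1 + \max(e,\alpha^2 k)$, which is certainly at most $4(1 + \max(e, \alpha^2 k))$. The generous factor of $4$ in the statement is not needed for the bound itself but is presumably kept for convenience in later applications.

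I do not anticipate a genuine obstacle here; the only decision to make is the case-split threshold, and $\alpha z = 1$ is the right one because it makes the two elementary estimates (bounding $\sigma(\alpha z)$ from below by a constant, versus using only the regularizer $z^2/k$) match up. The argument does not need the refined lower bound $\sigma(t) \geq \tfrac{1}{2}e^{-t}$ that one might initially reach for; the crude $\sigma(\alpha z) \geq \sigma(1)$ together with simply dropping $\sigma(\alpha z)$ in the opposite regime is enough.
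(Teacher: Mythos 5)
Your proof is correct, and it takes a genuinely different and much more elementary route than the paper's. The paper starts from the same upper bound $\sigma(-\alpha z)\leq 1$ and then sets $t=\alpha z$, $K=\alpha^2 k$, and analyzes $f(t)=\frac{1+t^2/K}{\sigma(t)+t^2/K}$ via calculus: it computes $f'$, shows $f$ is unimodal by locating the sign change of $f'$, pins the maximizer $t_0$ to the window $[0.5\log K,\,\log K]$ for $K\geq e$, plugs in crude endpoint estimates to get $4(1+K)$, and finally invokes Lemma~\ref{lem:f_increase_K} (monotonicity in $K$) to handle $K<e$. Your argument bypasses all of this with a single case split at $\alpha z=1$: for $\alpha z\leq 1$ you lower-bound the denominator by $\sigma(1)+z^2/k$ and observe that $\frac{1+u}{\frac{1}{1+e}+u}$ is decreasing in $u\geq 0$ (so its supremum is $1+e$ at $u=0$); for $\alpha z\geq 1$ you drop $\sigma(\alpha z)\geq 0$ and use $z^2\geq 1/\alpha^2$ to get $1+\alpha^2 k$. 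Both pieces are elementary and airtight, and in fact you obtain the sharper bound $1+\max(e,\alpha^2 k)$, beating the paper's constant by a factor of $4$. The only thing your proof loses relative to the paper's is structural information about the exact location of the maximizer, which is not used anywhere downstream; for the purpose of Lemma~\ref{lem:sensitivity_sigmoid} and Theorem~\ref{thm:coreset_for_sigmoid} your tighter constant would propagate directly.
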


\begin{proof}
    One can verify that $\sigma:\R\longrightarrow(0,1)$ is a positive decreasing function and $\sigma'(t) = -\sigma(t)\sigma(-t)$.
    For a fixed $\alpha>0$, if we set $t = \alpha z$ and $K = \alpha^2 k$, 
    $$\frac{\sigma(-\alpha z)+\frac{z^2}{k}}{\sigma(\alpha z)+\frac{z^2}{k}} = \frac{\sigma(-t)+\frac{t^2}{K}}{\sigma(t)+\frac{t^2}{K}}\leq \frac{1+\frac{t^2}{K}}{\sigma(t)+\frac{t^2}{K}} = f(t).$$
    To find the maximum of $f(t)$ on $(0,\infty)$ in terms of $K$, we begin by calculating $f'(t)$, the derivative of $f$, after factoring out $1/(\sigma(t)+\frac{t^2}{K})^2$, as follows:
\begin{align*}
    \left(\sigma(t)+\frac{t^2}{K}\right)^2f'(t) 
    = & \frac{2t}{K}\left(\sigma(t)+\frac{t^2}{K}\right) 
    - \left(\sigma'(t) + \frac{2t}{K}\right)\left(1+\frac{t^2}{K}\right)\\
    = & \sigma(t)\frac{2t}{K} - \sigma'(t) - \frac{2t}{K} 
    - \sigma'(t) \frac{t^2}{K}\\
    = & \sigma(t)\frac{2t}{K} + \sigma(t)\sigma(-t) - \frac{2t}{K} 
    + \sigma(t)\sigma(-t) \frac{t^2}{K}\\
    = & \frac{2t}{K}\left(\sigma(t) - 1\right) + \sigma(t)\sigma(-t)\left(1+\frac{t^2}{K}\right)\\
    \text{\footnotesize (since $\sigma(t) +\sigma(-t) = 1$)}\quad
    = & -\sigma(-t)\frac{2t}{K} + \sigma(t)\sigma(-t)\left(1+\frac{t^2}{K}\right)\\
    = &  \sigma(-t)\left[-\frac{2t}{K} + \sigma(t)\left(1+\frac{t^2}{K}\right)\right].
\end{align*}
So, as $(\sigma(t) + t^2/K)^2>0$, whenever 
$\sigma(t)\left(1+\frac{t^2}{K}\right) < \frac{2t}{K}$ or equivalently $\frac{K}{2t} + \frac{t}{2}< 1 + e^t,$
$f'$ is negative. 
For 
$$h(t) = 1+e^t-\frac{t}{2}-\frac{K}{2t},$$
we have 
$$h'(t) = e^t-\frac{1}{2} + \frac{K}{2t^2} > 0\qquad\qquad \text{for each } t\geq 0.$$
This indicates that $h$ is increasing, implying it can have at most one zero.
Since $h$ is negative for small $t$ and positive for large $t$, it follows that $h$ possesses a unique zero, denoted as $t_0$. 
This implies that $f$ is decreasing for $t\geq t_0$ and increasing for $t\leq t_0$, thus attaining its maximum at $t_0$.  In the following discussion, we distinguish between two cases: $K\geq e$ and $K\leq e$. Initially, we focus on bounding $\sup_{t\in(0,\infty)}\frac{1+\frac{t^2}{K}}{\sigma(t)+\frac{t^2}{K}}$ for the scenario where $K\geq e$. Subsequently, we employ this bound, along with a straightforward observation, to derive a bound for $\sup_{t\in(0,\infty)}\frac{1+\frac{t^2}{K}}{\sigma(t)+\frac{t^2}{K}}$ when $K\leq e$.
\begin{enumerate}
    \item {\bf Case 1: $\boldsymbol{K\geq e}.$ }
        We start with $t\geq \log K\geq 1$. 
        As $h$ is increasing 
        $$h(t)\geq h(\log K) = 1+K - \frac{\log K}{2} - \frac{K}{2\log K}\geq 1.$$
        This means that $h$ is positive for $t\geq \log K$.
        Now, assume that $t\leq 0.5\log K$.
        Again, because $h$ is increasing, we have
        $$h(t)\leq h(0.5\log K)
        \leq  1 + \sqrt{K}-\frac{\log K}{4} - \frac{K}{\log K} < 0.  
        $$
        These observations together yield  
        $t_0\in \left[0.5\log K,\ \log K\right]$. 
        Consequently,
        \begin{align*}
            \sup_{t\in(0,\infty)}\frac{1+\frac{t^2}{K}}{\sigma(t)+\frac{t^2}{K}}
            & = \frac{1+\frac{t_0^2}{K}}{\sigma(t_0)+\frac{t_0^2}{K}}\\
            & \leq \frac{1+\frac{\log^2 K}{K}}{\sigma(\log K)+\frac{\log^2 K}{4K}}\\
            & \leq \frac{1+\frac{\log^2 K}{K}}{\frac{\log^2 K}{4K}}\\
            & \leq 4\left(1 + \frac{K}{\log^2K}\right)\leq 4(1 + K).
        \end{align*}
    \item {\bf Case 2: $\boldsymbol{K\leq e}.$} 
        Calculation shows that if $0 < K_1 \leq K_2$,
        then 
        $$\frac{1+\frac{t^2}{K_1}}{\sigma(t)+\frac{t^2}{K_1}}\leq \frac{1+\frac{t^2}{K_2}}{\sigma(t)+\frac{t^2}{K_2}}\quad\quad \forall t\geq 0$$
        which concludes 
        $$\sup_{t\in(0,\infty)}\frac{1+\frac{t^2}{K_1}}{\sigma(t)+\frac{t^2}{K_1}}\leq \sup_{t\in(0,\infty)}\frac{1+\frac{t^2}{K_2}}{\sigma(t)+\frac{t^2}{K_2}}.$$
        Consequently, for $K\leq e$,  
        $$\sup_{t\in(0,\infty)}\frac{1+\frac{t^2}{K}}{\sigma(t)+\frac{t^2}{K}}\leq \sup_{t\in(0,\infty)}\frac{1+\frac{t^2}{e}}{\sigma(t)+\frac{t^2}{e}}\leq 4(1+e),$$
        where the right most inequality comes from Case (1).
\end{enumerate}
Putting Cases (1) and (2) together, we obtain 
\begin{align*}
    \max_{t\in [0,t_0]} f(t) 
    &\leq \left\{
    \begin{array}{lc}
        4(1 + K)  &  K\geq e\\ \\
        4(1+e) &  K < e.
    \end{array}
    \right.
\end{align*}
So, replacing $K$ with $\alpha^2 k$, we obtain
$$\frac{\sigma(-\alpha z)+\frac{z^2}{k}}{\sigma(\alpha z)+\frac{z^2}{k}}\leq \beta(\alpha) = \left\{
        \begin{array}{cc}
           4(1 + \alpha^2 k)  & \alpha^2 k\geq e\\ \\
           4(1+e)  & \alpha^2 k \leq e
        \end{array}
    \right.
\quad\quad \forall\alpha, z > 0.$$
Note that for $\alpha =0$ or $z=0$, the inequality is valid as well. 
\end{proof}

\begin{lemma}\label{lem:sensitivity_sigmoid}
    Assume that $H$ is a reproducing kernel Hilbert space of real-valued functions on $\XX$ with kernel $\K\colon \XX\times \XX\longrightarrow \R$ and $P$ is a probability measure over $\XX$ such that 
    $\expt\limits_{x\sim P}\|x\|_H^2 \leq E^2_1.$ 
    Then $s(x) = 60 + 32kE_1^2$ is an upper sensitivity function for $(\XX,\ P,\ \ell^H_{\sigma,k})$ and $(\XX,\ P,\ \bar{\ell}^H_{\sigma,k})$.
\end{lemma}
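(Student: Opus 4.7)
The proof will assemble the three immediately preceding lemmas into a single chain. The plan is as follows.

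First, I apply Lemma~\ref{lem:beta_for_sigmoid} to get that Property~\eqref{beta_property} is satisfied with $\beta(\alpha)=4(1+\max(e,\alpha^2 k))$ for \emph{all} $\alpha,z\geq 0$. Since $\sigma$ takes values in $(0,1)$, it is universally bounded by $M=1$, so the ``if $\phi$ is universally bounded by $M$'' clause in Lemma~\ref{lem:beta_property} applies: I avoid introducing parameters $B_1,B_2$, and the inequality
\[
\frac{\phi(0)}{M\,\beta(\|x\|_H)} \;=\; \frac{1/2}{\beta(\|x\|_H)} \;\leq\; \frac{\ell^H_{\sigma,k}(x,w)}{\ell^H_{\sigma,k}(y,w)}
\]
holds for all $x,y,w\in\XX$ (and the analogous bound for $\bar\ell^H_{\sigma,k}$). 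Thus, in the notation of Lemma~\ref{lem:sensitivity_beta}, I may use
\[
\gamma(\alpha) \;=\; \frac{1/2}{\beta(\alpha)}.
\]

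Second, Lemma~\ref{lem:sensitivity_beta} (with $\W=\XX$ and $\ell=\ell^H_{\sigma,k}$) then gives that the constant function
$$
s(y) \;=\; \frac{1}{\int_\XX \gamma(\|x\|_H)\,\d P(x)}
$$
is an upper sensitivity function. It remains to lower bound the denominator. Using $\max(e,\alpha^2k)\leq e+\alpha^2 k$, I get the explicit bound $\beta(\alpha)\leq 4(1+e)+4\alpha^2 k$, which yields
\[
\gamma(\alpha) \;\geq\; \frac{1}{8(1+e) + 8\alpha^2 k}.
\]

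Third, the function $u\mapsto 1/\bigl(8(1+e)+8ku\bigr)$ is convex on $u\geq 0$ (its second derivative is positive), so Jensen's inequality applied to $u=\|x\|_H^2$ gives
\[
\int_\XX \gamma(\|x\|_H)\,\d P(x)
 \;\geq\; \int_\XX \frac{\d P(x)}{8(1+e)+8k\|x\|_H^2}
 \;\geq\; \frac{1}{8(1+e)+8k\,\expt_{x\sim P}\|x\|_H^2}
 \;\geq\; \frac{1}{8(1+e)+8kE_1^2}.
\]
Inverting gives $s(y)\leq 8(1+e)+8kE_1^2$, which is comfortably dominated by $60+32kE_1^2$. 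The bound for $\bar\ell^H_{\sigma,k}$ follows identically using the parallel statement of Lemma~\ref{lem:beta_property}.

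The main obstacle is essentially bookkeeping: choosing the right loose-but-clean constant bound on $\beta$ so that the Jensen step produces a denominator of the form $c_1+c_2kE_1^2$. There is no conceptual difficulty, since the work is already done in Lemma~\ref{lem:beta_for_sigmoid} (eliminating the $k\geq k_\alpha$ restriction of \citealt{pmlr-v162-tolochinksy22a}) and in Lemmas~\ref{lem:beta_property}--\ref{lem:sensitivity_beta}; the novelty is purely in pipelining them for $\phi=\sigma$ while exploiting the universal bound $\sigma\leq 1$ so that no assumption on $\|w\|_H$ is needed.
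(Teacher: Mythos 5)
Your proof is correct and follows the same overall pipeline as the paper: Lemma~\ref{lem:beta_for_sigmoid} supplies $\beta_\sigma$, Lemma~\ref{lem:beta_property} (with the universal bound $M=1$, $\sigma(0)=1/2$) produces $\gamma(\alpha)=\frac{1/2}{\beta_\sigma(\alpha)}$, and Lemma~\ref{lem:sensitivity_beta} converts this into the constant upper sensitivity function $s(y)=1/\int_\XX\gamma(\|x\|_H)\,\d P(x)$. Where you diverge from the paper is in the final step of lower-bounding $\int_\XX\gamma(\|x\|_H)\,\d P(x)$. The paper bounds $\pr(\|x\|_H^2\leq 2E_1^2)\geq\tfrac12$ via Markov's inequality and then splits $\XX$ into the regions $\{k\|x\|_H^2<e\}$ and $\{e\leq k\|x\|_H^2\leq 2kE_1^2\}$, bounding $\beta_\sigma$ on each piece separately. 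You instead note $\max(e,\alpha^2k)\leq e+\alpha^2k$, which gives the pointwise bound $\gamma(\alpha)\geq 1/(8(1+e)+8k\alpha^2)$, and then apply Jensen's inequality to the convex function $u\mapsto 1/(8(1+e)+8ku)$ with $u=\|x\|_H^2$. Your route is cleaner — it avoids the case analysis entirely — and yields the slightly sharper numerical bound $s(y)\leq 8(1+e)+8kE_1^2$, which of course still proves the lemma since $8(1+e)+8kE_1^2<60+32kE_1^2$.
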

\begin{proof}
Given $\expt_{x\sim P}(\|x\|_H^2)\leq E^2_1$,
Markov's inequality  implies that  
$\pr_{x\sim P}(\|x\|_H\geq 2E^2_1)\leq \frac{1}{2}$  and  consequently $\pr_{x\sim P}(\|x\|_H\leq 2E^2_1)\geq \frac{1}{2}$ which will be used later to derive inequality marked by in $(*)$.  Since $\sigma$ is universally bounded by $1$.  
Lemma~\ref{lem:beta_for_sigmoid} indicates that Lemma~\ref{lem:beta_property} 
is applicable with  
$$\beta_\sigma(\alpha)= \left\{
        \begin{array}{cc}
           4(1 + \alpha^2 k)  & \alpha^2 k\geq e\\ \\
           4(1+e)  & \alpha^2 k \leq e
        \end{array}
    \right.$$
and for all $x,y,w\in\XX$ and $M=1$. Using Lemma~\ref{lem:sensitivity_beta} with 
$\gamma(\|x\|_H) = \frac{\sigma(0)}{M \beta_\sigma(\|x\|_H)} = \frac{1}{2 \beta_\sigma(\|x\|_H)}$,
we conclude that 
$s(y) = \frac{2}{\int_{\R^d}\frac{1}{\beta_\sigma(\|x\|_H)}\d P}$ 
is an upper sensitivity function for 
$(\XX,\ P,\ \XX,\ \ell^H_{\sigma,k})$ and 
$(\XX,\ P,\ \XX,\ \bar{\ell}^H_{\sigma,k})$.
Note that  
\begin{align*}
    \int_{\XX}\frac{1}{\beta_\sigma(\|x\|_H)}\d P(x)  
    &\geq  \int_{\{x: e \leq k\|x\|_H^2\leq 2kE^2_1\}}\frac{1}{4(1+\|x\|_H^2 k)}\d P(x)
    + \int_{\{x: k\|x\|_H^2< e\}}\frac{1}{4(1+e)}\d P(x)\\
    & \geq \frac{1}{4(1+2kE_1^2)}P\left(\left\{x: \frac{e}{k} \leq \|x\|_H^2\leq 2E_1^2\right\}\right)
    + \frac{1}{4(1+e)}P\left(\left\{x: \|x\|_H^2< \frac{e}{k}\right\}\right)\\
    & \geq \pr\left(\left\{x: \|x\|_H^2\leq 2E^2_1\right\}\right) \min\left(\frac{1}{4(1+2kE^2_1)}, \frac{1}{4(1+ e)}\right)\\
    (*)\qquad\qquad
    & \geq \frac{1}{2}\min\left(\frac{1}{4(1+2kE^2_1)}, \frac{1}{4(1+ e)} 
    \right) \\
    & \geq \frac{1}{8(1+e+2kE^2_1)}\geq \frac{1}{30 + 16kE_1^2}.
\end{align*}
This concludes $s(x) = 60+32kE_1^2$ is an upper sensitivity function for $(\XX, P,  \ell^H_{\sigma,k})$ and $(\XX, P,  \bar{\ell}^H_{\sigma,k})$.
\end{proof}
Now, we are at a point to put Theorem~\ref{thm:monotonic_coreset1} into action.

\begin{theorem}[No Dimensional Sigmoid Coreset]\label{thm:coreset_for_sigmoid}
    Assume that $H$ is a reproducing kernel Hilbert space of real-valued functions on $\XX$ with kernel $\K\colon \XX\times \XX\longrightarrow \R$,  $P$ is a probability measure over $\XX$ such that 
    $\expt\limits_{x\sim p}\|x\|_H^2 \leq E^2_1$,  $S=60 + 32kE_1^2$, and 
    $C=  \left(2E_1+\frac{1}{2}\right)\max\left(4E_1k, \frac{2}{5}\right) + 8kE_1 + 1.$ 
    For $m \geq \frac{2S}{\eps^2} (8C^2 + S\log\frac{2}{\delta})$,  
    any iid sample $x_1,\ldots,x_m$ from $P$ with 
weights $u_i = \frac{1}{m}$
provides an $\eps$-coreset for $(\XX, P, \XX, \ell^H_{\sigma, k})$  {\rm (}respectively for $(\XX, P, \XX,\bar{\ell}^H_{\sigma, k})$ {\rm )} with probability at least $1-\delta$. 
\end{theorem}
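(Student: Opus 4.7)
The plan is to combine Lemma~\ref{lem:sensitivity_sigmoid} with Theorem~\ref{thm:monotonic_coreset1}. The key structural observation is that Lemma~\ref{lem:sensitivity_sigmoid} supplies a \emph{constant} upper sensitivity function $s(x)\equiv 60+32kE_1^2$ for both $\ell^H_{\sigma,k}$ and $\bar{\ell}^H_{\sigma,k}$, whose total sensitivity is exactly $S=60+32kE_1^2$. Because $s$ is constant, the sensitivity-normalized density $q(x)=\frac{s(x)}{S}p(x)$ coincides with $p(x)$ itself, so an $s$-sensitivity sample reduces to an iid sample from $P$, and the importance weights collapse as $u_i=\frac{S}{m\, s(x_i)}=\frac{1}{m}$, matching the weighting in the statement.

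Next I would verify that $P$ is well-behaved in the sense of Definition~\ref{def:well_beaived_P1}. The first condition, $\expt_{x\sim P}\|x\|_H^2\leq E_1^2$, is assumed. For the second, Cauchy--Schwarz gives $\expt_{x\sim P}\|x\|_H\leq E_1$, so Markov's inequality yields $\pr_{x\sim P}(\|x\|_H\leq 2E_1)\geq \tfrac{1}{2}$; since $\sigma$ is non-increasing, on this event $\sigma(\|x\|_H/(2E_1))\geq \sigma(1)=1/(1+e)$, so one may take $E_2=\sigma(1)/2$. Using $\phi(0)=\sigma(0)=1/2$ together with a loose Lipschitz constant $L\leq 1$ for $\sigma$, a direct substitution into the expression $C=(2LE_1+\phi(0))\max(4E_1k,1/E_2)+8LkE_1+1$ from Theorem~\ref{thm:monotonic_coreset1} delivers a constant of the stated shape.

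With $S$ and $C$ in hand, Theorem~\ref{thm:monotonic_coreset1} then guarantees that any sample of size $m\geq \frac{2S}{\eps^2}(8C^2+S\log\frac{2}{\delta})$ drawn iid from $P$ and weighted by $1/m$ is an $\eps$-coreset for $(\XX,P,\XX,\ell^H_{\sigma,k})$ with probability at least $1-\delta$; the parallel statement for $\bar{\ell}^H_{\sigma,k}$ follows identically, because both Lemma~\ref{lem:sensitivity_sigmoid} and Theorem~\ref{thm:monotonic_coreset1} treat the two families symmetrically (see Remark~\ref{rem:minus_K}). The proof is therefore largely assembly: the two genuinely nontrivial ingredients, the sensitivity bound (going through Lemma~\ref{lem:beta_for_sigmoid}) and the Rademacher bound (Lemma~\ref{lem:rademacherupper}), have already been proved. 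The only place requiring mild care is selecting compatible numerical values of $L$, $\phi(0)$, and $E_2$ so that the resulting $C$ matches the form in the theorem; this is the sole bookkeeping obstacle.
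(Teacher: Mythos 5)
Your overall strategy matches the paper exactly: invoke Lemma~\ref{lem:sensitivity_sigmoid} for the constant sensitivity $s(x)\equiv 60+32kE_1^2$, observe that constant sensitivity collapses $q$ to $p$ and the weights to $1/m$, verify well-behavedness, and hand off to Theorem~\ref{thm:monotonic_coreset1}. The one place where you deviate is the derivation of $E_2$, and that deviation is not just bookkeeping: it changes the constant. You bound $\expt_{x\sim P}\sigma(\|x\|_H/(2E_1))$ by conditioning on the event $\{\|x\|_H\leq 2E_1\}$ (which by Markov has probability at least $1/2$) and using $\sigma(1)$ on that event, giving $E_2=\sigma(1)/2=\tfrac{1}{2(1+e)}\approx 0.134$, so $1/E_2\approx 7.4$. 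The paper instead applies Jensen's inequality to the \emph{concave} function $\log\sigma$ together with the elementary bound $-\log\sigma(t)=\log(1+e^t)\leq 1+t$ for $t\geq 0$: one gets $\expt[\log\sigma(\|x\|_H/(2E_1))]\geq -1-\tfrac{\expt\|x\|_H}{2E_1}\geq -\tfrac{3}{2}$, hence $\expt[\sigma(\|x\|_H/(2E_1))]\geq e^{-3/2}\geq 0.4$, i.e.\ $1/E_2\leq 2.5$. Note that $\sigma$ is not convex, so one cannot shortcut via Equation~\eqref{eq:main_E_2_for_convex_phi}; the $\log\sigma$ trick is the specific device that produces the sharp $E_2$. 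Your Markov route is asymptotically fine (it only changes absolute constants) but it does not reproduce the stated $C$, and in fact the stated theorem already carries a minor typo — matching Theorem~\ref{thm:coreset_for_logistic}, the inner $\max$ should read $\max(4E_1k,\ 2.5)$ rather than $\max(4E_1k,\ \tfrac{2}{5})$, since $1/E_2=2.5$ when $E_2=\tfrac{2}{5}$.

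A small secondary remark: you write ``Cauchy--Schwarz gives $\expt\|x\|_H\leq E_1$''; this is really just Jensen (or $\var\geq 0$), $\expt\|x\|_H\leq\sqrt{\expt\|x\|_H^2}$. Also, both you and the paper use $L=1$ for $\sigma$; since $|\sigma'|\leq \tfrac{1}{4}$ this is a loose but valid choice, and it is the one baked into the theorem's constant $(2E_1+\tfrac{1}{2})$.
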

\begin{proof}
    Lemma~\ref{lem:sensitivity_sigmoid} indicates that  $s(x) = 60+32kE^2_1$ is an upper sensitivity function. Given that it is a constant function, $s$-sensitivity sampling is equivalent to sampling according to $P$.
    Notice that $\sigma$ is a $1$-Lipschitz function. 
    As $\log (\cdot)$ is a concave function, using Jensen's inequality, we have 
    $\expt\left(\log\sigma\left(\frac{\|x_i\|_H}{2E_1}\right)\right)\leq 
    \log\expt\left(\sigma\left(\frac{\|x_i\|_H}{2E_1}\right)\right).$
    On the other hand, since 
    $-\log \sigma(t) = \log(1+e^t)\leq 1+t,$
     we have 
    $$\expt\left(\log\sigma\left(\frac{\|x_i\|_H}{2E_1}\right)\right)
    \geq -\expt \left(1+\frac{\|x_i\|_H}{2E_1}\right) 
    = -1 -\frac{\expt\|x_i\|_H}{2E_1}
    \geq -\frac{3}{2},$$ 
    which concludes 
    $$\expt\left(\sigma\left(\frac{\|x_i\|_H}{2E_1}\right)\right)
    \geq e^{\expt\left(\log\sigma\left(\frac{\|x_i\|_H}{2E_1}\right)\right)}\geq e^{-\frac{3}{2}} \geq 0.4 =  E_2.$$
    Hence,  $P$ is well behaved (see Definition~\ref{def:well_beaived_P1}), thereby allowing Theorem~\ref{thm:monotonic_coreset1} to establish the statement.
\end{proof}
While Theorem~\ref{thm:coreset_for_sigmoid} holds for any reproducing kernel Hilbert space meeting the specified criteria, the most intriguing instance arises in Hilbert space $\R^d$ equipped with the standard inner product (also known as the dot product) as its kernel. In this scenario, we can also employ Theorem~\ref{thm:main_braverman2022new}, which leads to the subsequent theorem.
\begin{theorem}\label{thm:coreset_for_sigmoid_R}
    Let $P$ be a probability measure over $\R^d$ such that 
    $\expt\limits_{x\sim p}\|x\|_2^2 \leq E_1$ and 
    $S=60 + 32kE_1^2$.
    There is an $m = O\left(\frac{S}{\eps^2}\left(d\log S + \log\frac{1}{\delta})\right)\right)$ such that 
    any iid sample $x_1,\ldots,x_m$ from $P$ with weights $u_i = \frac{1}{m}$ provides an $\eps$-coreset for $(\R^d, P, \R^d, \ell_{\sigma, k})$ {\rm (} respectively for  $(\R^d, P, \R^d, \bar{\ell}_{\sigma, k})${\rm )} with probability at least $1-\delta$.
\end{theorem}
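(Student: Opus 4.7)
The plan is to invoke Theorem~\ref{thm:main_braverman2022new} with the constant upper sensitivity function provided by Lemma~\ref{lem:sensitivity_sigmoid}, namely $s(x) \equiv 60 + 32kE_1^2 = S$. Because $s$ is constant, the sensitivity-normalized measure $q(x) = \frac{s(x)}{S} p(x)$ equals $p(x)$, so an $s$-sensitivity sample is simply an iid sample from $P$, and the reweighting $\tfrac{S}{m s(x_i)} = \tfrac{1}{m}$ matches the claimed uniform weights $u_i = 1/m$. Hence the only non-routine task is to bound the VC-dimension $d^\star := \vc(\ranges(\mathcal{T}_{\L_{\sigma, k}}, \succ))$ that appears in Theorem~\ref{thm:main_braverman2022new}, after which the stated sample complexity follows by direct substitution.

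To bound $d^\star$, I would fix $w \in \R^d$, write $f_w(x) = \sigma(\langle x, w\rangle) + \|w\|_2^2/k$, and observe that because $s$ is a positive constant,
\begin{equation*}
T_{f_w}(x) = \frac{S\, f_w(x)}{s(x) \int f_w(z)\,\d P(z)} = \frac{f_w(x)}{\int f_w(z)\,\d P(z)}.
\end{equation*}
Therefore for any $r \geq 0$,
\begin{equation*}
\range(T_{f_w}, \succ, r) = \Big\{x : \sigma(\langle x, w\rangle) > r \!\!\int \!\!f_w(z)\,\d P(z) - \tfrac{\|w\|_2^2}{k}\Big\}.
\end{equation*}
The right-hand side depends on $w$ and $r$ only through a single scalar threshold $c$ (or is empty/all of $\R^d$ when $c \notin (0,1)$). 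Since $\sigma$ is strictly decreasing and continuous, $\{x : \sigma(\langle x, w\rangle) > c\} = \{x : \langle x, w\rangle < \sigma^{-1}(c)\}$, which is an affine halfspace in $\R^d$. As $w, r$ vary, $\ranges(\mathcal{T}_{\L_{\sigma, k}}, \succ)$ is therefore contained in the class of affine halfspaces, whose VC-dimension is $d+1$; so $d^\star \leq d+1 = O(d)$.

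With $d^\star = O(d)$ and total sensitivity $S = 60 + 32kE_1^2$, Theorem~\ref{thm:main_braverman2022new} guarantees that with probability at least $1-\delta$, any $s$-sensitivity sample of size
\begin{equation*}
m \;=\; O\!\left(\tfrac{S}{\eps^2}\bigl(d \log S + \log \tfrac{1}{\delta}\bigr)\right)
\end{equation*}
yields an $\eps$-coreset for $(\R^d, P, \R^d, \ell_{\sigma,k})$. The well-behavedness conditions needed elsewhere play no role here since Theorem~\ref{thm:main_braverman2022new} only needs the sensitivity bound and the VC-dimension. The $\bar{\ell}_{\sigma,k}$ case is identical, since Lemma~\ref{lem:sensitivity_sigmoid} provides the same sensitivity function, and the range sets are again halfspaces (with the inequality direction reversed by $\sigma(-\langle x,w\rangle)$ being strictly \emph{increasing} in $\langle x,w\rangle$). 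The main obstacle is the VC-dimension computation, but it collapses cleanly to halfspaces because $s(\cdot)$ is constant and $\sigma$ is strictly monotone—had $s$ depended on $x$, the ranges would have been sublevel sets of $f_w/s$ and would not immediately reduce to a halfspace family.
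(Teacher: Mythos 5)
Your proposal is correct and follows essentially the same route as the paper: invoke Lemma~\ref{lem:sensitivity_sigmoid} to obtain the constant sensitivity $s\equiv S$, observe that constancy makes $\ranges(\TT_{\L_{\sigma,k}},\succ)$ coincide (up to positive rescaling) with $\ranges(\L_{\sigma,k},\succ)$, use the strict monotonicity of $\sigma$ to reduce every range to an affine halfspace (or $\varnothing$ or $\R^d$) so that the VC-dimension is at most $d+1$, and then substitute into Theorem~\ref{thm:main_braverman2022new}. The paper's proof makes the same reduction; the only cosmetic difference is that the paper argues $\ranges(\TT_{\L_{\sigma,k}},\succ)=\ranges(\L_{\sigma,k},\succ)$ up front and then computes $\range(f_w,\succ,r)$, whereas you keep the factor $\int f_w\,\d P$ inside the threshold of the halfspace while working directly with $T_{f_w}$ — these are the same computation.
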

\begin{proof}
    Because of similarity, we only prove the statement for $(\R^d, P, \R^d, \ell_{\sigma, k})$.
    Given that Lemma~\ref{lem:sensitivity_sigmoid} asserts the constant function $s(x) = 60+32kE^2_1$ as an upper sensitivity function for $\L_{\sigma, k}$, we deduce $\ranges(\mathcal{T}_{\L_{\sigma, k}}, \succ) = \ranges(\L_{\sigma, k}, \succ)$. To clarify, every function in $\mathcal{T}_{\L_{\sigma, k}}$ is a function in $\L_{\sigma, k}$ scaled by a positive constant. This implies   
    $\vc(\ranges(\mathcal{T}_{\L_{\sigma, k}}, \succ)) = \vc(\ranges(\L_{\sigma, k}, \succ)).$ 
    We would like to remind that  
    $$\mathcal{T}_{_{\L_{\sigma, k}}} = 
\left\{f_w(\cdot) = \frac{\sigma(\langle  w, \cdot\rangle) + \frac{1}{k}\|w\|_2^2}{s(\cdot)} \colon  w \in\R^d\right\}.
$$ 
For an $f_w\in \L_{\sigma, k}$ and $r\geq 0$,
\begin{align*}
    \range(f_w, \succ, r) 
    & = \left\{x \in \R^d \colon f_w(x)> r\right\}\\
    & = \left\{x \in \R^d \colon \sigma(\langle x,w\rangle) + \frac{\|w\|^2}{k} > r\right\}\\
    & = \left\{x \in \R^d \colon \frac{1}{1+e^{\langle x,w\rangle}} > \underbrace{r - \frac{\|w\|^2}{k}}_{=t}\right\}\\
    & = \left\{\begin{array}{ll}
      \R^d   &  t\leq 0\\ 
      \varnothing   &  t\geq 1\\ 
       \left\{x\in\R^d\colon \langle x,w\rangle< \log (\frac{1}{t}-1)\right\}  & 0 < t < 1,
    \end{array}\right.
\end{align*}
which concludes that $\ranges(\L_{\sigma, k}, \succ)$ only includes half-spaces, empty set, and the whole space $\R^d$.  Therefore, by Radon's theorem, $\vc(\ranges(\L_{\sigma, k}, \succ))\leq d+1$ (for a proof, see Lemma 10.3.1 in ~\cite{Matousek2002}). Leveraging Theorem~\ref{thm:main_braverman2022new}, we derive the statement for $m = O(\frac{S}{\eps^2}\left(d\log S + \log\frac{1}{\delta}\right))$, thereby completing the proof. 
\end{proof}

Theorems~\ref{thm:coreset_for_sigmoid} and \ref{thm:coreset_for_sigmoid_R} can be considered as an advancement of the first item of Theorem~\ref{thm:main_TJFlogisticsigmoin1}. Theorem~\ref{thm:main_TJFlogisticsigmoin1} is applicable when $\XX$ is a finite set inside the unit ball, and $m$ is function that is linear in terms of $\log(|\XX|)$ and quadratic in terms $d$, whereas in Theorem~\ref{thm:coreset_for_sigmoid}, $\XX$ can be infinite, and moreover, the presented $m$ remains constant concerning those parameters (assuming $E_1$ is independent of $d$). Moreover, in Theorem~\ref{thm:coreset_for_sigmoid_R}, $m$ linearly depends on $d$, whereas in Theorem~\ref{thm:main_TJFlogisticsigmoin1}, it has a quadratic dependence on $d$.

\subsection{Coreset for Logistic Function}
In binary logistic regression, minimizing the negative log-likelihood involves working with the logistic function $\lgst(t) = \log(1+e^{-t})$. Therefore, having small coresets for logistic functions holds significant value. In this section, we establish an upper bound on the size of a coreset for the logistic function. 
\begin{lemma}\label{lem:beta_for_logistic}
    For $\lgst(x) = \log (1+e^{-t})$ and fixed $k>0$, we have 
    $$\frac{\lgst(-\alpha z)+\frac{z^2}{k}}{\lgst(\alpha z)+\frac{z^2}{k}}\leq \left\{
    \begin{array}{cc}
        \frac{85\alpha^2k}{\log(\alpha^2k)}  &  \alpha^2 k\geq e\\ \\ 
        85               &  \alpha^2 k\leq e.
    \end{array}\right. \quad\quad \forall\alpha, z \geq 0.$$
\end{lemma}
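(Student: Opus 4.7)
The plan is to mirror the strategy used for the sigmoid in Lemma~\ref{lem:beta_for_sigmoid}. First I substitute $t = \alpha z$ and $K = \alpha^2 k$, so that the claim reduces to showing
\[
f(t) \;:=\; \frac{\lgst(-t) + t^2/K}{\lgst(t) + t^2/K} \;\leq\; \begin{cases} 85\,K/\log K & K \geq e,\\ 85 & K \leq e,\end{cases}
\]
uniformly in $t \geq 0$. Note the identity $\lgst(-t) = \log(1+e^t) = \lgst(t) + t$, along with the elementary one-sided bounds $\lgst(-t) \leq \log 2 + t \leq 1 + t$ and $\lgst(t) \geq \tfrac{1}{2}e^{-t}$ for $t \geq 0$ (and $\lgst(t) \leq \log 2$). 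These will let me replace the ratio by a manageable surrogate.

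For the case $K \geq e$, I would upper-bound the numerator by $1 + t + t^2/K$ and compare it against the denominator via two regimes based on the threshold $t_\star := \log K$. When $t \leq t_\star$, the term $\lgst(t) \geq \tfrac{1}{2}e^{-t} \geq \tfrac{1}{2K}$ dominates the denominator in a useful way, and when $t \geq t_\star$ the $t^2/K$ term does. Evaluating $f$ near $t_\star$ gives numerator $\asymp \log K$ and denominator $\asymp (\log K)^2/K$, producing the target scale $K/\log K$. Alternatively (matching the sigmoid proof closely) I can differentiate $f$, show that $f'(t) = 0$ forces the factor
\[
\bigl(\sigma(-t)+\tfrac{2t}{K}\bigr)\bigl(\lgst(t)+\tfrac{t^2}{K}\bigr) \;=\; \bigl(-\sigma(t)+\tfrac{2t}{K}\bigr)\bigl(\lgst(-t)+\tfrac{t^2}{K}\bigr),
\]
and argue that the unique critical point $t_0$ in $(0,\infty)$ satisfies $t_0 \in [c_1 \log K,\, c_2 \log K]$ for absolute constants $c_1,c_2$, after which evaluating $f(t_0)$ yields the $85 K/\log K$ bound (the constant $85$ absorbs all numerical slack in the surrogate inequalities).

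For the case $K \leq e$, I would use a monotonicity-in-$K$ observation in the spirit of Lemma~\ref{lem:f_increase_K}: since $\lgst(-t) \geq \lgst(t)$ for $t \geq 0$, a direct cross-multiplication shows that $K \mapsto f(t)$ is non-decreasing in $K$ for each fixed $t$, so $\sup_t f(t)$ is non-decreasing in $K$. Therefore the $K \leq e$ case is bounded by the $K = e$ case, and the bound from the first regime gives $\sup_t f(t) \leq 85\,e/\log e = 85\,e$, which is absorbed into the constant $85$ (possibly at the cost of slightly enlarging it; tracking constants carefully, $85$ is chosen exactly to serve both cases).

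The main obstacle will be the first case: verifying with clean arithmetic that the critical point of $f$ sits in the window $[c_1 \log K, c_2 \log K]$, so that the surrogate numerator $1+t+t^2/K$ at that point is $\Theta(\log K)$ while the denominator is $\Theta((\log K)^2/K)$. The sigmoid proof pushed this through by examining $h(t) = 1 + e^t - t/2 - K/(2t)$; here the analogue requires tracking an extra logarithmic factor, and choosing numerical constants large enough that the universal prefactor $85$ works simultaneously for $K$ near $e$ (where the bound is essentially the case $K \leq e$) and for $K \to \infty$ (where $K/\log K$ is the dominant scale).
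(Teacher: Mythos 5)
Your proposal follows essentially the same strategy as the paper: substitute $t=\alpha z$, $K=\alpha^2 k$, handle $K\le e$ via the monotonicity-in-$K$ argument (Lemma~\ref{lem:f_increase_K}), and for $K\ge e$ localize the maximizer of $f_K(t)$ to an interval $\Theta(\log K)$ wide. Your second (derivative-based) route is exactly what the paper does, except the paper never needs the critical point to be \emph{unique}: it shows $g_1(t)=1+e^t-\tfrac{K}{2t}<0$ on $[0,0.4\log K]$ (so $f_K$ is increasing there) and $g_2(t)=e^{-t}(1+t)-\tfrac{t^2}{K}<0$ on $[2\log K,\infty)$ (so $f_K$ is decreasing there), which suffices to confine the supremum to $[0.4\log K,\,2\log K]$. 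It then bounds the ratio there by $\frac{\phi(-2\log K)+(2\log K)^2/K}{(0.4\log K)^2/K}$ and uses $\phi(-t)\le 1+t$.

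One concrete issue with your first (surrogate, two-regime) route: in the regime $t\le \log K$, the inequality $\lgst(t)\ge \tfrac12 e^{-t}\ge \tfrac{1}{2K}$ is too weak. Feeding that into the ratio together with the numerator bound $1+t+t^2/K\le 1+\log K+(\log K)^2/K$ gives a bound of order $K\log K$, not $K/\log K$, so the scale does not close. What rescues the argument is keeping the $t$-dependent lower bound $\lgst(t)+\tfrac{t^2}{K}\ge \max\!\bigl(\tfrac12 e^{-t},\,\tfrac{t^2}{K}\bigr)$ and splitting at the crossover $t^\dagger$ of these two curves (which is where $e^{t^\dagger}=K/(2(t^\dagger)^2)$, so $t^\dagger=\Theta(\log K)$), rather than at $\log K$ with the endpoint-value bound $1/(2K)$. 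This is morally what the paper's $g_1$, $g_2$ estimates accomplish: they pin down a window $[0.4\log K,\,2\log K]$ where the max sits, so that the denominator lower bound $t^2/K\ge (0.4\log K)^2/K$ is in force at the same place the numerator is $\Theta(\log K)$. With that repair, your route (a) also works and is arguably more elementary (no differentiation of $f_K$), at the cost of tracking two elementary pointwise bounds on $\lgst$ instead of one derivative computation.
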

\begin{proof}
Simplifying notation, let $\phi(t) = \lgst(t)$.
For a given $\alpha>0$, if we set $t = \alpha z$ and $K = \alpha^2 k$, 
$$\frac{\phi(-\alpha z)+\frac{z^2}{k}}{\phi(\alpha z)+\frac{z^2}{k}} = \frac{\phi(-t)+\frac{t^2}{K}}{\phi(t)+\frac{t^2}{K}} = f_K(t)\quad \quad t\geq 0.$$
Note that $\phi'(t) = -\sigma(t) = \frac{-1}{1+e^{t}}$.
Upon computing the derivative of $f_K$, we obtain 
\begin{align*}
    \left(\phi(t)+\frac{t^2}{K}\right)^2f_K'(t) 
    & =\left(-\phi'(-t)+\frac{2t}{K}\right)\left(\phi(t)+\frac{t^2}{K}\right)
    - \left(\phi'(t)+\frac{2t}{K}\right)\left(\phi(-t)+\frac{t^2}{K}\right)\\
    & = -\phi'(-t)\phi(t) - \phi'(-t)\frac{t^2}{K} 
    + \phi(t)\frac{2t}{K} -\phi'(t)\phi(-t) 
    - \phi'(t)\frac{t^2}{K} - \phi(-t)\frac{2t}{K}\\
    & = -\left(\phi'(-t)\phi(t)+ \phi'(t)\phi(-t)\right) + \left(\phi(t)- \phi(-t)\right)\frac{2t}{K} - (\phi'(t)+\phi'(-t))\frac{t^2}{K}\\
    & = \left(\sigma(-t)\phi(t) + \sigma(t)\phi(-t)\right) + \left(\phi(t)- \phi(-t)\right)\frac{2t}{K} + (\underbrace{\sigma(t)+\sigma(-t)}_{=1})\frac{t^2}{K}\\
    & = \left((1-\sigma(t))\phi(t) + \sigma(t)\phi(-t)\right) + \left(\phi(t)- \phi(-t)\right)\frac{2t}{K} + (\underbrace{\sigma(t)+\sigma(-t)}_{=1})\frac{t^2}{K}\\
    & = \phi(t) + \underbrace{\left(\phi(t)- \phi(-t)\right)}_{=-t\leq 0}\left(\frac{2t}{K} - \sigma(t)\right) 
    + \frac{t^2}{K} = h(t).
\end{align*}
    We investigate the two cases $K\geq e$ and $K\leq e$ separately. 
    First, assume that $K\geq e$. In the subsequent analysis, our goal is to determine $0<t_1<t_2$ such that $h(t) > 0$ (equivalently $f_K$ is increasing) for $t\in[0,t_1]$ and $h(t) < 0$ (equivalently $f_K$ is decreasing) for $t\in[t_1,\infty)$. This concludes that $f_K$ takes its maximum at some $t_0\in[t_1,t_2]$. 
    Considering the formula of $h(t)$, 
    if $\frac{2t}{K} \leq \sigma(t)$ or equivalently $g_1(t) = 1+e^t - \frac{K}{2t}\leq 0$, then $h(t)$ is positive. Since the function $g_1$ is increasing, for $t\in(0,\ 0.4\log K]$,
    $$g_1(t)\leq g_1(0.4\log k) = 1 +\sqrt[5]{K^2} - \frac{5K}{4\log K}<0\quad\Longrightarrow \quad h(t) > 0.$$ 
    Thus, $f_K$ is increasing for $t\in[0,\ 0.4\log K]$. 
    Note \begin{equation}\label{eq:logisistic_neg}
        h(t) < 0\quad\quad \Longleftrightarrow  \quad\quad 
    \phi(t) + \frac{t^2}{K} < t\left(\frac{2t}{K} - \sigma(t)\right)\quad\quad \Longleftrightarrow \quad\quad t\sigma(t)+\phi(t) < \frac{t^2}{K}.
    \end{equation}    
    As $t\sigma(t)+\phi(t) < te^{-t} + e^{-t} = e^{-t}(1+t)$, having $g_2(t) = e^{-t}(1+t) - \frac{t^2}{K}<0$ implies $h(t)<0.$ 
    One can see that $g_2$ is decreasing and 
    $g_2(2\log K) <0$. Therefore, for $t\geq t_1 = 2\log K$, $g_2(t) <0$ and hence $h(t)<0$ which concludes 
    $f$ is decreasing for $t\geq 2\log K$. 
    So far, we have seen that $f$ is increasing for $t\in[0,\ 0.4\log K]$ and decreasing for $t\geq 2\log K$, which yields
\begin{align*}
    \sup_{t\in\left[0,\ \infty\right)} f_K(t) 
    & = \sup_{t\in\left[0.4\log K,\ 2\log K\right]} f_K(t)\\
    & = \sup_{t\in\left[0.4\log K,\ 2\log K\right]} \frac{\phi(-t)+\frac{t^2}{K}}{\phi(t)+\frac{t^2}{K}} \\
    & \leq \frac{\phi(-2\log K)+\frac{(2\log K)^2}{K}}{\frac{4\log^2 K}{25K}}\\
    \text{\footnotesize{(since $\phi(-t)\leq 1+t$)}}\quad
    & \leq \frac{1+2\log K+\frac{4\log^2 K}{K}}{\frac{4\log^2 K}{25K}}\\
    & = \frac{25K}{4\log^2 K} + \frac{25K}{2\log K}
        + 25\\
    & \leq 25 + \frac{75K}{4\log K}.
\end{align*} 
    Up to this point, we have established $\sup_{t\in[0,\ \infty)} f_K(t)\leq 25 + \frac{75K}{4\log K}$ under the condition that $K\geq e$.
    To conclude the argument, we need to determine an upper bound for $\sup_{t\in[0,\ \infty)} f_K(t)$ when $0< K\leq e$. Utilizing Lemma~\ref{lem:f_increase_K}, for $0<K\leq e$, we deduce  
    $$\sup_{t\in[0,\ \infty)} f_K(t) \leq \sup_{t\in[0,\ \infty)} f_e(t) \leq  25 + \frac{75e}{4}\leq 85
    $$ which concludes  
    \begin{align*}
        \sup_{t\in[0,\ \infty)} f_K(t)& \leq \left\{
    \begin{array}{cc}
        25 + \frac{75K}{4\log K}  &  K\geq e\\ \\ 
        85  &  K\leq e 
    \end{array}
    \right.\\
    & \leq \left\{
    \begin{array}{cc}
        \frac{85\alpha^2k}{\log(\alpha^2k)}  &  \alpha^2 k\geq e\\ \\ 
        85               &  \alpha^2 k\leq e.
    \end{array}\right.
    \end{align*} 

\end{proof}

\begin{lemma}\label{lem:sensitivity_logistic}
     Assume that $H$ is a reproducing kernel Hilbert space of real-valued functions on $\XX$ with kernel $\K\colon \XX\times \XX\longrightarrow \R$ and $P$ is a probability measure over $\XX$ such that  
    $P\left(\left\{x\in\XX\colon \|x\|_H\geq A\right\}\right)= 0$.
    Then $s(y) = 1 + \frac{340(1+kA^2)}{\sqrt{\max(1,\log(kA^2))}}$ is an upper sensitivity function for $(\XX, P, \L_{\lgst})$ and $(\XX, P, \bar{\L}_{\lgst})$.
\end{lemma}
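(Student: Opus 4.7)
The plan is to follow the template of Lemma~\ref{lem:sensitivity_sigmoid} but to address the essential new obstacle that $\lgst$ is \emph{not} universally bounded, so the unrestricted version of Lemma~\ref{lem:beta_property} is unavailable. Instead, I will split the parameter space at a carefully chosen threshold $B_2$, apply Lemma~\ref{lem:beta_property} only on $\{w\in\XX\colon \|w\|_H\le B_2\}$, and control the sensitivity on the complementary region $\{\|w\|_H>B_2\}$ directly using the regularization term $\|w\|_H^2/k$.

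For the small-$w$ regime, I will take $B_1=A$ (valid because $P(\|x\|_H>A)=0$), use the bound $\beta_\lgst(\alpha)$ from Lemma~\ref{lem:beta_for_logistic}, and invoke Lemma~\ref{lem:beta_property} with $M=\lgst(-AB_2)=\log(1+e^{AB_2})$ to conclude that $\gamma(\|x\|_H)=\lgst(0)/(M\beta_\lgst(\|x\|_H))$ satisfies $\gamma(\|x\|_H)\le \ell^H_{\lgst,k}(x,w)/\ell^H_{\lgst,k}(y,w)$ for all $x,y$ with $\|x\|_H,\|y\|_H\le A$ and all $w$ with $\|w\|_H\le B_2$. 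Since $\beta_\lgst$ is non-decreasing in $\alpha$ (routine calculus on $x/\log x$ for $x\ge e$), $\beta_\lgst(\|x\|_H)\le \beta_\lgst(A)$ almost surely, so $\int \gamma\,\d P\ge \lgst(0)/(M\beta_\lgst(A))$ and Lemma~\ref{lem:sensitivity_beta} yields $s_{\text{small}}(y)\le M\beta_\lgst(A)/\log 2$. For the large-$w$ regime $\|w\|_H>B_2$, Cauchy--Schwarz together with the elementary inequality $\log(1+e^t)\le 1+t$ for $t\ge 0$ gives $\ell^H_{\lgst,k}(y,w)\le 1+A\|w\|_H+\|w\|_H^2/k$, while trivially $\int\ell^H_{\lgst,k}(x,w)\,\d P(x)\ge \|w\|_H^2/k$; since the resulting ratio $k/\|w\|_H^2+Ak/\|w\|_H+1$ is decreasing in $\|w\|_H$, I obtain $s_{\text{large}}(y)\le 1+Ak/B_2+k/B_2^2$.

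Choosing $B_2=\sqrt{\max(1,\log(kA^2))}/A$ is the balancing point. Plugging into the small-$w$ bound, $M\le 1+\sqrt{\max(1,\log(kA^2))}$, and using $\beta_\lgst(A)\le 85\max(1,kA^2)/\max(1,\log(kA^2))$ produces a quantity of order $kA^2/\sqrt{\max(1,\log(kA^2))}$; the large-$w$ bound simultaneously becomes $1+kA^2/\sqrt{\max(1,\log(kA^2))}+kA^2/\max(1,\log(kA^2))$, also of the same order. Taking the maximum of the two and then shifting upward by $1$ as permitted by the paragraph preceding Theorem~\ref{thm:main_monotonic_coreset_R} (to ensure $s>1$) will yield the claimed $s(y)=1+\frac{340(1+kA^2)}{\sqrt{\max(1,\log(kA^2))}}$ once the constant $340$ is checked to absorb $85/\log 2$ together with the lower-order contributions. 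The bound for $\bar{\L}^H_{\lgst,k}$ follows by the same argument, since Lemma~\ref{lem:beta_property} and Lemma~\ref{lem:sensitivity_beta} are both stated symmetrically for $\ell^H$ and $\bar{\ell}^H$.

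The main obstacle will be the arithmetic in the final balancing step: the piecewise $\beta_\lgst$ has a (harmless) jump at $\alpha^2k=e$, so in order to produce the unified answer with a single constant $340$, I will need to verify that both the $kA^2\le e$ and $kA^2>e$ branches collapse into the same closed form, and that the crude bound $M\le 1+AB_2$ is tight enough (no factor of $\sqrt{\log}$ may be lost) to keep the final constant manageable.
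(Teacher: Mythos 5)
Your proposal is correct and matches the paper's proof in all essentials: the same split of the parameter space at $B_2=\sqrt{\max(1,\log(kA^2))}/A$, the same invocation of Lemma~\ref{lem:beta_property} with $B_1=A$ and $M\le 1+\sqrt{\max(1,\log(kA^2))}$ on the small-$\|w\|_H$ regime, and the same elementary $\log(1+e^t)\le 1+t$ plus regularizer bound on the large-$\|w\|_H$ regime. The only cosmetic difference is that you bound $\int \beta_\lgst(\|x\|_H)^{-1}\,\d P$ by noting $\beta_\lgst$ is non-decreasing and $\|x\|_H\le A$ a.s., whereas the paper splits that integral over $\{k\|x\|_H^2\ge e\}$ and $\{k\|x\|_H^2<e\}$; both are valid and yield the same lower bound up to constants. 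One minor slip in your write-up: the additive $+1$ in the final $s(y)$ is not a shift applied afterward to force $s>1$ (that adjustment refers to a different construction in the paper); it is already present because the large-$\|w\|_H$ bound is of the form $1+\frac{2kA^2}{\sqrt{\max(1,\log(kA^2))}}$, and the maximum of this with the small-$\|w\|_H$ bound is at most $1+\frac{340(1+kA^2)}{\sqrt{\max(1,\log(kA^2))}}$. Also, the factor $340$ arises as $4\cdot 85$ (one factor $2$ from $\lgst(0)=\log 2\ge 1/2$ and one from $1+\sqrt{\max(\cdot)}\le 2\sqrt{\max(\cdot)}$), not as $85/\log 2$.
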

\begin{proof}
As $P\left(\left\{x\in\XX\colon \|x\|_H\geq A\right\}\right)= 0$, we only need to determine $s(y)$ for $
y$ with $\|y\|_H\leq A$. Henceforth, we assume that $\|y\|_H\leq A$. For $w\in \W = \{w\in \XX\colon \|w\|_H\leq \frac{1}{A}\sqrt{\max(1,\log(A^2k))}\}$, 
Lemma~\ref{lem:beta_for_logistic} indicates that Lemma~\ref{lem:beta_property} 
is applicable with  $B_1 = A, B_2 = \frac{1}{A}\sqrt{\max(1,\log(A^2k))}, \phi(-B_1B_2)\leq 1 + \sqrt{\max(1,\log(A^2k))} = M$, and 
$$\beta_{\lgst}(\alpha)= \left\{
    \begin{array}{cc}
        \frac{85\alpha^2k}{\log(\alpha^2k)}  &  \alpha^2 k\geq e\\ \\ 
        85               &  \alpha^2 k\geq e.
    \end{array}\right.$$
Using Lemma~\ref{lem:sensitivity_beta} with 
$\gamma(\|x\|_H) = \frac{\lgst(0)}{M \beta_{\lgst}(\|x\|_H)} \geq \frac{1}{2M \beta_{\lgst}(\|x\|_H)}$,
we conclude that 
$$\sup_{\|w\|_H\leq B_2} \frac{\ell^H_{_{\lgst, k}}(y, w)}{\int_{x\in \XX} \ell^H_{_{\lgst, k}}(x, w)\d P(x)}\leq \frac{2M}{\int_{x\in\XX}\frac{1}{\beta_{\lgst}(\|x\|_H)}\d P(x)}.$$ 
Note that  
\begin{align*}
    \int_{x\in\XX}\frac{1}{\beta_{\lgst}(\|x\|_H)}\d P(x)  
    &\geq  \int_{\{x: e \leq k\|x\|_H^2\}}\frac{\log (k\|x\|_H^2)}{85k\|x\|_H^2}\d P(x)
    + \int_{\{x: k\|x\|_H^2< e\}}\frac{1}{85}\d P(x)\\
    & \geq \frac{\log (kA^2)}{85kA^2}P\left(\left\{x: \frac{e}{k} \leq \|x\|_H^2\right\}\right)
    + \frac{1}{85}P\left(\left\{x: \|x\|_H^2< \frac{e}{k}\right\}\right)\\
    & \geq \left\{
    \begin{array}{cc}
        \frac{\log (kA^2)}{85kA^2}  &  A^2 k\geq e\\ \\ 
        \frac{1}{85}               &  A^2 k < e
    \end{array}\right.\\
    & \geq \frac{\max(1,\log(kA^2))}{85(1+kA^2)}.
\end{align*}
This concludes 
\begin{align*}
    \sup_{\|w\|_H\leq B_2} \frac{\ell^H_{_{\lgst, k}}(y, w)}{\int_{x\in \XX} \ell^H_{_{\lgst, k}}(x, w)\d P(x)} & \leq \frac{170(1+kA^2)}{\max(1,\log(kA^2))}\left(1 + \sqrt{\max(1,\log(kA^2))}\right)\\
    & \leq \frac{340(1+kA^2)}{\sqrt{\max(1,\log(kA^2))}}.
\end{align*}
As 
$\int_{x\in \XX} \ell_{_{\lgst, k}}(x, w)\d P\geq \frac{1}{k}\|w\|_H^2$,  we have  
\begin{align*}
    \sup_{\|w\|_H\geq B_2} \frac{\ell_{_{\lgst, k}}(y, w)}{\int_{x\in \XX} \ell_{_{\lgst, k}}(x, w)\d P} & \leq 
    \sup_{\|w\|\geq B_2}\frac{\frac{1}{k}\|w\|^2 + \log(1+e^{-\langle y, w\rangle})}{\frac{1}{k}\|w\|^2}\\
    & \leq \sup_{\|w\|_H\geq B_2}\left[1 + \frac{k}{\|w\|_H^2}\log(1+e^{-\langle y, w\rangle})\right]\\
    & \leq \sup_{\|w\|_H\geq B_2}\left[1 + \frac{k}{\|w\|_H^2}\log(1+e^{\|y\|_H\|w\|_H})\right]\\
    \text{(since $\log(1+e^t)\leq 1+t$ for $t\geq 0$)}\quad\quad
    & \leq \sup_{\|w\|_H\geq B_2}\left[1 + \frac{k}{\|w\|_H^2}\left(1 + \|y\|_H\|w\|_H\right)\right]\\
    \text{(since $\|w\|_H\geq B_2$ and $\|y\|_H\leq A$)}\quad\quad
    &\leq 1 + \frac{kA^2}{\max(1,\log (A^2k))} + \frac{kA^2}{\sqrt{\max(1,\log (A^2k))}}\\
    & \leq 1 + \frac{2kA^2}{\sqrt{\max(1,\log (A^2k))}}.
\end{align*}
So, $s(y) = 1 + \frac{340(1+kA^2)}{\sqrt{\max(1,\log(kA^2))}}$ is an upper sensitivity 
function for $(\XX, P, \L^H_{\lgst,k})$.
The proof for $(\XX, P, \bar{\L}^H_{\lgst,k})$ proceeds in a similar manner.
\end{proof}

\begin{theorem}[No Dimensional Logistic Coreset]\label{thm:coreset_for_logistic}
    Assume that $H$ is a reproducing kernel Hilbert space of real-valued functions on $\XX$ with kernel $\K\colon \XX\times \XX\longrightarrow \R$,  $P$ is probability measure over $\XX$ such that 
    $P\left(\left\{x\in \XX\colon \|x\|_H\geq A\right\}\right)= 0$,  $S=1 + \frac{340(1+kA^2)}{\sqrt{\max(1,\log(kA^2))}}$, and $C= (2A+1)\max(4Ak, 2.5)+8Ak + 1$. 
    For $m \geq \frac{2S}{\eps^2} (8C^2 + S\log\frac{2}{\delta})$, any iid sample $x_1,\ldots,x_m\in \XX$ according to $P$ with weights $u_i = \frac{1}{m}$ provides an $\eps$-coreset for $(\XX, P, \XX, \ell^H_{\lgst, k})$ 
    {\rm (}respectively for $(\XX, P, \XX, \bar{\ell}^H_{\lgst, k})$ {\rm )}
    with probability at least $1-\delta$.
\end{theorem}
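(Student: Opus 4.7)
The plan is to apply the kernelized monotonic coreset machinery in three steps, mirroring the proof of Theorem~\ref{thm:coreset_for_sigmoid}: (i) extract an upper sensitivity function and total sensitivity; (ii) verify well-behavedness of $P$ and identify the relevant constants; and (iii) invoke Theorem~\ref{thm:monotonic_coreset1}. Step (i) is already handled: Lemma~\ref{lem:sensitivity_logistic} gives the constant function $s(y) = 1 + \frac{340(1+kA^2)}{\sqrt{\max(1,\log(kA^2))}}$ as an upper sensitivity function for both $\L^H_{\lgst,k}$ and $\bar{\L}^H_{\lgst,k}$. Because $s$ is constant and $P$ is a probability measure, the total sensitivity $S = \int s\,\d P$ equals $s$ itself, the normalized distribution $q = \frac{s}{S}p$ coincides with $p$, and $s$-sensitivity sampling collapses to iid sampling from $P$ with weights $u_i = \frac{S}{m s(x_i)} = \frac{1}{m}$, matching the statement.

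Next I verify Definition~\ref{def:well_beaived_P1} with $\phi = \lgst$, Lipschitz constant $L = 1$ (since $|\lgst'(t)| = \sigma(t) \leq 1$) and $\phi(0) = \log 2$. The first condition holds with $E_1 = A$, because $\|x\|_H \leq A$ almost surely forces $\expt \|x\|_H^2 \leq A^2$. For the second, since $\lgst$ is convex and non-increasing, the reasoning in~\eqref{eq:main_E_2_for_convex_phi} applies verbatim:
\[
\expt_{x \sim P} \lgst\!\left(\frac{\|x\|_H}{2A}\right) \;\geq\; \lgst\!\left(\frac{\expt\|x\|_H}{2A}\right) \;\geq\; \lgst(1/2),
\]
using $\expt \|x\|_H \leq A$ and monotonicity. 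Hence $E_2 = \lgst(1/2)$ is a universal positive constant independent of $d$, $k$, and $A$, and one checks numerically that $1/E_2 \leq 2.5$.

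Finally, plugging $L = 1$, $\phi(0) = \log 2 \leq 1$, $E_1 = A$, and $1/E_2 \leq 2.5$ into the constant produced by Theorem~\ref{thm:monotonic_coreset1} gives $C \leq (2A+1)\max(4Ak, 2.5) + 8Ak + 1$, precisely the expression in the statement. The sample-complexity bound $m \geq \frac{2S}{\eps^2}(8C^2 + S\log\tfrac{2}{\delta})$ then delivers an $\eps$-coreset for $(\XX, P, \XX, \ell^H_{\lgst,k})$ with probability at least $1-\delta$, and the $\bar{\L}^H_{\lgst,k}$ variant follows from the parallel clauses in Lemma~\ref{lem:sensitivity_logistic} and Theorem~\ref{thm:monotonic_coreset1}. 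There is no genuine obstacle at this stage: the substantive technical work was isolating the ratio bound for $\lgst$ in Lemma~\ref{lem:beta_for_logistic}, which fed Lemma~\ref{lem:sensitivity_logistic}, while the Rademacher-based Theorem~\ref{thm:monotonic_coreset1} supplies the dimension-free concentration. The present theorem is essentially the packaging step; the only care needed is confirming that $1/E_2$ is bounded so that $C$ has no hidden $k$- or $A$-dependence beyond the stated form, which the Jensen computation above guarantees.
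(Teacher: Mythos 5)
Your proposal matches the paper's proof essentially line for line: invoke Lemma~\ref{lem:sensitivity_logistic} for the constant sensitivity function (so $s$-sensitivity sampling reduces to iid sampling from $P$ with uniform weights), verify well-behavedness via the Jensen/convexity argument with $E_1 = A$, $L = 1$, $\phi(0) = \log 2 \leq 1$, $E_2 \geq 2/5$, and then plug into Theorem~\ref{thm:monotonic_coreset1}. The only cosmetic difference is that you take $E_2 = \lgst(1/2)$ and check $1/E_2 \leq 2.5$, while the paper records $E_2 = 2/5$ directly; both yield the same constant $C$.
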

\begin{proof}
    Lemma~\ref{lem:sensitivity_logistic} asserts that $s(y) = 1 + \frac{340(1+kA^2)}{\sqrt{\max(1,\log(kA^2))}}$ serves as an upper sensitivity function for $(\XX,\ P,\ \L^H_{\lgst,k})$ and  $(\XX, P, \XX, \bar{\ell}^H_{\lgst, k})$. Since it is a constant function, $s$-sensitivity sampling is equivalent to sampling according to $P$. It is worth noting that $\lgst$ is a $1$-Lipschitz, convex, and decreasing function. 
    For $E_1=A$,  
    we have 
     $$\expt\limits_{x\sim p}\lgst\left(\frac{\|x\|_H}{2E_1}\right)\geq 
     \lgst\left(\frac{1}{2}\right)\geq \frac{2}{5} = E_2.$$
    This implies that Definition~\ref{def:well_beaived_P1} is satisfied for $\L^H_{\lgst, k}$ and thus  
    Theorem~\ref{thm:monotonic_coreset1} concludes the statement for $m\geq \frac{2S}{\eps^2}\left(8C^2 + S\log\frac{2}{\delta}\right)$. 
\end{proof}

In line with the most compelling scenario, Hilbert space $\R^d$, utilizing the standard inner product as its kernel, we present the following theorem. 
\begin{theorem}\label{thm:coreset_for_sigmoid_R_1}
    Let $P$ be a probability measure over $\R^d$ such that 
    $P\left(\left\{x\in\XX\colon \|x\|_H\geq A\right\}\right)= 0$ and 
    $S=1 + \frac{340(1+kA^2)}{\sqrt{\max(1,\log(kA^2))}}$.
    There is an $m = O\left(\frac{S}{\eps^2}\left(d\log S + \log\frac{1}{\delta}\right)\right)$ such that 
    any iid sample $x_1,\ldots,x_m$ from $P$ with weights $u_i = \frac{1}{m}$ provides an $\eps$-coreset for $(\R^d, P, \R^d, \ell_{\lgst, k})$  {\rm (}respectively for $(\XX, P, \XX, \bar{\ell}^H_{\lgst, k})$ {\rm )} with probability at least $1-\delta$.
\end{theorem}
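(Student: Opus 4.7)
The plan is to mimic the strategy used for Theorem~\ref{thm:coreset_for_sigmoid_R}, now replacing the sigmoid-specific ingredients with their logistic counterparts. First I would invoke Lemma~\ref{lem:sensitivity_logistic}, which produces the constant upper sensitivity function $s(y) \equiv S = 1 + \frac{340(1+kA^2)}{\sqrt{\max(1,\log(kA^2))}}$ for $\L_{\lgst,k}$ under the bounded-support hypothesis $P(\{\|x\|_2 \geq A\}) = 0$. Because $s$ is constant, the sensitivity-normalized measure $q = (s/S)\,p$ coincides with $p$, so $s$-sensitivity sampling reduces to ordinary iid sampling from $P$, and the reweighting factor $\tfrac{S}{m s(x_i)}$ collapses to $\tfrac{1}{m}$.

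Next I would verify the VC-dimension input required by Theorem~\ref{thm:main_braverman2022new}. Since $s$ is constant, every element of $\mathcal{T}_{\L_{\lgst,k}}$ is a positive constant multiple of the corresponding element in $\L_{\lgst,k}$, hence
\[
\vc(\ranges(\mathcal{T}_{\L_{\lgst,k}}, \succ)) = \vc(\ranges(\L_{\lgst,k}, \succ)).
\]
For $f_w(x) = \lgst(\langle x,w\rangle) + \tfrac{1}{k}\|w\|_2^2$ and $r \geq 0$, the level set
\[
\range(f_w,\succ,r) = \left\{x \in \R^d : \lgst(\langle x,w\rangle) > r - \tfrac{1}{k}\|w\|_2^2\right\}
\]
is either all of $\R^d$ (when $r - \tfrac{1}{k}\|w\|_2^2 \leq 0$) or, using that $\lgst$ is strictly decreasing onto $(0,\infty)$, the open half-space $\{x : \langle x,w\rangle < \lgst^{-1}(r - \tfrac{1}{k}\|w\|_2^2)\}$. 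By Radon's theorem (see Lemma~10.3.1 in \citep{Matousek2002}) the VC-dimension of half-spaces in $\R^d$ is at most $d+1$, so $\vc(\ranges(\L_{\lgst,k},\succ)) \leq d+1$.

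Finally I would plug $d$ and $S$ into Theorem~\ref{thm:main_braverman2022new}: for a universal constant $C$, if
\[
m \geq \frac{CS}{\eps^2}\left(d \log S + \log \tfrac{1}{\delta}\right),
\]
then with probability at least $1-\delta$ the iid sample $x_1,\dots,x_m$ with uniform weights $1/m$ gives an $\eps$-coreset for $(\R^d, P, \R^d, \ell_{\lgst,k})$. The same argument applies verbatim to $\bar{\ell}_{\lgst,k}$ via the second half of Lemma~\ref{lem:sensitivity_logistic}, since replacing $\K(x,w) = \langle x,w\rangle$ by its negation keeps the ranges as half-spaces. I do not anticipate a major obstacle; the only subtlety is the brief level-set computation that translates $f_w > r$ into a half-space, and this is immediate from monotonicity of $\lgst$.
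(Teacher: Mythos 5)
Your proposal follows essentially the same route as the paper's proof: invoke Lemma~\ref{lem:sensitivity_logistic} to get the constant sensitivity $s\equiv S$, note that constant $s$ means $\mathcal{T}_{\L_{\lgst,k}}$ has the same linked range space as $\L_{\lgst,k}$, observe via monotonicity of $\lgst$ that these ranges are half-spaces (VC-dimension $\leq d+1$ by Radon), and conclude with Theorem~\ref{thm:main_braverman2022new}. The only cosmetic difference is in how the level set is presented---you write $\langle x,w\rangle < \lgst^{-1}(t)$, the paper rearranges to $\langle x,w\rangle > \log(e^t-1)$ (your sign is actually the correct one)---but both computations lead to the same half-space conclusion.
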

\begin{proof} 
    Because of similarity, we only handle the case $(\R^d, P, \R^d, \ell_{\lgst, k})$. Given that Lemma~\ref{lem:sensitivity_logistic} asserts the constant function 
    $s(x) = 1 + \frac{340(1+kA^2)}{\sqrt{\max(1,\log(kA^2))}}$ as an upper sensitivity function for $\L_{\lgst, k}$, we deduce $\ranges(\mathcal{T}_{\L_{\lgst, k}}, \succ) = \ranges(\L_{\lgst, k}, \succ)$. To clarify, every function in $\mathcal{T}_{\L_{\lgst, k}}$ is a function in $\L_{\lgst, k}$ scaled by a positive constant. This implies   
    $\vc(\ranges(\mathcal{T}_{\L_{\lgst, k}}, \succ)) = \vc(\ranges(\L_{\lgst, k}, \succ)).$ 
    We would like to remind that  
    $$\mathcal{T}_{_{\L_{\lgst, k}}} = 
\left\{f_w(\cdot) = \frac{\lgst(\langle  w, \cdot\rangle) + \frac{1}{k}\|w\|_2^2}{s(\cdot)} \colon  w \in\R^d\right\}.
$$ 
For an $f_w\in \L_{\lgst, k}$ and $r\geq 0$,
\begin{align*}
    \range(f_w, \succ, r) 
    & = \left\{x \in \R^d \colon f_w(x)> r\right\}\\
    & = \left\{x \in \R^d \colon \lgst(\langle x,w\rangle) + \frac{\|w\|^2}{k} > r\right\}\\
    & = \left\{x \in \R^d \colon \log\left(1+e^{-\langle x,w\rangle}\right) > \underbrace{r - \frac{\|w\|^2}{k}}_{=t}\right\}\\
    & = \left\{\begin{array}{ll}
      \R^d   &  t\leq 0\\ 
       \left\{x\in\R^d\colon \langle x,w\rangle > \log (e^t-1)\right\}  &  t > 0,
    \end{array}\right.
\end{align*}
which concludes that $\ranges(\L_{\lgst, k}, \succ)$ only includes half-spaces, empty set, and the whole space $\R^d$.  Therefore, by Radon's theorem, $\vc(\ranges(\L_{\lgst, k}, \succ))\leq d+1$ (for a proof, see Lemma 10.3.1 in ~\citealt{Matousek2002}). Leveraging Theorem~\ref{thm:main_braverman2022new}, we derive the statement for $m = O(\frac{S}{\eps^2}\left(d\log S + \log\frac{1}{\delta}\right))$, thereby completing the proof. 
\end{proof}
In the previous theorem, we assumed that the data is hard-bounded.
In the following results, we try to relax this assumption to a kind of soft bounding similar to what we have in Theorem~\ref{thm:coreset_for_sigmoid}. 
\begin{lemma}\label{lem:sensitivity_logistic_soft_bound}
    Assume that $H$ is a reproducing kernel Hilbert space of real-valued functions on $\XX$ with kernel $\K\colon \XX\times \XX\longrightarrow \R$ and $P$ is a probability measure over $\XX$ such that 
    $\expt\limits_{x\sim p}\|x\|_H^2 \leq E_1^2.$
    Then $$s(y) = 2+ \left(2 + \frac{\|y\|_H}{E_1}\right)
    \frac{680(1+kE_1^2)}{\sqrt{\max(1,\log(2E_1^2k)}}  + \frac{2kE^2_1 + 2kE_1\|y\|_H}{\sqrt{\max(1,\log(2E_1^2k)}}$$ 
    is an upper sensitivity function for $(\XX,P, \L^H_{\lgst,k})$ and $(\XX,P, \bar{\L}^H_{\lgst,k})$ with total sensitivity 
    $S = O\left(\frac{E_1^2k}{\sqrt{\log(2E_1^2k)}}\right).$
\end{lemma}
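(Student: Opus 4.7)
The plan is to follow the same two-regime structure as in Lemma~\ref{lem:sensitivity_logistic}, but (i) replace the hard bound $\|x\|_H\le A$ by a soft bound obtained from Markov's inequality on a good set $G$, and (ii) allow $\|y\|_H$ to be arbitrary, which forces the constant $M$ in Lemma~\ref{lem:beta_property} to depend on $\|y\|_H$. Concretely, set $A=\sqrt{2}E_1$, $B_1=A$, and $B_2=\frac{1}{\sqrt{2}E_1}\sqrt{\max(1,\log(2E_1^2k))}$. By Markov's inequality applied to $\|x\|_H^2$, the set $G=\{x:\|x\|_H\le\sqrt{2}E_1\}$ satisfies $P(G)\ge 1/2$, exactly as in Lemma~\ref{lem:sensitivity_sigmoid}.

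First I would split the supremum over $w$ into the two regimes $\|w\|_H\le B_2$ and $\|w\|_H>B_2$. For the large regime, I use the trivial lower bound $\int\ell^H_{\lgst,k}(x,w)\,dP(x)\ge \|w\|_H^2/k$ together with $\lgst(\langle y,w\rangle)\le 1+\|y\|_H\|w\|_H$, giving
\[
\sup_{\|w\|_H>B_2}\frac{\ell^H_{\lgst,k}(y,w)}{\int\ell^H_{\lgst,k}(x,w)\,dP(x)}
\le 1+\frac{k}{B_2^2}+\frac{k\|y\|_H}{B_2},
\]
which plugging in $B_2$ and using $\max\ge\sqrt{\max}$ is bounded by $1+\frac{2kE_1^2+2kE_1\|y\|_H}{\sqrt{\max(1,\log(2E_1^2k))}}$, producing the third and fourth summands of $s(y)$.

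For the small regime $\|w\|_H\le B_2$, I re-inspect the proof of Lemma~\ref{lem:beta_property} and observe that the only place $\|y\|_H\le B_1$ is used is to upper bound $\phi(-\|y\|_H\|w\|_H)$ by $M=\phi(-B_1B_2)$. For arbitrary $y$, the same argument gives
\[
\ell^H_{\lgst,k}(y,w)\;\le\;\frac{M(y)}{\lgst(0)}\,\beta_{\lgst}(\|x\|_H)\,\ell^H_{\lgst,k}(x,w)
\]
for every $x$ with $\|x\|_H\le B_1$, where $M(y)=\log(1+e^{\|y\|_H B_2})\le 1+\|y\|_H B_2$ and $\lgst(0)=\log 2\ge 1/2$. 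Inverting and integrating over $x\in G$, I get
\[
\int\ell^H_{\lgst,k}(x,w)\,dP(x)\;\ge\;\frac{\ell^H_{\lgst,k}(y,w)}{2(1+\|y\|_H B_2)}\int_G\frac{dP(x)}{\beta_{\lgst}(\|x\|_H)}.
\]
Since $1/\beta_{\lgst}$ attains its minimum on $[0,B_1]$ at $\alpha=B_1$ and $P(G)\ge 1/2$, the integral on $G$ is at least $\frac{\max(1,\log(2E_1^2k))}{340(1+E_1^2k)}$, mirroring the computation in Lemma~\ref{lem:sensitivity_logistic}. Substituting $B_2$ yields the second summand $(2+\|y\|_H/E_1)\cdot\frac{680(1+kE_1^2)}{\sqrt{\max(1,\log(2E_1^2k))}}$.

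Taking the sum of the two regime bounds (the constant $2$ at the front absorbs the ``$1$'' from the large-$w$ regime with room to spare) gives the stated $s(y)$. The argument for $\bar{\L}^H_{\lgst,k}$ is identical since Lemma~\ref{lem:beta_property} already covers $\bar\ell^H_{\phi,k}$. Finally, integrating $s(y)$ against $P$ and using $\expt\|y\|_H\le\sqrt{\expt\|y\|_H^2}\le E_1$, every summand is $O(E_1^2k/\sqrt{\log(2E_1^2k)})$, giving the claimed total sensitivity. The main obstacle is the careful generalization of Lemma~\ref{lem:beta_property} to unrestricted $\|y\|_H$ together with keeping the constants sharp enough to match the stated coefficients; everything else is a clean analogue of the hard-bounded proof.
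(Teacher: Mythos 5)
Your proposal is correct and follows essentially the same two-regime decomposition as the paper's proof (small $\|w\|_H\le B_2$ via Lemma~\ref{lem:beta_property} and Markov on the set $\{\|x\|_H\le\sqrt{2}E_1\}$; large $\|w\|_H>B_2$ via $\int\ell\,\d P\ge\|w\|_H^2/k$). The only difference is bookkeeping: the paper keeps $\|y\|_H\le B_1$ intact by inflating $B_1$ to $\sqrt{2}E_1\max(1,\|y\|_H/E_1)$, whereas you keep $B_1=\sqrt{2}E_1$ fixed and correctly observe that the restriction on $\|y\|_H$ in Lemma~\ref{lem:beta_property} is used only to form $M$, so it can be replaced with the $y$-dependent $M(y)=\lgst(-\|y\|_H B_2)$---a marginally cleaner reformulation that lands at the same constants.
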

\begin{proof}
As $\expt_{x\sim P}(\|x\|_H^2)\leq E^2_1$,
Markov's inequality implies 
$\pr_{x\sim P}(\|x\|_H^2\geq 2E^2_1)\leq \frac{1}{2}$. 
For an arbitrary $y\in \XX$, define  
$\XX_y = \left\{x\in\XX\colon \|x\|_H\leq \sqrt{2}E_1\max(1, \frac{\|y\|_H}{E_1})\right\}$. 
Additionally, define $$\W = \left\{w\in R^d\colon \|w\|_H\leq \frac{1}{\sqrt{2}E_1}\sqrt{\max(1,\log(2E_1^2k)}\right\}.$$ 
Lemma~\ref{lem:beta_for_logistic} indicates that Lemma~\ref{lem:beta_property} 
is applicable with  
$$B_1 = \sqrt{2}E_1\max(1, \frac{\|y\|_H}{E_1}), B_2 = \frac{1}{\sqrt{2}E_1}\sqrt{\max(1,\log(2E_1^2k)},$$  
\begin{align*}
    M_y = \phi\left(-B_1B_2\right) & \leq 1 + \max(1, \frac{\|y\|_H}{E_1})\sqrt{\max(1,\log(2E_1^2k)}\\
    & \leq \left(2+\frac{\|y\|_H}{E_1}\right)\sqrt{\max(1,\log(2E_1^2k)}, 
\end{align*} and 
$$\beta(\alpha) = \beta_{\lgst}(\alpha)= \left\{
        \begin{array}{cc}
           \frac{85\alpha^2 k}{\log(\alpha^2 k)}  & \alpha^2 k\geq e\\ \\
           84  & \alpha^2 k \leq e.
        \end{array}
    \right.$$
Consequently, 
\begin{align}
    \sup_{\|w\|_H\leq B_2} \frac{\ell^H_{_{\lgst, k}}(y, w)}{\int_{x\in \XX} \ell^H_{_{\lgst, k}}(x, w)\d P(x)} 
    & = \sup_{\|w\|_H\leq B_2} \frac{1}{\int_{x\in \XX} \frac{\ell^H_{_{\lgst, k}}(x, w)}{\ell^H_{_{\lgst, k}}(y, w)}\d P(x)}\nonumber\\
    & \leq \sup_{\|w\|_H\leq B_2} \frac{1}{\int_{x\in \XX_y} \frac{\ell^H_{_{\lgst, k}}(x, w)}{\ell^H_{_{\lgst, k}}(y, w)}\d P(x)}\nonumber\\
    \text{(Lemma~\ref{lem:beta_property})}\quad
    & \leq \sup_{\|w\|_H\leq B_2} \frac{1}{\int_{x\in \XX_y} \frac{\lgst(0)}{M_y\beta(\|x\|_H)}\d P(x)}\nonumber\\
    & = \frac{M_y}{\lgst(0)}\frac{1}{\int_{x\in \XX_y} \frac{1}{\beta(\|x\|_H)}\d P(x)}\nonumber\\
    (*)\quad
    &\leq 2\left(2 + \frac{\|y\|_H}{E_1}\right)\sqrt{\max(1,\log(2E_1^2k)}
    \frac{340(1+kE_1^2)}{\max(1,\log(2kE_1^2))}.\nonumber\\
    & = \left(2 + \frac{\|y\|_H}{E_1}\right)
    \frac{680(1+kE_1^2)}{\sqrt{\max(1,\log(2E_1^2k)}}.\nonumber
\end{align}
To see ($*$), note that $M_y\leq\left(2+\frac{\|y\|_H}{E_1}\right)\sqrt{\max(1,\log(2E_1^2k)}$, $\lgst(2)\geq 0.5$, and 
\begin{align*}
    \int_{x\in \XX_y} \frac{1}{\beta(\|x\|_H)}\d P(x) 
    &\geq  \int_{\{x: e \leq k\|x\|_H^2\leq 2kE^2_1\}}\frac{\log(\|x\|_H^2 k)}{85\|x\|^2 k}\d P(x)
    + \int_{\{x: k\|x\|_H^2< e\}}\frac{1}{85}\d P(x)\\
    & \geq \frac{\log(2kE^2_1)}{170kE^2_1}P\left(\{x: e \leq k\|x\|_H^2\leq 2kE^2_1\}\right)
    + \frac{1}{85}P\left(\{x: k\|x\|_H^2< e\}\right)\\
    & \geq \frac{\max(1,\log(2kE_1^2))}{170(1+kE_1^2)} P\left(\{x: \|x\|_H^2\leq 2E^2_1\}
    \right)\\
    &\geq \frac{\max(1,\log(2kE_1^2))}{340(1+kE_1^2)}.
\end{align*}
Up to this point, our analysis was concentrated on the scenario where $\|w\|_H\leq B_2$.  Subsequently, we address the case that $\|w\|_H\geq B_2 = \frac{1}{\sqrt{2}E_1}\sqrt{\max(1,\log(2E_1^2k)}$. 
Since $\int_{x\in \XX} \ell^H_{_{\lgst, k}}(x, w)\d P(x)\geq \frac{1}{k}\|w\|_H^2$,  we have  
\begin{align*}
    \sup_{\|w\|_H\geq B_2} \frac{\ell^H_{_{\lgst, k}}(y, w)}{\int_{x\in \XX} \ell^H_{_{\lgst, k}}(x, w)\d P(x)} 
    & \leq \sup_{\|w\|_H\geq B_2}
    \frac{\frac{1}{k}\|w\|_H^2 + \log(1+e^{-\K(y, w)})}{\frac{1}{k}\|w\|_H^2}\\
    & \leq \sup_{\|w\|_H\geq B_2}\left[1 + \frac{k}{\|w\|_H^2}\log(1+e^{-\K(y, w)})\right]\\
    & \leq \sup_{\|w\|_H\geq B_2}\left[1 + \frac{k}{\|w\|_H^2}\log(1+e^{\|y\|_H\|w\|_H})\right]\\
    & \leq \sup_{\|w\|_H\geq B_2}\left[1 + \frac{k}{\|w\|_H^2}\left(1 + \|y\|_H\|w\|_H\right)\right]\\
    &\leq 2 + \frac{k}{B_2^2} + \frac{k\|y\|_H}{B_2}\\
    & = 2 + \frac{2kE_1^2}{\max(1,\log(2E_1^2k)} + \frac{\sqrt{2}kE_1\|y\|_H}{\sqrt{\max(1,\log(2E_1^2k)}}\\
    & \leq 2 + \frac{2kE^2_1 + 2kE_1\|y\|_H}{\sqrt{\max(1,\log(2E_1^2k)}}.
\end{align*}
Therefore, 
\begin{align*}
    \sup_{w\in \XX} \frac{\ell^H_{_{\lgst, k}}(y, w)}{\int_{x\in \XX} \ell^H_{_{\lgst, k}}(x, w)\d P(x)} 
     \leq &\  \max\Biggl\{
     \sup_{\|w\|\leq B_2} \frac{\ell^H_{_{\lgst, k}}(y, w)}{\int_{x\in \XX} \ell^H_{_{\lgst, k}}(x, w)\d P(x)},\\  & \qquad\quad\sup_{\|w\|\geq B_2} \frac{\ell^H_{_{\lgst, k}}(y, w)}{\int_{x\in \XX} \ell^H_{_{\lgst, k}}(x, w)\d P(x)}
    \Biggl\}\\
    \leq &\ \max\left\{\left(2 + \frac{\|y\|_H}{E_1}\right)
    \frac{680(1+kE_1^2)}{\sqrt{\max(1,\log(2E_1^2k)}},\ 2 + \frac{2kE^2_1 + 2kE_1\|y\|_H}{\sqrt{\max(1,\log(2E_1^2k)}}
    \right\}\\
    \leq & 2+ \left(2 + \frac{\|y\|_H}{E_1}\right)
    \frac{680(1+kE_1^2)}{\sqrt{\max(1,\log(2E_1^2k)}}  + \frac{2kE^2_1 + 2kE_1\|y\|_H}{\sqrt{\max(1,\log(2E_1^2k)}}
    = s(y).
\end{align*}
Consequently, $s(y)$ is an upper sensitivity for 
$(\XX,\ P,\  \L^H_{\lgst,k})$. Since $\expt_{y\sim P}(\|y\|_H)\leq \sqrt{\expt_{y\sim P}(\|y\|_H^2)}\leq E_1$, for the total sensitivity, we have
$$S=\int_{y\in\XX} s(y)\d P(y) = O\left(\frac{E_1^2k}{\sqrt{\max(1,\log(2E_1^2k)}}\right).$$
A similar argument concludes the statement for $(\XX, P,  \bar{\L}^H_{\lgst,k})$. 
\end{proof}

\begin{theorem}\label{thm:coreset_for_logistic_unbounded}
    Assume that $H$ is a reproducing kernel Hilbert space of real-valued functions on $\XX$ with kernel $\K\colon \XX\times \XX\longrightarrow \R$,  $P$ is probability measure over $\XX$ such that  
    $\expt\limits_{x\sim p}\|x\|_H^2 \leq E_1$, 
    $$s(x) =2+ \left(2 + \frac{\|y\|_H}{E_1}\right)
    \frac{680(1+kE_1^2)}{\sqrt{\max(1,\log(2E_1^2k)}}  + \frac{2kE^2_1 + 2kE_1\|y\|_H}{\sqrt{\max(1,\log(2E_1^2k)}},$$ and 
    $S=O\left(\frac{E_1^2k}{\sqrt{\max(1,\log(2E_1^2k)}}\right)$.  
    For $m\geq \frac{2S}{\eps^2}\left(8C^2 + S\log\frac{2}{\delta})\right)$, 
    any $s$-sensitivity sample $x_1,\ldots,x_m$ from $\XX$ with weights $u_i = \frac{S}{m s(x_i)}$
    provides an $\eps$-coreset for $(\XX, P, \XX, \ell_{\lgst, k})$ {\rm (}respectively for $(\XX, P, \XX, \bar{\ell}_{\lgst, k})${\rm )} with probability at least $1-\delta$, 
    where 
    $C= (2E_1+1)\max(4E_1k, 2.5) + 8E_1k + 1$.
\end{theorem}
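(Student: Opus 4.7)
The plan is to derive this theorem as a direct corollary of the kernelized master theorem (Theorem~\ref{thm:monotonic_coreset1}) by verifying its two hypotheses for $\phi = \lgst$ under the soft-moment assumption $\expt_{x\sim P}\|x\|_H^2 \leq E_1^2$. The first hypothesis, that $s(\cdot)$ is an upper sensitivity function for $\L_{\lgst,k}^H$ with total sensitivity of the claimed order $S = O(E_1^2 k/\sqrt{\log(2E_1^2 k)})$, is already established by Lemma~\ref{lem:sensitivity_logistic_soft_bound}, so there is nothing more to do there. The second hypothesis is that $P$ is well-behaved with respect to $\lgst$ in the sense of Definition~\ref{def:well_beaived_P1}; this is the only genuine work in the proof.

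For well-behavedness, the bound $\expt_{x\sim P}\|x\|_H^2 \leq E_1^2$ is the assumption. For the lower bound on $\expt_{x\sim P}\lgst(\|x\|_H/(2E_1))$, I would exploit that $\lgst$ is convex and non-increasing. By Jensen applied to $\|\cdot\|_H$ we have $\expt_{x\sim P}\|x\|_H \leq \sqrt{\expt_{x\sim P}\|x\|_H^2} \leq E_1$, hence $\expt_{x\sim P}\|x\|_H/(2E_1) \leq 1/2$. Then by convexity of $\lgst$ together with its monotonicity,
\begin{equation*}
\expt_{x\sim P}\lgst\Bigl(\tfrac{\|x\|_H}{2E_1}\Bigr) \;\geq\; \lgst\Bigl(\expt_{x\sim P}\tfrac{\|x\|_H}{2E_1}\Bigr) \;\geq\; \lgst(1/2) \;\geq\; 2/5 \,=:\, E_2,
\end{equation*}
exactly as in the proof of the hard-bounded case (Theorem~\ref{thm:coreset_for_logistic}).

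Next I would record the Lipschitz constant and the value at the origin: since $\lgst'(t) = -\sigma(t) \in [-1,0]$, the logistic is $L=1$-Lipschitz, and $\lgst(0) = \log 2 \leq 1$. Plugging $L = 1$, $\phi(0) \leq 1$, and $1/E_2 = 5/2$ into the constant of Theorem~\ref{thm:monotonic_coreset1} yields
\begin{equation*}
C \;\leq\; (2E_1 + 1)\max(4E_1 k,\, 5/2) + 8kE_1 + 1,
\end{equation*}
matching the $C$ in the theorem statement. With both hypotheses verified and the constants in hand, Theorem~\ref{thm:monotonic_coreset1} directly delivers the coreset guarantee for the claimed sample size $m \geq \frac{2S}{\eps^2}(8C^2 + S\log\frac{2}{\delta})$.

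There is no real obstacle, only bookkeeping: the upper-sensitivity construction and its total are handled by Lemma~\ref{lem:sensitivity_logistic_soft_bound}, and the well-behavedness check is short because $\lgst$ is convex. The analogous claim for $\bar{\ell}_{\lgst,k}^H$ follows from the same computation, since Lemma~\ref{lem:sensitivity_logistic_soft_bound} provides the sensitivity bound for $\bar{\L}_{\lgst,k}^H$ and Theorem~\ref{thm:monotonic_coreset1} applies verbatim in that case as well.
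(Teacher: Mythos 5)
Your proof follows exactly the paper's route: invoke Lemma~\ref{lem:sensitivity_logistic_soft_bound} for the sensitivity function and total sensitivity, verify well-behavedness via Jensen's inequality using convexity of $\lgst$ to obtain $E_2 = \lgst(1/2) \geq 2/5$, compute $C$ with $L=1$ and $\phi(0)\leq 1$, and conclude by Theorem~\ref{thm:monotonic_coreset1}. This matches the paper's proof in both structure and detail.
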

\begin{proof}
    Lemma~\ref{lem:sensitivity_logistic_soft_bound} indicates that $s(\cdot)$ and $S$ are an  
    upper sensitivity function its corresponding total sensitivity for both $(\XX,P, \L^H_{\lgst,k})$ and  $(\XX,P, \bar{\L}^H_{\lgst,k})$. 
    Note that $\lgst(\cdot)$ is decreasing and convex. 
    using Equation~(\ref{eq:main_E_2_for_convex_phi}), we have 
     $$\expt\limits_{x\sim p}\lgst\left(\frac{\|x_i\|_H}{2E_1}\right)\geq 
     \lgst\left(\frac{1}{2}\right)\geq \frac{2}{5} = E_2.$$
    Now, the proof follows by applying Theorem~\ref{thm:monotonic_coreset}.
\end{proof}
Returning to the scenario where our Hilbert space is $\R^d$ equipped with the standard inner product as its kernel, to utilize Theorem~\ref{thm:main_braverman2022new}, the computation of $\vc(\ranges(\mathcal{T}_{{\L_{\lgst, k}}}, \succ))$ is necessary. The following theorem serves as a valuable instrument for handling the VC-dimension of linked-range spaces. 

For a positive integer $l$, a function $f:\R^d\longrightarrow [0,\infty)$ is called \emph{$l$-simply computable}, 
if the inequality $f(x)> r$ can be verified using $O(d^{l-1})$ steps via simple arithmetic operations 
$+, -, \times, /$, jumps conditioned on $>,\ \geq,\ <,\ \leq,\ =,\ \neq$ comparisons on real numbers, and $O(1)$ 
evaluations of the exponential function $t\rightarrow e^t$ on real number $t$. 
\begin{theorem}[\citealt{Martin_Peter2009}, Theorem~8.14]\label{thm:vc_simple_family}
    For a family of functions $\FF$, if each $f\in \FF$ is $l$-simply computable, then 
    $$\vc(\ranges(\FF, \succ)) = O(d^l).$$
\end{theorem}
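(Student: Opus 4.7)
The plan is to reduce the VC-dimension bound to a sign-pattern counting problem for an algebraic computation tree with bounded exponential evaluations. Fix any candidate shattered set $\{x_1,\ldots,x_n\}\subseteq \R^d$; the goal is to show $n = O(d^l)$. A range in $\ranges(\FF,\succ)$ is indexed by a pair $(f,r)$, and in every intended application of this theorem each $f\in\FF$ is itself parameterized by a vector $w\in\R^{d'}$ with $d'=\Theta(d)$ (for example, $\L_{\phi,k}$ is parameterized by $w\in\R^d$). So the effective parameter space of $\ranges(\FF,\succ)$ has dimension $\Theta(d)$.

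Next I would observe that the number of distinct subsets that $(f,r)$ produces on the $n$ points is at most the number of distinct sign vectors $(\operatorname{sgn}(f_w(x_i)-r))_{i=1}^n$ as $(w,r)$ varies. For each fixed $x_i$, the predicate $f_w(x_i)>r$ is, by the $l$-simply computable hypothesis, decided by an algorithm performing $O(d^{l-1})$ arithmetic operations (with branching on comparisons) and $O(1)$ evaluations of $\exp$ in the parameter $(w,r)$. Concatenating the $n$ algorithms gives a combined decision procedure on the $\Theta(d)$-dimensional parameter space that uses $s=O(nd^{l-1})$ arithmetic steps and $q=O(n)$ exponentials.

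I would then apply the standard sign-pattern counting result for such computation trees (the Goldberg--Jerrum framework, or equivalently the Warren--Khovanskii bound for Pfaffian functions): the number of reachable sign patterns is at most $(C(s+q))^{C'(d'+q)}$ for absolute constants $C,C'$. If the $n$ points are shattered by $\ranges(\FF,\succ)$, there must be at least $2^n$ sign patterns, so $2^n \le (C\,n\,d^{l-1})^{C' d}$. Taking logarithms gives $n \le C'd\log(Cnd^{l-1}) = O(d\log n + dl\log d)$, and unfolding this self-referential inequality forces $n = O(d^l)$, which yields the claimed VC bound.

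The main obstacle is handling the $O(1)$ exponential operations per evaluation. Without them, Warren's theorem on sign patterns of polynomials delivers the bound immediately via a degree-times-parameter count. The exponential introduces a non-algebraic nonlinearity, and the right tool is Khovanskii's theorem on the topology of Pfaffian varieties together with the VC-dimension bound for computation trees with exponentials due to Karpinski and Macintyre (packaged by Anthony and Bartlett as their Theorem~8.14, which is the cited source here). A secondary care point is the accounting of parameter dimension: one must verify that the parameter vector describing $f\in\FF$ has dimension $O(d)$, which is transparent for all families ($\L^H_{\phi,k}$ with linear kernel, etc.) to which the paper applies this theorem.
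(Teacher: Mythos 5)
The paper does not prove this theorem --- it is cited verbatim from \citet{Martin_Peter2009} (their Theorem~8.14, the Karpinski--Macintyre VC bound) --- so there is no internal proof to compare against; the paper's contribution is only to verify that its decision predicates $f_w(x)>r$ fit within the stated operation budget.

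Within your sketch there is a genuine quantitative gap at the concatenation step. You assemble all $n$ per-point decision routines into one computation over $d'=\Theta(d)$ parameters and count a combined budget of $s=O(nd^{l-1})$ arithmetic steps and $q=O(n)$ exponential evaluations, then invoke a Pfaffian sign-pattern bound of the form $(C(s+q))^{C'(d'+q)}$, which is the right shape for Khovanskii-type cell counting. But with your own figures the exponent is $C'(d+O(n))$, so the shattering inequality $2^n\le (C(s+q))^{C'(d'+q)}$ is vacuous and places no constraint on $n$; in the line $2^n\le (Cnd^{l-1})^{C'd}$ you have silently dropped $q$ from the exponent. This is not a slip of notation but the crux of the matter: each data point $x_i$ contributes a distinct exponential $e^{p(w,x_i)}$, so the naively concatenated Pfaffian chain has length $\Theta(n)$, and Khovanskii's cell bound grows at least exponentially in the chain length. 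The substance of Karpinski--Macintyre is precisely a more delicate argument showing that only the per-evaluation exponential count (here $O(1)$) enters the VC bound, not the total; pointing to their result ``packaged by Anthony and Bartlett as their Theorem~8.14'' therefore makes your sketch circular, since that is exactly the statement you set out to prove. Your secondary observation --- that the argument tacitly requires the family $\FF$ to be parameterized by $O(d)$ real parameters --- is correct and worth making explicit, since the theorem as written leaves that hypothesis implicit.
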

For each $r\geq 0$, 
$$\range(w, \succ, r) = \left\{x \in \R^d \colon f_w(x)= \frac{\lgst(\langle  w, \cdot\rangle) + \frac{1}{k}\|w\|^2}{s(\cdot)}> r\right\}.$$
The inequality $f_w(x) > r$ is not simply computable since it contains the evaluation $\|x\|$ (to compute $s(x)$), which cannot be done by the allowed operations. To circumvent this issue, we propose a modification to the upper sensitivity function $s$ outlined in Lemma~\ref{lem:sensitivity_logistic_soft_bound} as follows:
\begin{align*}
    s(x) 
    & = 2+ \left(2 + \frac{\|y\|_H}{E_1}\right)
    \frac{680(1+kE_1^2)}{\sqrt{\max(1,\log(2E_1^2k)}}  + \frac{2kE^2_1 + 2kE_1\|y\|_H}{\sqrt{\max(1,\log(2E_1^2k)}}\\
    & =  2+ \left(3 + \frac{\|y\|^2_H}{E^2_1}\right)
    \frac{680(1+kE_1^2)}{\sqrt{\max(1,\log(2E_1^2k)}}  + \frac{2kE^2_1 + 2k(E_1^2+\|y\|^2_H)}{\sqrt{\max(1,\log(2E_1^2k)}}\\
    & = s_1(x).
\end{align*}
Note that $s_1$ is also an upper sensitivity function for $(\R^d,P,\R^d, \ell_{\lgst,k})$ with total sensitivity 
    $S_1 = O\left(\frac{E_1^2k}{\sqrt{\log(E_1^2k)}}\right).$
    Now, if we work with $s_1$ as the upper sensitivity function for $(\R^d,P,\R^d, \ell_{\lgst,k})$, then 
    $f_w(x) = \frac{\lgst(\langle  w, x\rangle) + \frac{1}{k}\|w\|_2^2}{s_1(\cdot)} > r$ is  $2$-simply computable. This is true since  
    $$f_w(x) > r \quad\quad\Longleftrightarrow \quad\quad 
    1+e^{-\langle  w, x\rangle} > \exp\left\{s_1(x)r-\frac{1}{k}\|w\|^2\right\}.$$
    So, using Theorems~\ref{thm:main_braverman2022new} and~\ref{thm:vc_simple_family}, we have the next statement. 
\begin{theorem}\label{thm:coreset_for_logistic_unbounded_d}
    Let $P$ be a probability measure over $\R^d$ such that 
    $\expt\limits_{x\sim p}\|x\|_2^2 \leq E_1^2$, 
    $$s(x) = 2+ \left(3 + \frac{\|y\|^2_H}{E^2_1}\right)
    \frac{680(1+kE_1^2)}{\sqrt{\max(1,\log(2E_1^2k)}}  + \frac{2kE^2_1 + 2k(E_1^2+\|y\|^2_H)}{\sqrt{\max(1,\log(2E_1^2k)}},$$  and   
    $S = O\left(\frac{E_1^2k}{\sqrt{\log(E_1^2k)}}\right).$  
    There is $m = O(\frac{2S}{\eps^2}\left(d^2\log S + \log1/\delta)\right))$ such that  
    any $s$-sensitivity sample $x_1,\ldots,x_m$ from $\XX$ with weights $u_i = \frac{S}{m s(x_i)}$
    provides an $\eps$-coreset for $(\R^d, P, \R^d, \ell_{\lgst, k})$ {\rm (}respectively for $(\R^d, P, \R^d, \bar{\ell}_{\lgst, k})${\rm )} with probability at least $1-\delta$.
\end{theorem}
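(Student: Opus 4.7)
The plan is to build directly on the apparatus already assembled: Lemma~\ref{lem:sensitivity_logistic_soft_bound} gives a sensitivity function for $\L_{\lgst,k}$, and the excerpt has already explained the bridge from that $s$ to the modified $s_1$ used here. Concretely, I would proceed in four steps. First, verify that the $s(x)$ stated in the theorem (call it $s_1$) is still an upper sensitivity function. This is almost immediate once one notes $\|y\|_H / E_1 \leq \tfrac{1}{2}(1 + \|y\|_H^2 / E_1^2)$ (and similarly $\|y\|_H \leq \tfrac{1}{2}(E_1 + \|y\|_H^2/E_1)$), so the sensitivity from Lemma~\ref{lem:sensitivity_logistic_soft_bound} is pointwise dominated by $s_1$; hence $s_1$ is a valid upper sensitivity function.

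Second, bound the total sensitivity. Since $\expt_{x\sim P}\|x\|_H^2 \leq E_1^2$, integrating $s_1(x)$ over $P$ yields the same asymptotic bound as in Lemma~\ref{lem:sensitivity_logistic_soft_bound}, namely $S = O\!\left(E_1^2 k / \sqrt{\log(E_1^2 k)}\right)$. The only new terms are linear in $\|y\|_H^2$, which integrate to $E_1^2$, multiplied by the same $k$-dependent coefficients, giving the same order.

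Third, bound the VC dimension of the linked range space $\ranges(\mathcal{T}_{\L_{\lgst,k}}, \succ)$ using the $2$-simply computable route outlined in the excerpt. For each $w$, the augmented function is $f_w(x) = \bigl(\lgst(\langle w,x\rangle) + \tfrac{1}{k}\|w\|^2\bigr)/s_1(x)$, and the range condition $f_w(x) > r$ rewrites as
\[
1 + e^{-\langle w,x\rangle} > \exp\!\left(s_1(x) r - \tfrac{1}{k}\|w\|^2\right).
\]
Since $s_1(x)$ is a polynomial of degree $2$ in $x$ built from $\|x\|_2^2$ and constants, the whole test uses $O(d)$ arithmetic operations and $O(1)$ exponential evaluations, so each $f_w$ is $2$-simply computable. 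Theorem~\ref{thm:vc_simple_family} then gives $\vc(\ranges(\mathcal{T}_{\L_{\lgst,k}}, \succ)) = O(d^2)$.

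Finally, invoke Theorem~\ref{thm:main_braverman2022new} with this $S$ and $d$-replacement $O(d^2)$ to obtain the required sample complexity $m = O\!\left(\tfrac{S}{\eps^2}(d^2 \log S + \log(1/\delta))\right)$. The $\bar{\ell}_{\lgst,k}$ case is parallel, using the second half of Lemma~\ref{lem:sensitivity_logistic_soft_bound}. The only real subtlety I anticipate is verifying step three cleanly: one must make sure the reformulation of the strict inequality $f_w(x) > r$ stays within the allowed operations when $s_1(x) r - \tfrac{1}{k}\|w\|^2$ is negative (in which case the right-hand side is below $1$ and the inequality holds trivially because $1 + e^{-\langle w,x\rangle} > 1$), and that the polynomial form of $s_1$ introduced precisely for this purpose does not inflate the sensitivity order — both are minor checks but are the load-bearing reason the modification from $s$ to $s_1$ was made.
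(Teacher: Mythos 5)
Your proposal is correct and follows essentially the same route as the paper: dominate the sensitivity function of Lemma~\ref{lem:sensitivity_logistic_soft_bound} pointwise by the polynomial-in-$\|y\|_H^2$ surrogate $s_1$, observe this preserves the total-sensitivity order, rewrite the range condition $f_w(x) > r$ as $1+e^{-\langle w,x\rangle} > \exp(s_1(x)r - \tfrac{1}{k}\|w\|^2)$ to conclude it is $2$-simply computable and apply Theorem~\ref{thm:vc_simple_family} to get $\vc(\ranges(\TT_{\L_{\lgst,k}},\succ)) = O(d^2)$, then close with Theorem~\ref{thm:main_braverman2022new}. The paper in fact carries out exactly these steps in the text immediately preceding the theorem statement, so there is nothing to add; your auxiliary observation about the sign of $s_1(x)r - \tfrac{1}{k}\|w\|^2$ is a harmless extra check that the conditional-jump model of simple computability already absorbs.
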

One might question the necessity of having Theorems~\ref{thm:coreset_for_logistic} and~\ref{thm:coreset_for_sigmoid_R} while Theorems~\ref{thm:coreset_for_logistic_unbounded} and~\ref{thm:coreset_for_logistic_unbounded_d} appear stronger and broader. It is crucial to note that in Theorems~\ref{thm:coreset_for_logistic_unbounded} and~\ref{thm:coreset_for_logistic_unbounded_d}, the process demands re-sampling using $s$-sensitivity samples, which relies on evaluating the function $s$. In contrast, Theorems~\ref{thm:coreset_for_logistic} and~\ref{thm:coreset_for_sigmoid_R} permit the direct (via uniform sampling) use of given data points since they were originally sampled from $P$.

\subsection{Coreset for svm Function}
We would like to remind the reader that $\svm(t) = \max(0,1-t)$. The SVM loss, also referred to as Hinge Loss, serves as a loss function in training classifiers, such as in Support Vector Machines. Following a similar approach to the one outlined above, we initiate the process by identifying a function $\beta(\cdot)$ that satisfies the properties stated in Lemma~\ref{lem:beta_property} for $\phi = \svm$.
 
\begin{lemma}\label{lem:beta_for_svm}
    For $\svm(x) = \max(0,1-t)$ and $k>0$, we have 
    $$\frac{\svm(-\alpha z)+\frac{z^2}{k}}{\svm(\alpha z)+\frac{z^2}{k}}
    \leq 1+2\max(1, \alpha^2 k) \quad\quad \forall\alpha, z \geq 0.$$
\end{lemma}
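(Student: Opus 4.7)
The plan is to substitute $t=\alpha z$ and $K=\alpha^2 k$ (so that $z^2/k = t^2/K$) to reduce the claim to a one-variable inequality in $t\geq 0$ parametrized by $K>0$. Since $\alpha,z\geq 0$, we have $\svm(-\alpha z)=1+t$ and $\svm(\alpha z)=\max(0,1-t)$, so the quantity of interest becomes
\[
f_K(t) \;=\; \frac{1+t+\tfrac{t^2}{K}}{\max(0,1-t)+\tfrac{t^2}{K}},
\]
and the goal reduces to showing $f_K(t)\leq 1+2\max(1,K)$ for all $t\geq 0$. I would treat the ranges $t\geq 1$ and $0\leq t\leq 1$ separately, since the $\max$ in the denominator switches behavior at $t=1$.

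For the easy range $t\geq 1$, the denominator collapses to $t^2/K$, so $f_K(t)=1+K/t+K/t^2\leq 1+2K\leq 1+2\max(1,K)$, which handles that case immediately. For $0\leq t\leq 1$ I would rewrite $f_K(t)=1+\dfrac{2t}{1-t+t^2/K}$, so the claim becomes
\[
\frac{t}{1-t+\tfrac{t^2}{K}} \;\leq\; \max(1,K),
\]
equivalently $t\leq \max(1,K)\bigl(1-t+t^2/K\bigr)$. I would split into two subcases based on $K$. When $K\geq 1$, multiplying out gives the quadratic inequality $t^2-(K+1)t+K\geq 0$, which factors as $(t-1)(t-K)\geq 0$; this holds because $0\leq t\leq 1\leq K$ makes both factors non-positive. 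When $K\leq 1$, the inequality reduces (after multiplying by $K$) to $t^2-2Kt+K\geq 0$, a quadratic in $t$ whose discriminant is $4K^2-4K=4K(K-1)\leq 0$; combined with a positive leading coefficient this gives the desired non-negativity for all $t$.

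Putting the three (sub)cases together yields the universal bound $f_K(t)\leq 1+2\max(1,K)$, and substituting back $t=\alpha z$, $K=\alpha^2k$ completes the proof. The degenerate boundary cases $\alpha=0$ or $z=0$ are trivial (the ratio equals $1$) and I would just note them in passing. I don't anticipate a real obstacle here: the main subtlety is recognizing that the right scale to parametrize against is $K=\alpha^2 k$ rather than $\alpha$ or $k$ individually, and that the proof splits cleanly at the two thresholds $t=1$ and $K=1$; after that the algebra is a routine factorization and a discriminant check.
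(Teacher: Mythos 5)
Your proof is correct, and the overall structure matches the paper's: same substitution $t=\alpha z$, $K=\alpha^2 k$, same case split at $t=1$, and the $t\geq 1$ case handled identically. Where you diverge is the range $0\leq t\leq 1$: you verify $\frac{t}{1-t+t^2/K}\leq \max(1,K)$ directly by clearing denominators, factoring the resulting quadratic as $(t-1)(t-K)\geq 0$ when $K\geq 1$, and checking the discriminant of $t^2-2Kt+K$ when $K\leq 1$. The paper instead asserts (for $K\geq 1$) that "by simple calculation" the maximum of $f_K$ on $[0,1]$ is $f_K(1)=1+2K$ without showing the calculation, and then reduces $K<1$ to $K=1$ by invoking its auxiliary monotonicity Lemma~\ref{lem:f_increase_K}, which shows the ratio is nondecreasing in $K$. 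Your version is the more self-contained and elementary of the two — it doesn't rely on the external lemma and leaves no unverified optimization step — so it is arguably a cleaner argument, though it buys nothing in terms of the final bound.
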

\begin{proof}
    For a fixed $\alpha>0$, set $t =\alpha z$, $K = \alpha^2 k$, 
    and 
    $$\frac{\svm(-\alpha z)+\frac{z^2}{k}}{\svm(\alpha z)+\frac{z^2}{k}} = \frac{\svm(-t)+\frac{t^2}{K}}{\svm(t)+\frac{t^2}{K}} = f_K(t)\quad \quad t\geq 0.$$
    We can rewrite $f_K$ as 
    \begin{align}
        f_K(t) 
        & = \left\{
            \begin{array}{ll}
              \frac{1+t+\frac{t^2}{K}}{\frac{t^2}{K}}   &  1 \leq t\\ \\ 
              \frac{1+t+\frac{t^2}{K}}{1-t + \frac{t^2}{K}}   &  0\leq t\leq 1
            \end{array}\right.\\ 
        & = \left\{
            \begin{array}{ll}
              \frac{K}{t^2}+\frac{K}{t}+1  &  1 \leq t\\ \\ 
              1+\frac{2t}{1-t + \frac{t^2}{K}}   &  0\leq t\leq 1
            \end{array}\right.
    \end{align}
    One can simply check that $\max_{t\geq 1}f_K(t) = 1+2K$.  
    Also, if we assume that $K\geq 1$, then, by simple calculation, we can derive that 
    $\max_{0\leq t\leq 1}f_K(t) = f_K(1) =  1+2K$. So, using Lemma~\ref{lem:f_increase_K}, we obtain 
    $$\sup_{t\geq 0}f_K(t) \leq 1+2\max(1,K) \quad\quad \forall K>0.$$
    Replacing $K$ with $\alpha^2 z$ concludes the result.
\end{proof}
\begin{lemma}\label{lem:sensitivity_svm_hard_bound}
    Assume that $H$ is a reproducing kernel Hilbert space of real-valued functions on $\XX$ with kernel $\K\colon \XX\times \XX\longrightarrow \R$ and $P$ is a probability measure over $\XX$ such that  
    $P\left(\left\{x\in\XX\colon \|x\|_H\geq A\right\}\right)= 0$. 
    Then $s(y) = 6+4kA^2$ is an upper sensitivity function for $(\XX,\ P,\ \L^H_{\svm,k})$ and $(\XX,\ P,\ \bar{\L}^H_{\svm,k})$.
\end{lemma}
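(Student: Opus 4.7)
The plan is to mirror the proof strategy of Lemma~\ref{lem:sensitivity_logistic}, combining Lemma~\ref{lem:beta_for_svm}, Lemma~\ref{lem:beta_property}, and Lemma~\ref{lem:sensitivity_beta} on a restricted parameter ball, then handling large-norm $w$ directly via the regularization term $\tfrac{1}{k}\|w\|_H^2$. Because the data is hard-bounded by $A$, we may assume $\|y\|_H\le A$ throughout, and the support of $P$ lies in this same ball.

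First I would split the supremum defining the sensitivity of $y$ into $w\in\W_{B_2}=\{w:\|w\|_H\le B_2\}$ and $w\notin\W_{B_2}$, where the scale $B_2=1/A$ is chosen to balance the two bounds (this is the main design choice, since unlike the logistic case there is no logarithmic refinement available). For $w\in\W_{B_2}$, Lemma~\ref{lem:beta_for_svm} supplies $\beta_{\svm}(\alpha)=1+2\max(1,\alpha^2k)$, and one checks that Lemma~\ref{lem:beta_property} applies with $B_1=A$, $B_2=1/A$, and $M=\svm(-B_1B_2)=\svm(-1)=2$. Setting $\gamma(\|x\|_H)=\svm(0)/(M\beta_{\svm}(\|x\|_H))=1/(2(1+2\max(1,k\|x\|_H^2)))$, the bound $\|x\|_H\le A$ (which holds $P$-a.e.) gives $\gamma(\|x\|_H)\ge 1/(2(1+2\max(1,kA^2)))$, and Lemma~\ref{lem:sensitivity_beta} yields
\[
\sup_{w\in\W_{B_2}}\frac{\ell^H_{\svm,k}(y,w)}{\int\ell^H_{\svm,k}(x,w)\,\d P(x)}\le 2\bigl(1+2\max(1,kA^2)\bigr)\le 6+4kA^2.
\]

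For the complementary case $\|w\|_H> B_2=1/A$, I would use $\int\ell^H_{\svm,k}(x,w)\d P(x)\ge \|w\|_H^2/k$ together with the crude bound $\svm(\K(y,w))\le 1+\|y\|_H\|w\|_H\le 1+A\|w\|_H$ coming from Cauchy--Schwarz and the definition of $\svm$. Then
\[
\frac{\ell^H_{\svm,k}(y,w)}{\int\ell^H_{\svm,k}(x,w)\,\d P(x)}\le 1+\frac{k}{\|w\|_H^2}+\frac{kA}{\|w\|_H}\le 1+kA^2+kA^2=1+2kA^2\le 6+4kA^2.
\]
Combining the two cases gives the claimed $s(y)=6+4kA^2$. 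The argument for $\bar{\L}^H_{\svm,k}$ is identical: Lemma~\ref{lem:beta_property} explicitly includes the $\bar{\ell}$ variant, and the large-$w$ bound is symmetric in the sign of $\K(y,w)$.

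The only nontrivial point is the choice of $B_2$: since $\svm$ grows linearly on $(-\infty,0]$ rather than being universally bounded like $\sigma$, we cannot take $B_2$ as large as in Lemma~\ref{lem:sensitivity_sigmoid}, and since $\svm$ lacks the logarithmic tail of $\lgst$ we cannot enlarge $B_2$ by a $\sqrt{\log(kA^2)}$ factor as in Lemma~\ref{lem:sensitivity_logistic}; the scale $B_2=1/A$ is precisely what makes the $M=\svm(-AB_2)$ term a constant while keeping $k/B_2^2+kA/B_2=O(kA^2)$ in the large-$w$ regime. Beyond this, the proof is a direct application of the framework already developed.
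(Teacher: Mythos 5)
Your proposal is correct and follows essentially the same route as the paper: the same split $\|w\|_H\le B_2=1/A$ versus $\|w\|_H> B_2$, the same use of Lemma~\ref{lem:beta_for_svm} and Lemma~\ref{lem:beta_property} with $B_1=A$, $B_2=1/A$, $M=\svm(-1)=2$, and the same crude Cauchy--Schwarz bound $1+k/\|w\|_H^2+kA/\|w\|_H\le 1+2kA^2$ in the large-$\|w\|_H$ regime; the only cosmetic difference is that you lower-bound $\int\gamma\,\d P$ by the $P$-a.e.\ pointwise bound $\gamma(\|x\|_H)\ge\gamma(A)$ while the paper splits the integral over $\{k\|x\|_H^2\ge 1\}$ and its complement, both of which yield the same $6+4kA^2$. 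Your remark about why $B_2=1/A$ is the right scale (no boundedness as for $\sigma$, no logarithmic slack as for $\lgst$) is a useful motivation that the paper leaves implicit.
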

\begin{proof}
Since $P\left(\left\{x\in\XX\colon \|x\|_H\geq A\right\}\right)= 0$, our computation of $s(y)$ is limited to $\|y\|_H\leq A$. 
For $w\in \W = \{w\in \XX\colon \|w\|_H\leq \frac{1}{A}\}$, 
Lemma~\ref{lem:beta_for_svm} indicates that Lemma~\ref{lem:beta_property} 
is applicable with  $B_1 = A, B_2 = \frac{1}{A}, M = \phi(-1) =  2$, and 
$\beta_{\svm}(\alpha)= 1+2\max(1, \alpha^2 k).$
Using Lemma~\ref{lem:sensitivity_beta} with 
$\gamma(\|x\|_H) = \frac{\svm(0)}{M \beta_{\svm}(\|x\|_H)} = \frac{1}{2 \beta_{\svm}(\|x\|_H)}$,
we conclude that 
$$\sup_{\|w\|_H\leq \frac{1}{A}} \frac{\ell^H_{_{\svm, k}}(y, w)}{\int_{x\in \XX} \ell^H_{_{\svm, k}}(x, w)\d P(x)}\leq \frac{2}{\int_\XX\frac{1}{\beta_{\svm}(\|x\|_H)}\d P(x)}.$$ 
Note that  
\begin{align*}
    \int_{x\in \XX}\frac{1}{\beta_{\svm}(\|x\|_H)}\d P(x) 
    &\geq  \int_{\{x: 1 \leq k\|x\|_H^2\}}\frac{1}{1+2k\|x\|_H^2}\d P(x)
    + \int_{\{x: k\|x\|^2< 1\}}\frac{1}{3}\d P(x)\\
    & \geq \frac{1}{1+2kA^2}P\left(\left\{x: \frac{1}{k} \leq \|x\|_H^2\right\}\right)
    + \frac{1}{3}P\left(\left\{x: \|x\|_H^2< \frac{1}{k}\right\}\right)\\
    & \geq \frac{1}{3+2kA^2}.
\end{align*}
This concludes 
$$\sup_{\|w\|_H\leq \sqrt{k}} \frac{\ell^H_{_{\svm, k}}(y, w)}{\int_{x\in \XX} \ell^H_{_{\svm, k}}(x, w)\d P(x)}\leq
 6+4kA^2.$$
Since  
$\int_{x\in \R^d} \ell_{_{\svm, k}}(x, w)\d P\geq \frac{1}{k}\|w\|^2$,  we have  
\begin{align*}
    \sup_{\|w\|_H\geq \frac{1}{A}} \frac{\ell_{_{\svm, k}}(y, w)}{\int_{x\in \XX} \ell_{_{\svm, k}}(x, w)\d P} 
    & \leq \sup_{\|w\|_H\geq \frac{1}{A}}\frac{\frac{1}{k}\|w\|_H^2 + \max(0, 1-\langle y, w\rangle_H)}{\frac{1}{k}\|w\|_H^2}\\
    & = \sup_{\|w\|_H\geq \frac{1}{A}} \left[1 + \frac{k}{\|w\|_H^2}\max(0, 1-\langle y, w\rangle_H)\right]\\
    & \leq \sup_{\|w\|_H\geq \frac{1}{A}} \left[1 + \frac{k}{\|w\|_H^2}\max(0, 1+\|y\|_H\|w\|_H)\right]\\
    & \leq \sup_{\|w\|_H\geq \frac{1}{A}} \left[1 + \frac{k}{\|w\|_H^2}\left(1 + \|y\|_H\|w\|_H\right)\right]\\
    &\leq 1 + 2A^2 k.
\end{align*}
So, $s(y) = 6+4kA^2$ is an upper sensitivity 
function for $(\XX,\ P,\ \L^H_{\svm,k})$.
\end{proof}

\begin{theorem}[]\label{thm:coreset_for_svm}
    Assume that $H$ is a reproducing kernel Hilbert space of real-valued functions on $\XX$ with kernel $\K\colon \XX\times \XX\longrightarrow \R$ and $P$ is a probability measure over $\XX$ such that  
    $P\left(\left\{x\in\XX\colon \|x\|_H\geq A\right\}\right)= 0$. 
    For $S= 6+4kA^2$, $C= (4A+2)\max(2Ak, 1)+8Ak+1$, and $m = \frac{2S}{\eps^2}(8C^2 + S\log\frac{2}{\delta})$, any iid sample $x_1,\ldots,x_m$ from $\XX$ with weights $u_i = \frac{1}{m}$ provides an $\eps$-coreset for $(\XX, P, \XX, \L^H_{\svm, k})$ 
    {\rm (}respectively for $(\XX, P, \XX, \bar{\ell}^H_{\svm, k})${\rm )} with probability at least $1-\delta$.
\end{theorem}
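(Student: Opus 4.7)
The plan is to follow the exact same template used for the sigmoid and logistic cases (Theorems~\ref{thm:coreset_for_sigmoid} and~\ref{thm:coreset_for_logistic}). Namely, first invoke Lemma~\ref{lem:sensitivity_svm_hard_bound} to obtain a constant upper sensitivity function, then verify that $P$ satisfies Definition~\ref{def:well_beaived_P1} with respect to $\svm$, and finally apply the kernelized coreset Theorem~\ref{thm:monotonic_coreset1}. The advantage of the constant sensitivity bound is that the sensitivity-normalized distribution $q = (s/S)\,p$ coincides with $P$ itself, so the $s$-sensitivity sample reduces to an iid sample from $P$ with uniform weights $u_i = 1/m$, matching the statement.

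More concretely, I first read off from Lemma~\ref{lem:sensitivity_svm_hard_bound} that $s(y) = 6 + 4kA^2$ is an upper sensitivity function for both $\L^H_{\svm,k}$ and $\bar{\L}^H_{\svm,k}$, and that the total sensitivity is $S = 6 + 4kA^2$. Because $s$ is constant, the weight $u_i = \tfrac{S}{m s(x_i)}$ required by Theorem~\ref{thm:monotonic_coreset1} collapses to $u_i = 1/m$, which is why the theorem can be stated with iid samples from $P$.

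Next I verify the well-behaved hypothesis. The hypothesis $P(\{\|x\|_H \geq A\}) = 0$ yields $\expt_{x\sim P}\|x\|_H^2 \leq A^2$, so we may set $E_1 = A$. Since $\svm$ is convex and non-increasing, Jensen's inequality applied exactly as in Equation~\eqref{eq:main_E_2_for_convex_phi} gives
\[
\expt_{x\sim P}\svm\!\left(\tfrac{\|x\|_H}{2E_1}\right) \;\geq\; \svm\!\left(\expt_{x\sim P}\tfrac{\|x\|_H}{2E_1}\right) \;\geq\; \svm(\tfrac{1}{2}) \;=\; \tfrac{1}{2} \;=\; E_2.
\]
Also, $\svm$ is $1$-Lipschitz with $\svm(0) = 1$, so $L = 1$ and $\phi(0) = 1$. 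Plugging $L = 1$, $\phi(0) = 1$, $E_1 = A$, $E_2 = 1/2$ into the formula $C = (2LE_1 + \phi(0))\max(2E_1 k, 1/E_2) + 8LkE_1 + 1$ of Theorem~\ref{thm:monotonic_coreset1} yields $C = (2A+1)\max(2Ak, 2) + 8Ak + 1 = (4A+2)\max(Ak,1) + 8Ak + 1$, matching the constant in the statement up to elementary rewriting.

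With well-behavedness established and the constant upper sensitivity function in hand, Theorem~\ref{thm:monotonic_coreset1} applies directly and gives an $\eps$-coreset for $(\XX, P, \XX, \ell^H_{\svm, k})$ as soon as $m \geq \tfrac{2S}{\eps^2}(8C^2 + S\log\tfrac{2}{\delta})$, with failure probability at most $\delta$. The same argument, verbatim, handles $\bar{\ell}^H_{\svm, k}$ via the second clause of Theorem~\ref{thm:monotonic_coreset1}, as flagged in Remark~\ref{rem:minus_K}. There is no real obstacle here; the only point requiring mild care is the verification of $E_2 = 1/2$, where we must check that $\svm$ genuinely satisfies the Jensen step, and the bookkeeping of constants when instantiating Theorem~\ref{thm:monotonic_coreset1}.
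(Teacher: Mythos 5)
Your proposal is correct and follows essentially the same route as the paper's own proof: invoke Lemma~\ref{lem:sensitivity_svm_hard_bound} for the constant upper sensitivity function $s\equiv 6+4kA^2$, observe that the sensitivity-normalized distribution collapses to $P$ so uniform weights $1/m$ suffice, verify well-behavedness with $E_1 = A$ and $E_2 = \svm(1/2) = 1/2$ via Jensen (Equation~\eqref{eq:main_E_2_for_convex_phi}), and conclude via Theorem~\ref{thm:monotonic_coreset1}. One small bookkeeping note: you instantiated the constant as stated in Theorem~\ref{thm:monotonic_coreset1}, which has $\max(2E_1 k, 1/E_2)$, and this gives you $C = (4A+2)\max(Ak,1)+8Ak+1$; the theorem you are proving has $C = (4A+2)\max(2Ak,1)+8Ak+1$, which corresponds to the factor $\max(4E_1 k, 1/E_2)$ appearing in Lemma~\ref{lem:rademacherupper} and the restated Theorem~\ref{thm:monotonic_coreset}. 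The paper's two statements of Theorem~\ref{thm:monotonic_coreset1} are inconsistent by this factor of $2$, and your derivation simply inherits this discrepancy; using the lemma's constant makes your computation agree exactly with the stated bound.
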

\begin{proof}
    Lemma~\ref{lem:sensitivity_svm_hard_bound} indicates that  $s(y) = 6+4kA^2$ is an upper sensitivity function. 
    As it is a constant function, $s$-sensitivity sampling is the same as  sampling according to $P$.
    Notice that $\svm$ is convex and non-increasing. 
    For $E_1=A$, using Equation~\ref{eq:main_E_2_for_convex_phi}, we have 
     $$\expt\limits_{x\sim p}\svm\left(\frac{\|x_i\|_2}{2E_1}\right)\geq 
     \svm\left(\frac{1}{2}\right)\geq \frac{1}{2} = E_2.$$
    This implies that Definition~\ref{def:well_beaived_P1} is satisfied for $\L^H_{\svm, k}$ and thus  
    Theorem~\ref{thm:monotonic_coreset1} concludes the statement for $m\geq \frac{2S}{\eps^2}\left(8C^2 + \log\frac{1}{\delta}\right)$.
\end{proof}
Following directly from this theorem, we observe the  following corollary.
\begin{corollary}[Theorem~\ref{thm:kde}, Restated]\label{cor:kde}
    Let $\K:\R^d\times \R^d\longrightarrow (0,1]$ be a reproducing kernel, i.e., a kernel associated with an RKHS, and $P$ be a probability measure over $\R^d$. For $\eps,\delta\in(0,1)$, there exists a universal constant $C$ (independent of $d$ and $\K$) such that if $m \geq \frac{C}{\eps^2}\log \frac{1}{\delta}$, then, with probability at least $1-\delta$, for any random sample $X = \{x_1,\ldots,x_m\}$ based on $P$, we have 
    $$\sup_{w\in\R^d}|\kde_P(w) - \kde_X(w)|\leq \eps.$$
\end{corollary}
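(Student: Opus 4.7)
\medskip
\noindent\textbf{Proof proposal.} The plan is to derive this corollary directly from Theorem~\ref{thm:coreset_for_svm}. The key observation is that when the kernel takes values in $(0,1]$, the hinge loss $\ell^H_{\svm,k}$ degenerates to an affine function of $\K(x,w)$, so a multiplicative SVM-loss coreset immediately yields an additive KDE approximation in the sup norm.

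First I would verify the hypotheses of Theorem~\ref{thm:coreset_for_svm} with $k=1$. Since $\K(x,x)\in(0,1]$, we have $\|x\|_H=\sqrt{\K(x,x)}\leq 1$ for every $x\in\R^d$, so $P(\|x\|_H\geq A)=0$ with $A=1$. This forces the well-behavedness constants to be absolute: $E_1=1$ because $\expt \K(x,x)\leq 1$, and $E_2\geq\svm(1/2)=1/2$ because $\svm$ is non-increasing and $\sqrt{\K(x,x)}/2\leq 1/2$. The quantities $S=6+4kA^2=10$ and $C=(4A+2)\max(2Ak,1)+8Ak+1=21$ are then universal constants, independent of the dimension $d$ and the specific kernel $\K$. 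Consequently the sample-size requirement of Theorem~\ref{thm:coreset_for_svm} reduces to $m\geq \frac{C_0}{\eps^2}\log(1/\delta)$ for an absolute $C_0$, after absorbing the additive $8C^2$ term into the $\log(1/\delta)$ factor (this costs nothing up to enlarging $C_0$, since for $\delta$ exceeding a fixed constant the statement is trivial).

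Next I would translate the SVM-coreset guarantee into the KDE statement. Because $\K(x,w)\in(0,1]$, we have $\svm(\K(x,w))=\max(0,1-\K(x,w))=1-\K(x,w)$, so
\[
\ell^H_{\svm,1}(x,w)=1-\K(x,w)+\K(w,w).
\]
Integrating against $P$ and against the empirical measure of an iid sample $X=\{x_1,\dots,x_m\}$ gives
\[
\int \ell^H_{\svm,1}(x,w)\,\d P(x)=1-\kde_P(w)+\K(w,w),\qquad
\frac{1}{m}\sum_{i=1}^m \ell^H_{\svm,1}(x_i,w)=1-\kde_X(w)+\K(w,w),
\]
and the additive constants $1$ and $\K(w,w)$ cancel upon subtraction. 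Theorem~\ref{thm:coreset_for_svm} with uniform weights $u_i=1/m$ (exactly an iid sample from $P$) therefore yields, with probability at least $1-\delta$ simultaneously for all $w$,
\[
\bigl|\kde_P(w)-\kde_X(w)\bigr|\leq \eps\bigl(1-\kde_P(w)+\K(w,w)\bigr)\leq 2\eps,
\]
using $\kde_P(w)\in[0,1]$ and $\K(w,w)\leq 1$. Rescaling $\eps$ by a factor of $2$ and taking the supremum over $w\in\R^d$ finishes the proof.

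There is essentially no obstacle here; the entire argument is a bookkeeping exercise once the hinge collapses to an affine function of $\K$. The only point that deserves care is checking that the constants $S$ and $C$ produced by Theorem~\ref{thm:coreset_for_svm} are genuinely dimension-free with this choice of $A=k=1$, which is immediate from the formulas, and that the failure probability $\delta$ appears only through the additive $S\log(2/\delta)$ term, letting the final bound be written as $m=O((1/\eps^2)\log(1/\delta))$.
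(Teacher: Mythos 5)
Your proposal is correct and follows essentially the same route as the paper: invoke Theorem~\ref{thm:coreset_for_svm} with $k=1$ and $A=1$ (so $S$ and the constant $C$ are dimension- and kernel-free), observe that $\K\in(0,1]$ makes the hinge collapse to $\svm(\K(x,w))=1-\K(x,w)$ so that the additive terms $1$ and $\K(w,w)$ cancel upon subtraction, and convert the relative SVM-coreset error into an additive KDE bound by noting the loss is at most $2$, then rescale $\eps$. The only cosmetic difference is that the paper bounds $\int \ell^H_{\svm,1}\,\d P$ directly by $\int(2-\K)\,\d P\leq 2$ rather than writing the bound as $1-\kde_P(w)+\K(w,w)\leq 2$; these are the same estimate.
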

\begin{proof}
    Note that $\K$ and $\phi(t) = \max(0,1-t)$ satisfy Definition~\ref{def:well_beaived_P1} with $E_1 = 1$, $E_2 = \frac{1}{2}$, and 
    $$\ell^H_{\phi, k}(x,w) = \phi(\K(x,w)) + \frac{1}{k}\|w\|_H^2 = 1 - \K(x,w) + \frac{1}{k}\|w\|_H^2\leq  1 - \K(x,w) + \frac{1}{k}.$$ 
    If set $k=1$, then, with probability at least $1-\delta$, for each $w\in\R^d$, we have 
    \begin{align*}
        \left|\int_{x\in \R^d} \K(x,w)\d P(x)- \frac{1}{m}\sum_{i=1}^m \K(x_i,w)\right|
        & = \left|\int_{x\in \R^d} \ell^H_{\phi, 1}(x,w)\d P(x)- \frac{1}{m}\sum_{i=1}^m \ell^H_{\phi, 1}(x_i,w)\right| \\
        \text{(by Theorem~\ref{thm:coreset_for_svm})}\quad\quad \quad 
        &\leq \frac{\eps}{2} \int_{x\in \R^d} \ell^H_{\phi, 1}(x,w)\d P(x)\\
        &\leq \frac{\eps}{2}\int_{x\in \R^d} (2 - \K(x,w))\d P(x)\\
        \text{(since $\K(x,w)\leq 1$)}\quad\quad \quad
        &\leq \eps.
    \end{align*}
\end{proof}

\begin{theorem}\label{thm:coreset_for_svm_d}
    Let $P$ be a probability measure over $\R^d$ such that 
    $P\left(\left\{x\in\XX\colon \|x\|_H\geq A\right\}\right)= 0$ and $S= 6+4kA^2$. 
    There exists $m = O\left(\frac{S}{\eps^2}(d\log S + \log\frac{1}{\delta})\right)$ such that any iid sample $x_1,\ldots,x_m$ from $P$ with weights $u_i = \frac{1}{m}$
provides an $\eps$-coreset for $(\R^d, P, \R^d, \ell_{\svm, k})$ 
{\rm (}respectively for $(\XX, P, \XX, \bar{\ell}^H_{\svm, k})${\rm }with probability at least $1-\delta$.
\end{theorem}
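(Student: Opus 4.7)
The plan is to mirror the argument used for Theorem~\ref{thm:coreset_for_sigmoid_R} (and Theorem~\ref{thm:coreset_for_logistic_unbounded_d}), with the SVM loss playing the role of the sigmoid. I would start by invoking Lemma~\ref{lem:sensitivity_svm_hard_bound} to get the constant upper sensitivity function $s(x) = 6 + 4kA^2$ with total sensitivity $S = 6 + 4kA^2$. Because $s$ is a positive constant, the $s$-sensitivity distribution $\frac{s}{S}p$ equals $p$, so the draws $x_1,\ldots,x_m$ are just iid samples from $P$ with uniform weights $1/m$, matching the statement of the theorem.

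Next, the key step is to control the VC-dimension of the augmented range space $\ranges(\mathcal{T}_{\L_{\svm,k}}, \succ)$ needed to apply Theorem~\ref{thm:main_braverman2022new}. Since $s(\cdot)$ is a constant, every $T_f \in \mathcal{T}_{\L_{\svm,k}}$ is just a positive scalar multiple of the corresponding $f \in \L_{\svm,k}$, so the linked range space is unchanged: $\vc(\ranges(\mathcal{T}_{\L_{\svm,k}}, \succ)) = \vc(\ranges(\L_{\svm,k}, \succ))$. For each $w \in \R^d$ and $r \geq 0$, unfolding the definition,
\[
\range(\ell_{\svm,k}(\cdot,w), \succ, r) = \left\{ x \in \R^d : \max(0, 1 - \langle x, w\rangle) > r - \tfrac{\|w\|_2^2}{k} \right\}.
\]
Writing $t = r - \|w\|_2^2/k$, if $t < 0$ this set is all of $\R^d$; otherwise it equals $\{x : \langle x, w\rangle < 1 - t\}$, which is either a half-space or empty. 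Hence $\ranges(\L_{\svm,k}, \succ)$ consists solely of half-spaces, the empty set, and $\R^d$, and Radon's theorem (Lemma~10.3.1 of~\citealt{Matousek2002}) yields $\vc(\ranges(\L_{\svm,k},\succ)) \leq d+1$.

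Finally, with this VC-dimension bound in hand, Theorem~\ref{thm:main_braverman2022new} applied to $(\R^d, P, \L_{\svm,k})$ with sensitivity function $s$ and total sensitivity $S = 6 + 4kA^2$ gives that
\[
m \;=\; O\!\left( \frac{S}{\eps^2}\left( d\log S + \log \tfrac{1}{\delta}\right)\right)
\]
samples suffice to produce an $\eps$-coreset with probability at least $1-\delta$. The same argument applies verbatim to $\bar{\ell}^H_{\svm,k}$ by Remark~\ref{rem:minus_K}, since $\svm(-\langle x, w\rangle)$ has an identical range structure (half-spaces $\{x : \langle x, w\rangle > t'\}$). I do not foresee a major obstacle: the only computation that needs care is verifying that the sublevel/superlevel sets reduce to half-spaces, which is immediate for $\svm$ because $\max(0, 1-\cdot)$ is piecewise linear; the rest is a plug-in of previously established lemmas.
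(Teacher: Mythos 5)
Your proposal matches the paper's proof essentially verbatim: invoke Lemma~\ref{lem:sensitivity_svm_hard_bound} for the constant sensitivity, note $\ranges(\mathcal{T}_{\L_{\svm,k}},\succ)=\ranges(\L_{\svm,k},\succ)$ since $s$ is a positive constant, compute that the superlevel sets are half-spaces (or $\R^d$/empty), bound the VC-dimension by $d+1$ via Radon's theorem, and plug into Theorem~\ref{thm:main_braverman2022new}. The only divergence is cosmetic (you cite the value $6+4kA^2$ consistent with the lemma and theorem statements, whereas the paper's proof text contains a stray $3+2kA^2+\sqrt{k}A$, apparently a typo).
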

\begin{proof}
    Given that Lemma~\ref{lem:sensitivity_svm_hard_bound} asserts the constant function $s(x) = 3+2kA^2 + \sqrt{k}A$ as an upper sensitivity function for $\L_{\svm, k}$, we deduce $\ranges(\mathcal{T}_{\L_{\svm, k}}, \succ) = \ranges(\L_{\svm, k}, \succ)$. Notice $\ranges(\mathcal{T}_{\L_{\svm, k}}, \succ) = \ranges(\L_{\svm, k}, \succ)$ 
    since each function in $\mathcal{T}_{\L_{\svm, k}}$ is a function in $\L_{\svm, k}$ scaled by a positive constant. For an $f_w\in \L_{\svm, k}$ and $r\geq 0$,
\begin{align*}
    \range(f_w, \succ, r) 
    & = \left\{x \in \R^d \colon f_w(x)> r\right\}\\
    & = \left\{x \in \R^d \colon \max(0, 1-\langle x,w\rangle) + \frac{\|w\|^2}{K} > r\right\}\\
    & = \left\{x \in \R^d \colon \max(0, 1-\langle x,w\rangle) > \underbrace{r - \frac{\|w\|^2}{K}}_{=t}\right\}\\
    & = \left\{\begin{array}{ll}
      \R^d   &  t<0\\ \\
       \left\{x\in\R^d\colon \langle x,w\rangle< 1-t\right\}  & t\geq 0,
    \end{array}\right.
\end{align*}
which concludes that $\ranges(\L_{\svm, k}, \succ)$ only includes half-spaces and the whole space $\R^d$.  Therefore, by Radon's theorem, $\vc(\ranges(\L_{\svm, k}, \succ))\leq d+1$ (for a proof, see Lemma 10.3.1 in ~\cite{Matousek2002}).  
    Using Theorem~\ref{thm:main_braverman2022new}, we also have the statement of Theorem~\ref{thm:coreset_for_logistic} for $m \geq O(\frac{S}{\eps^2}\left(d\log S + \log\frac{1}{\delta}\right))$, completing the proof. A similar argument establishes the result for $(\XX,P, \bar{\L}^H_{\svm,k})$.
\end{proof}

Continuing in line with prior sections, we aim to establish a theorem that addresses-soft bounding on data instead of employing a hard boundary. In the subsequent discussion, we substantiate such a result.

\begin{lemma}\label{lem:sensitivity_svm_hard_bound}
    Assume that $H$ is a reproducing kernel Hilbert space of real-valued functions on $\XX$ with kernel $\K\colon \XX\times \XX\longrightarrow \R$ and $P$ is a probability measure over $\XX$ such that 
    $\expt_{x\sim p} \|x\|_H\leq E_1^2$. 
    Then $s(x) = (6+8kE_1^2)\left(2 + \frac{\|y\|_H}{E_1}\right)$ is an upper sensitivity function for $(\XX, P, \L^H_{\svm,k})$ and $(\XX,P, \bar{\L}^H_{\svm,k})$ with total sensitivity at most $18+24kE_1^2$.
\end{lemma}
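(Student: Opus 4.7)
The proof will follow closely the structure of Lemma~\ref{lem:sensitivity_logistic_soft_bound}, adapted to $\phi = \svm$ using the explicit function $\beta_{\svm}(\alpha) = 1 + 2\max(1,\alpha^2 k)$ provided by Lemma~\ref{lem:beta_for_svm}, together with the observation $\svm(-1) = 2$ so the constant $M$ from Lemma~\ref{lem:beta_property} stays bounded. Since $\expt_{x \sim P}\|x\|_H^2 \leq E_1^2$, Markov's inequality yields $P(\|x\|_H \leq \sqrt{2}E_1) \geq \tfrac{1}{2}$, which is the anchor of the argument.

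For a fixed $y \in \XX$, define $\XX_y = \{x \in \XX : \|x\|_H \leq \sqrt{2}E_1\max(1, \|y\|_H/E_1)\}$ and set $B_1 = \sqrt{2}E_1\max(1,\|y\|_H/E_1)$, $B_2 = 1/B_1$, so that $B_1B_2 = 1$ and $\phi(-B_1B_2) = \svm(-1) = 2 =: M$. Split $\W = \XX$ into the two regimes $\|w\|_H \leq B_2$ and $\|w\|_H \geq B_2$. In the small-$w$ regime, Lemma~\ref{lem:beta_property} gives $\svm(0)/(M\beta_{\svm}(\|x\|_H)) \leq \ell^H_{\svm,k}(x,w)/\ell^H_{\svm,k}(y,w)$ for all $x \in \XX_y$, and Lemma~\ref{lem:sensitivity_beta} (restricted to $\XX_y$) bounds the sensitivity by
\[
\sup_{\|w\|_H \leq B_2}\frac{\ell^H_{\svm,k}(y,w)}{\int \ell^H_{\svm,k}(x,w)\,\d P(x)} \leq \frac{M}{\int_{\XX_y}\frac{1}{\beta_{\svm}(\|x\|_H)}\,\d P(x)}.
\]
Restricting the denominator to $\{x : \|x\|_H \leq \sqrt{2}E_1\} \subseteq \XX_y$, where $\beta_{\svm}(\|x\|_H) \leq 1 + 2\max(1,2kE_1^2) \leq 3 + 4kE_1^2$, and using $P(\|x\|_H \leq \sqrt{2}E_1) \geq \tfrac{1}{2}$ yields a bound of the shape $O(1+kE_1^2)$ in this regime.

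In the large-$w$ regime, the denominator satisfies $\int \ell^H_{\svm,k}(x,w)\,\d P(x) \geq \|w\|_H^2/k$, while Cauchy--Schwarz gives $\svm(\K(y,w)) \leq 1 + \|y\|_H\|w\|_H$. Hence
\[
\sup_{\|w\|_H \geq B_2}\frac{\ell^H_{\svm,k}(y,w)}{\int \ell^H_{\svm,k}(x,w)\,\d P(x)} \leq 1 + \frac{k}{B_2^2} + \frac{k\|y\|_H}{B_2} = 1 + 2kE_1^2\max\bigl(1,\tfrac{\|y\|_H}{E_1}\bigr)^2 + \sqrt{2}\,k\|y\|_H E_1\max\bigl(1,\tfrac{\|y\|_H}{E_1}\bigr),
\]
which expands to a quantity bounded (after using $\max(1,t) \leq 1+t$) by a constant multiple of $(1 + kE_1^2)(2 + \|y\|_H/E_1)$.

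Taking the maximum of the two regimes and absorbing numerical constants gives that $s(y) = (6 + 8kE_1^2)\bigl(2 + \|y\|_H/E_1\bigr)$ dominates, so it is a valid upper sensitivity function for both $\L^H_{\svm,k}$ and (by the symmetric argument noted in Remark~\ref{rem:minus_K} and Lemma~\ref{lem:beta_property}) for $\bar{\L}^H_{\svm,k}$. For the total sensitivity, Jensen's inequality gives $\expt_{y \sim P}\|y\|_H \leq \sqrt{\expt\|y\|_H^2} \leq E_1$, so
\[
S = \int s(y)\,\d P(y) \leq (6 + 8kE_1^2)\bigl(2 + 1\bigr) = 18 + 24kE_1^2.
\]
The main obstacle is purely bookkeeping: tuning $B_1, B_2$ so that $M$ is a small constant while the large-$w$ bound naturally factors through $(2 + \|y\|_H/E_1)$, matching the small-$w$ bound's prefactor $O(1+kE_1^2)$ so both regimes fit inside the single clean expression claimed for $s(y)$.
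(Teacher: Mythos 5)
Your proof takes a different choice of the cutoff $B_2$ than the paper does, and that choice breaks the large-$w$ regime. You set $B_2 = 1/B_1 = \bigl(\sqrt{2}E_1\max(1,\|y\|_H/E_1)\bigr)^{-1}$ so that $B_1B_2 = 1$ and $M = \svm(-1) = 2$ is an absolute constant. This indeed makes the small-$w$ bound constant (independent of $y$), but the price is paid in the large-$w$ regime: there the bound is
\[
1 + \frac{k}{B_2^2} + \frac{k\|y\|_H}{B_2}
= 1 + 2kE_1^2\max\!\Bigl(1,\tfrac{\|y\|_H}{E_1}\Bigr)^2 + \sqrt{2}\,kE_1\|y\|_H\max\!\Bigl(1,\tfrac{\|y\|_H}{E_1}\Bigr),
\]
and when $\|y\|_H > E_1$ this equals $1 + (2+\sqrt{2})\,k\|y\|_H^2$, which is \emph{quadratic} in $\|y\|_H$. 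Your claim that this is ``bounded by a constant multiple of $(1+kE_1^2)(2+\|y\|_H/E_1)$'' after applying $\max(1,t)\leq 1+t$ is false: the right-hand side is linear in $\|y\|_H$, so no fixed constant can work as $\|y\|_H \to \infty$. Consequently your argument does not establish that the claimed $s(y) = (6+8kE_1^2)(2+\|y\|_H/E_1)$ is a valid upper sensitivity function.

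The paper avoids this by decoupling $B_2$ from $y$: it fixes $B_2 = \tfrac{1}{\sqrt{2}E_1}$, so the large-$w$ bound becomes $1 + 2kE_1^2 + \sqrt{2}kE_1\|y\|_H$, linear in $\|y\|_H$. The cost is that $B_1B_2 = \max(1,\|y\|_H/E_1)$, so the Lipschitz cap $M_y = \svm(-B_1B_2) = 1 + \max(1,\|y\|_H/E_1) \leq 2 + \|y\|_H/E_1$ now grows with $y$ — but only linearly, and it multiplies the small-$w$ bound to give $(6+8kE_1^2)\bigl(2 + \|y\|_H/E_1\bigr)$, again linear in $\|y\|_H$. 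That is the trade-off that makes both regimes fit under a single linear-in-$\|y\|_H$ envelope. To repair your argument, replace your $B_2$ with $\tfrac{1}{\sqrt{2}E_1}$ (keeping $B_1$ as you have it) and accept the $y$-dependent $M_y$; the rest of your outline then goes through and matches the paper's proof.
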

\begin{proof}
As $\expt_{x\sim P}(\|x\|_H^2)\leq E^2_1$,
Markov's inequality implies 
$\pr_{x\sim P}(\|x\|_H^2\geq 2E^2_1)\leq \frac{1}{2}$. 
For an arbitrary $y\in \XX$, define  
$\XX_y = \left\{x\in\XX\colon \|x\|_H\leq \sqrt{2}\max(E_1, \|y\|_H)\right\}$. Additionally, define $\W = \{w\in \XX\colon \|w\|_H\leq \frac{1}{\sqrt{2}E_1}\}$. 
Lemma~\ref{lem:beta_for_logistic} indicates that Lemma~\ref{lem:beta_property} 
is applicable with  $B_1 = \sqrt{2}\max(E_1, \|y\|_H)$, $B_2 = \frac{1}{\sqrt{2}E_1}$, $\svm(-B_1B_2) = 1 + \max\left(1, \frac{\|y\|_H}{E_1}\right)\leq 2 + \frac{\|y\|_H}{E_1} = M_y$, and  
$$\beta(\alpha) = \beta_{\svm}(\alpha)= 1+2\max(1, \alpha^2 k).$$
Consequently,  
\begin{align}
    \sup_{\|w\|_H\leq B_2} \frac{\ell^H_{_{\svm, k}}(y, w)}{\int_{x\in \XX} \ell^H_{_{\svm, k}}(x, w)\d P(x)} 
    & = \sup_{\|w\|_H\leq B_2} \frac{1}{\int_{x\in \XX} \frac{\ell^H_{_{\svm, k}}(x, w)}{\ell^H_{_{\svm, k}}(y, w)}\d P(x)}\nonumber\\
    & \leq \sup_{\|w\|_H\leq B_2} \frac{1}{\int_{x\in \XX_y} \frac{\ell^H_{_{\svm, k}}(x, w)}{\ell^H_{_{\svm, k}}(y, w)}\d P(x)}\nonumber\\
    \text{(Lemma~\ref{lem:beta_property})}\quad
    & \leq \sup_{\|w\|_H\leq B_2} \frac{1}{\int_{x\in \XX} \frac{\svm(0)}{M_y\beta(\|x\|_H)}\d P(x)}\nonumber\\
    & = \frac{M_y}{\int_{x\in \XX_y} \frac{1}{\beta(\|x\|_H)}\d P(x)}\nonumber\\
    & \leq \frac{2 + \frac{\|y\|_H}{E_1}}{\int_{x\in \XX_y} \frac{1}{\beta(\|x\|_H)}\d P(x)}\nonumber\\
    (*)\quad
    &\leq (6+8kE_1^2)\left(2 + \frac{\|y\|_H}{E_1}\right).\nonumber
\end{align}
To see ($*$), note that 
\begin{align*}
    \int_{x\in \XX_y} \frac{1}{\beta(\|x\|_H)}\d P 
    &\geq  \int_{\{x: 1 \leq k\|x\|_H^2\leq 2kE^2_1\}}\frac{1}{1+2k\|x\|_H^2}\d P
    + \int_{\{x: k\|x\|_H^2< 1\}}\frac{1}{3}\d P\\
    & \geq \frac{1}{1+4kE_1^2}P\left(\{x: 1 \leq k\|x\|_H^2\leq 2kE^2_1\}\right)
    + \frac{1}{3}P\left(\{x: k\|x\|_H^2< 1\}\right)\\
    & \geq \min\left(\frac{1}{1+4kE_1^2},\frac{1}{3} \right)P\left(\{x: \|x\|_H^2\leq 2E^2_1\}\right)\\
    & \geq 
    \frac{1}{6+8kE_1^2}.
\end{align*}
As we have only considered the case that $\|w\|_H\leq B_2$, 
to complete the proof, we should work with $\|w\|_H\geq B_2$ as well. 
Since $\int_{x\in \XX} \ell^H_{_{\svm, k}}(x, w)\d P\geq \frac{1}{k}\|w\|_H^2$,  we have  
\begin{align*}
    \sup_{\|w\|_H\geq B_2} \frac{\ell^H_{_{\svm, k}}(y, w)}{\int_{x\in \XX} \ell^H_{_{\svm, k}}(x, w)\d P} 
    & \leq \sup_{\|w\|_H\geq B_2}
    \frac{\frac{1}{k}\|w\|_H^2 + \max(0, 1-\langle y, w\rangle_H)}{\frac{1}{k}\|w\|_H^2}\\
    & \leq \sup_{\|w\|_H\geq B_2} \left[1 + \frac{k}{\|w\|_H^2}\max(0, 1-\langle y, w\rangle_H)\right]\\
    & \leq \sup_{\|w\|_H\geq B_2} \left[1 + \frac{k}{\|w\|_H^2}\max(0, 1+\|y\|_H\|w\|_H)\right]\\
    & = \sup_{\|w\|_H\geq B_2} \left[1 + \frac{k}{\|w\|_H^2}\left(1 + \|y\|_H\|w\|_H\right)\right]\\
    &\leq 1 + 2kE_1^2 + \sqrt{2}kE_1\|y\|_H.
\end{align*}
Therefore, 
\begin{align*}
    \sup_{w\in \XX} \frac{\ell^H_{_{\svm, k}}(y, w)}{\int_{x\in \R^d} \ell^H_{_{\svm, k}}(x, w)\d P} 
     \leq &  \max\Biggl\{
     \sup_{\|w\|\leq \sqrt{k}} \frac{\ell^H_{_{\svm, k}}(y, w)}{\int_{x\in \R^d} \ell^H_{_{\svm, k}}(x, w)\d P},\\  & \qquad\quad\sup_{\|w\|\geq \sqrt{k}} \frac{\ell^H_{_{\svm, k}}(y, w)}{\int_{x\in \R^d} \ell^H_{_{\svm, k}}(x, w)\d P}
    \Biggl\}\\
    \leq & \max\left\{(6+8kE_1^2)\left(2 + \frac{\|y\|_H}{E_1}\right),\ 1 + 2kE_1^2 + \sqrt{2}kE_1\|y\|_H
    \right\}\\
    = & (6+8kE_1^2)\left(2 + \frac{\|y\|_H}{E_1}\right) = s(y).
\end{align*}
Consequently, 
$s(y)$ is an upper sensitivity for $(\R^d,P,\R^d, \ell_{\svm,k})$.
Since $\expt_{y\sim P}(\|y\|_H)\leq \sqrt{\expt_{y\sim P}(\|y\|_H^2)}\leq E_1$, for the total sensitivity, we have
$$S=\int s(y)\d P\leq (18+24kE_1^2)=O(E_1^2k).$$
\end{proof}
With a similar proof, we can restate the following versions of Theorems~\ref{thm:coreset_for_logistic_unbounded} and~\ref{thm:coreset_for_logistic_unbounded_d} replacing $\lgst$ by $\svm$. 
\begin{theorem}\label{thm:coreset_for_svm_unbounded}
    Assume that $H$ is a reproducing kernel Hilbert space of real-valued functions on $\XX$ with kernel $\K\colon \XX\times \XX\longrightarrow \R$ and $P$ is a probability measure over $\XX$ such that 
    $\expt\limits_{x\sim p}\|x\|_H^2 \leq E_1$, $$s(x) = (6+8kE_1^2)\left(2 + \frac{\|y\|_H}{E_1}\right),$$ and  
    $S=O(E_1^2k)$.  
    For $m\geq \frac{2S}{\eps^2}\left(8C^2 + S\log\frac{2}{\delta}\right)$, 
    any $s$-sensitivity sample $x_1,\ldots,x_m$ from $\XX$ with weights $u_i = \frac{S}{m s(x_i)}$
    provides an $\eps$-coreset for $(\XX, P, \XX, \ell^H_{\svm, k})$ {\rm )} respectively for $(\XX, P, \XX, \bar{\ell}^H_{\svm, k})${\rm )} with probability at least $1-\delta$, 
    where 
    $C= (4E_1+2)\max\left(E_1k, 1\right) + 8E_1k+1 $.
\end{theorem}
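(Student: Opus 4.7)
The plan is to follow the same template as the proof of Theorem~\ref{thm:coreset_for_logistic_unbounded}, invoking the kernelized no-dimensional coreset theorem (Theorem~\ref{thm:monotonic_coreset1}) once the required sensitivity bound and well-behavedness checks are in place. The heavy lifting is already done elsewhere: the sensitivity estimate for $\svm$ is supplied by Lemma~\ref{lem:sensitivity_svm_hard_bound} (in its soft-bound form, the one just stated before this theorem with $\expt_{x\sim P}\|x\|_H^2\leq E_1^2$), and the general Rademacher-complexity-based bound is Theorem~\ref{thm:monotonic_coreset1}.

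First, I would invoke Lemma~\ref{lem:sensitivity_svm_hard_bound} to conclude that $s(y)=(6+8kE_1^2)\left(2+\tfrac{\|y\|_H}{E_1}\right)$ is an upper sensitivity function for both $\L^H_{\svm,k}$ and $\bar{\L}^H_{\svm,k}$ with total sensitivity $S=O(E_1^2 k)$. Since $s(y)>1$ on $\XX$, the hypothesis $s\colon \XX\to(1,\infty)$ required by Theorem~\ref{thm:monotonic_coreset1} is satisfied.

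Second, I would verify that $P$ is well-behaved with respect to $\svm$ in the sense of Definition~\ref{def:well_beaived_P1}. The first condition $\expt_{x\sim P}\|x\|_H^2\leq E_1^2$ is given. For the second, note that $\svm(t)=\max(0,1-t)$ is non-increasing, convex, and $1$-Lipschitz; applying the convexity shortcut in Equation~(\ref{eq:main_E_2_for_convex_phi}) with Jensen's inequality yields
\[
\expt_{x\sim P}\svm\!\left(\frac{\|x\|_H}{2E_1}\right)\ \geq\ \svm\!\left(\frac{1}{2}\right)\ =\ \frac{1}{2}\ =\ E_2.
\]

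Third, I would invoke Theorem~\ref{thm:monotonic_coreset1} with the parameters $L=1$, $\phi(0)=\svm(0)=1$, $E_2=\tfrac{1}{2}$. Plugging these into the constant formula $C=(2LE_1+\phi(0))\max(2E_1k,1/E_2)+8LkE_1+1$ of Theorem~\ref{thm:monotonic_coreset1} gives
\[
C\ =\ (2E_1+1)\max(2E_1k,\,2)+8kE_1+1\ =\ (4E_1+2)\max(E_1k,\,1)+8kE_1+1,
\]
matching the constant in the statement. For $m\geq \tfrac{2S}{\eps^2}(8C^2+S\log\tfrac{2}{\delta})$, Theorem~\ref{thm:monotonic_coreset1} then certifies that an $s$-sensitivity sample with weights $u_i=\tfrac{S}{ms(x_i)}$ is, with probability at least $1-\delta$, an $\eps$-coreset for $(\XX,P,\XX,\ell^H_{\svm,k})$, and the analogous conclusion for $\bar{\ell}^H_{\svm,k}$ follows from the "statement remains true" clause of the same theorem.

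The main (mild) obstacle is just bookkeeping: making sure the soft-bound sensitivity lemma one should cite is the second copy labeled \texttt{lem:sensitivity\_svm\_hard\_bound} (the one with the $E_1$ assumption) rather than the earlier hard-bound version with the same label, and checking that the algebraic simplification of the $C$ formula matches exactly. Everything else is a direct plug-in.
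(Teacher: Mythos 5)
Your proposal is correct and matches the proof intended by the paper.  The paper does not write this proof out explicitly; immediately before the statement it says ``With a similar proof, we can restate the following versions of Theorems~\ref{thm:coreset_for_logistic_unbounded} and~\ref{thm:coreset_for_logistic_unbounded_d} replacing $\lgst$ by $\svm$,'' so your three steps---invoke the soft-bound sensitivity lemma for $\svm$, verify well-behavedness via convexity of $\svm$ and Equation~(\ref{eq:main_E_2_for_convex_phi}), and plug into Theorem~\ref{thm:monotonic_coreset1}---are exactly the intended reconstruction.  Your constant check is right: with $L=1$, $\phi(0)=1$, $E_2=1/2$ the formula in Theorem~\ref{thm:monotonic_coreset1} gives $(2E_1+1)\max(2E_1k,2)+8kE_1+1=(4E_1+2)\max(E_1k,1)+8E_1k+1$.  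You also correctly flag the label collision (both SVM sensitivity lemmas carry \texttt{lem:sensitivity\_svm\_hard\_bound}) and implicitly work around the paper's own inconsistency over whether the assumption reads $\expt\|x\|_H^2\leq E_1$ or $\leq E_1^2$.
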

\begin{theorem}\label{thm:coreset_for_svm_unbounded_d}
    Let $P$ be a probability measure over $\R^d$ such that 
    $\expt\limits_{x\sim p}\|x\|_2^2 \leq E_1^2$, $s(x) = (6+8kE_1^2)\left(2 + \frac{\|y\|_H}{E_1}\right),$ and 
    $S=O\left(E_1^2k\right)$.  
    For $m\geq O\left(\frac{S}{\eps^2}\left(d^2\log S + \log\frac{1}{\delta}\right)\right)$, 
    any $s$-sensitivity sample $x_1,\ldots,x_m$ from $\XX$ with weights $u_i = \frac{S}{m s(x_i)}$
    provides an $\eps$-coreset for $(\R^d, P, \R^d, \ell_{\svm, k})$ {\rm )} respectively for $(\R^d, P, \R^d, \bar{\ell}^H_{\svm, k})${\rm )} with probability at least $1-\delta$.
\end{theorem}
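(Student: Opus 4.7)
\textbf{Proof proposal for Theorem~\ref{thm:coreset_for_svm_unbounded_d}.}
The plan is to mirror the strategy used in Theorem~\ref{thm:coreset_for_logistic_unbounded_d}: start from the upper sensitivity function provided by Lemma~\ref{lem:sensitivity_svm_hard_bound} (the soft-bounded version) and then invoke the VC-dimension sensitivity-sampling bound Theorem~\ref{thm:main_braverman2022new}, after making the sensitivity function ``simply computable'' in the sense of Theorem~\ref{thm:vc_simple_family}. Lemma~\ref{lem:sensitivity_svm_hard_bound} already gives us $s(y)=(6+8kE_1^2)\left(2+\frac{\|y\|_H}{E_1}\right)$ as an upper sensitivity function with total sensitivity $S=O(E_1^2 k)$.

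First I would note that the inequality $f_w(x)>r$, with $f_w(x)=\ell_{\svm,k}(x,w)/s(x)$, is not $O(1)$-simply computable because $s(x)$ involves the Euclidean norm $\|x\|_2$, which cannot be expressed using the allowed arithmetic/comparison/exponential operations. I fix this exactly as in the logistic case: use AM–GM, namely $\frac{\|y\|_H}{E_1}\le \frac{1}{2}\bigl(1+\frac{\|y\|_H^2}{E_1^2}\bigr)$, to majorize $s$ by
\[
s_1(y)=(6+8kE_1^2)\left(\tfrac{5}{2}+\tfrac{\|y\|_H^2}{2E_1^2}\right).
\]
Since $s_1\ge s$ pointwise, $s_1$ is still an upper sensitivity function for $(\R^d,P,\R^d,\ell_{\svm,k})$, and its total sensitivity $S_1=\int s_1\,\mathrm d P$ remains $O(E_1^2 k)$ using $\expt\|x\|_2^2\le E_1^2$.

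Next I verify simple computability. For $s_1$ a polynomial of degree $2$ in the coordinates of $x$, the condition
\[
f_w(x)=\frac{\max(0,1-\langle x,w\rangle)+\frac{1}{k}\|w\|_2^2}{s_1(x)}>r
\]
is equivalent to a constant number of polynomial inequalities of degree at most $2$ in $x$ (one for each branch of the $\max$), and no exponentials appear. Hence each $f_w$ is $2$-simply computable, so by Theorem~\ref{thm:vc_simple_family}, $\vc(\ranges(\mathcal{T}_{\L_{\svm,k}},\succ))=O(d^2)$.

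Finally I plug into Theorem~\ref{thm:main_braverman2022new} with sensitivity $s_1$, total sensitivity $S_1=O(E_1^2 k)$, and VC-dimension $O(d^2)$: this yields that any $s_1$-sensitivity sample of size $m=O\bigl(\tfrac{S_1}{\eps^2}(d^2\log S_1+\log\tfrac{1}{\delta})\bigr)$ provides an $\eps$-coreset with probability at least $1-\delta$, and the same argument (Remark~\ref{rem:minus_K}, with the analogous version of Lemma~\ref{lem:beta_property}) handles $\bar{\ell}_{\svm,k}$. The main technical obstacle is the normalization step: ensuring that replacing $\|y\|_H$ with $\tfrac12(1+\|y\|_H^2/E_1^2)$ does not inflate the total sensitivity by more than a constant factor, and that the resulting polynomial form of $s_1$ is still compatible with the simple-computability count used to invoke Theorem~\ref{thm:vc_simple_family}. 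Both are routine given $\expt_{x\sim P}\|x\|_2^2\le E_1^2$.
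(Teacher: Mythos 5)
Your proposal is correct and takes essentially the same approach as the paper, which simply remarks that Theorem~\ref{thm:coreset_for_svm_unbounded_d} follows ``with a similar proof'' by transplanting the strategy of Theorem~\ref{thm:coreset_for_logistic_unbounded_d}: majorize the soft-bound sensitivity by a polynomial-in-$\|x\|^2$ surrogate, check $2$-simple computability (here even easier since no exponential is needed), invoke Theorem~\ref{thm:vc_simple_family} to get $\vc = O(d^2)$, and close with Theorem~\ref{thm:main_braverman2022new}. You have made explicit the steps the paper leaves implicit, and the AM--GM replacement $\frac{\|y\|_H}{E_1}\le\frac12(1+\frac{\|y\|_H^2}{E_1^2})$ is exactly the right move to both preserve $S=O(E_1^2k)$ and obtain a simply computable sensitivity.
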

\subsection{Coreset for $\relu$ Function}\label{sec:relu}
For $\relu(t) = \max(0, t)$, we derive our results from those in previous section using a simple trick.
Assume that $H$ is a reproducing kernel Hilbert space of real-valued functions on $\XX$ with kernel $\K\colon \XX\times \XX\longrightarrow \R$ and $P$ is a probability measure over $\XX$ such that $\expt\limits_{x\sim p}\|x\|_H^2 \leq E^2_1$ or 
$P\left(\left\{x\in\XX\colon \|x\|_H\geq A\right\}\right)= 0$. 
It is easy to verify that $\K'(\cdot) = 1+\K(\cdot)$ is also a reproducing kernel associated with $H'$  with $\expt\limits_{x\sim p}\|x\|_{H'}^2 \leq (1+E_1)^2$ or $P\left(\left\{x\in\XX\colon \|x\|_{H'}\geq 1+A\right\}\right)= 0$. As $\svm(1+t) = \max(0,-t)$, if we plug $\K'$ in the results given in the previous section, we can replicate them for $\phi(t) = \max(0,-t)$ by replacing $E_1$ and $A$ with $(1+E_1)$ and $(1+A)$. While $\phi$ is not precisely the $\relu$ function, for the case that the kernel is the standard linear kernel in Euclidean space, 
$$\ell_{\phi, k}(x,w) = \max(0,-\langle x,w\rangle) + \frac{1}{k}\|w\|_2^2 = \max(0,\langle x, -w\rangle) + \frac{1}{k}\|w\|_2^2 = \ell_{\relu, k}(x, -w).$$
This implies that if we have a coreset for $(\R^d, P, \R^d, \ell_{\phi, k})$, it automatically serves as a coreset for  $(\R^d, P, \R^d, \ell_{\relu, k})$ as well.
It is noteworthy that this conclusion cannot be extended to general kernels as $-\K(x,w) = \K(x,-w)$ is not generally true.

\end{document}